\documentclass[sigconf, screen]{acmart}
\AtBeginDocument{%
  }

\setcopyright{acmlicensed}
\copyrightyear{2018}
\acmYear{2018}
\acmDOI{XXXXXXX.XXXXXXX}
\acmConference[Conference acronym 'XX]{Make sure to enter the correct
  conference title from your rights confirmation email}{June 03--05,
  2018}{Woodstock, NY}
\acmISBN{978-1-4503-XXXX-X/2018/06}
\usepackage{enumitem}
\usepackage{subcaption}
\usepackage{wrapfig}
\usepackage{algorithm}
\usepackage{algorithmic}
\usepackage{multirow}
\begin{document}

%%
%% The "title" command has an optional parameter,
%% allowing the author to define a "short title" to be used in page headers.
\title{\method{}: Dual-Aligned Structural Basis Distillation for Graph Domain Adaptation}

%%
%% The "author" command and its associated commands are used to define
%% the authors and their affiliations.
%% Of note is the shared affiliation of the first two authors, and the
%% "authornote" and "authornotemark" commands
%% used to denote shared contribution to the research.
\author{Yingxu Wang}
\authornote{Both authors contributed equally to this research.}
\affiliation{%
  \institution{Mohamed bin Zayed University of Artificial Intelligence}
  \country{}
}
\email{yingxv.wang@gmail.com}

\author{Kunyu Zhang}
\authornotemark[1]
\affiliation{%
  \institution{Zhengzhou University}
  \country{}
}
\email{kunyu.zky@gmail.com}

\author{Jiaxin Huang}
\affiliation{%
  \institution{Mohamed bin Zayed University of Artificial Intelligence}
  \country{}
}
\email{Jiaxin.Huang@mbzuai.ac.ae}

\author{Mengzhu Wang}
\affiliation{%
  \institution{The Hebei University of Technology}
  \country{}
}
\email{dreamkily@gmail.com}

\author{Mingyan Xiao}
\affiliation{%
  \institution{California State Polytechnic University}
  \country{}
}
\email{mxiao@cpp.edu}

\author{Siyang Gao, Nan Yin}
\affiliation{%
  \institution{The City University of Hong Kong}
  \country{}
}
\email{siyangao@cityu.edu.hk,}
\email{yinnan8911@gmail.com}

%%
%% By default, the full list of authors will be used in the page
%% headers. Often, this list is too long, and will overlap
%% other information printed in the page headers. This command allows
%% the author to define a more concise list
%% of authors' names for this purpose.
\renewcommand{\shortauthors}{Wang et al.}

\begin{abstract}

Graph domain adaptation (GDA) aims to transfer knowledge from a labeled source graph to an unlabeled target graph under distribution shifts. However, existing methods are largely feature-centric and overlook structural discrepancies, which become particularly detrimental under significant topology shifts. Such discrepancies alter both geometric relationships and spectral properties, leading to unreliable transfer of graph neural networks (GNNs). To address this limitation, we propose \textbf{D}ual-Aligned \textbf{S}tructural \textbf{B}asis \textbf{D}istillation (\method{}) for GDA, a novel framework that explicitly models and adapts cross-domain structural variation. \method{} constructs a differentiable structural basis by synthesizing continuous probabilistic prototype graphs, enabling gradient-based optimization over graph topology. The basis is learned under source-domain supervision to preserve semantic discriminability, while being explicitly aligned to the target domain through a dual-alignment objective. Specifically, geometric consistency is enforced via permutation-invariant topological moment matching, and spectral consistency is achieved through Dirichlet energy calibration, jointly capturing structural characteristics across domains. Furthermore, we introduce a decoupled inference paradigm that mitigates source-specific structural bias by training a new GNN on the distilled structural basis. Extensive experiments on graph and image benchmarks demonstrate that \method{} consistently outperforms state-of-the-art methods.

\end{abstract}

%%
%% The code below is generated by the tool at http://dl.acm.org/ccs.cfm.
%% Please copy and paste the code instead of the example below.
%%
\begin{CCSXML}
<ccs2012>
 <concept>
  <concept_id>10010520.10010553.10010562</concept_id>
  <concept_desc> Mathematics of computing~ Graph algorithms</concept_desc>
  <concept_significance>500</concept_significance>
 </concept>
 % <concept>
 %  <concept_id>10010520.10010575.10010755</concept_id>
 %  <concept_desc>Computer systems organization~Redundancy</concept_desc>
 %  <concept_significance>300</concept_significance>
 % </concept>
 % <concept>
 %  <concept_id>10010520.10010553.10010554</concept_id>
 %  <concept_desc>Computer systems organization~Robotics</concept_desc>
 %  <concept_significance>100</concept_significance>
 % </concept>
 <concept>
  <concept_id>10003033.10003083.10003095</concept_id>
  <concept_desc> Computing methodologies ~Neural networks</concept_desc>
  <concept_significance>100</concept_significance>
 </concept>
</ccs2012>
\end{CCSXML}

\ccsdesc[500]{Mathematics of computing~Graph algorithms}
% \ccsdesc[300]{Computer systems organization~Redundancy}
% \ccsdesc{Computer systems organization~Robotics}
\ccsdesc[100]{Computing methodologies~Neural networks}

%%
%% Keywords. The author(s) should pick words that accurately describe
%% the work being presented. Separate the keywords with commas.
\keywords{}
%% A "teaser" image appears between the author and affiliation
%% information and the body of the document, and typically spans the
%% page.

\def\method{DSBD}

\maketitle

\section{Introduction}

Graph Domain Adaptation (GDA) has emerged as a critical paradigm for mitigating the generalization gap caused by distributional shifts in graph-structured data~\cite{wu2020unsupervised, wang2026sgac,fang2025homophily}. Its core objective is to bridge the divergence between a label-rich source domain and an unlabeled target domain~\cite{shang2024domain,liu2023structural,wang2023correntropy}. This capability makes GDA exceptionally valuable in data-scarce applications, such as biochemical network analysis and cross-platform social mining, where manual annotation for out-of-distribution data is prohibitively expensive~\cite{wang2025protomol,qiao2023semi,zhang2020multimodal}.

To achieve cross-domain transfer, mainstream GDA methods primarily adopt a feature-centric strategy, employing adversarial learning or statistical distribution matching to align feature representations within a shared latent space~\cite{yin2022deal, chen2025smoothness,liu2024pairwise}. Although effective at mitigating feature-level discrepancies, such approaches typically optimize latent representations without explicitly accounting for structural variations across domains~\cite{fang2025benefits, ding2018graph}. However, representations learned by Graph Neural Networks (GNNs) are inherently governed by topology-dependent message passing~\cite{ma2019gcan, dai2022graph}. When source and target graphs exhibit divergent connectivity patterns, the aggregation mechanisms learned on the source topology may become mismatched to the neighborhood structures of the target domain~\cite{wu2024graph, liu2024rethinking}. This structural mismatch manifests along two complementary aspects: geometric and spectral. Geometrically, altered neighborhood compositions distort local relational semantics, leading to misaligned structural representations~\cite{liu2024pairwise, you2023graph}. Spectrally, topological variations reshape the Laplacian spectrum, shifting the effective filtering behavior of GNNs and inducing discrepancies in spectral energy distributions across domains~\cite{pang2023sa, xiao2023spa}. Under substantial topology shift, feature alignment alone is therefore insufficient, highlighting the necessity of explicitly modeling and adapting structural discrepancies in GDA~\cite{cai2024graph, yang2025disentangled}.

While a few GDA approaches incorporate limited structural regularization, explicit and comprehensive topology adaptation across domains remains challenging. In particular, this difficulty stems from three fundamental bottlenecks. \textit{\textbf{(1) Lack of a Differentiable Structural Adaptation Substrate.}} In conventional GDA, node features can be projected into a shared continuous embedding space for cross-domain alignment~\cite{qiao2023semi, shi2023improving}. In contrast, graph topologies are discrete, non-isomorphic, and often vary in size without canonical node correspondence~\cite{liu2022graph, wu2020comprehensive,wang2025dusego}. Consequently, cross-domain structural variation cannot be directly parameterized or optimized within a unified continuous space~\cite{xia2021graph, khoshraftar2024survey}. \textit{\textbf{(2) Misalignment Between Geometric and Spectral Characteristics.}} Structural regularization is often formulated in geometric terms, yet graph geometry and spectral properties govern distinct aspects of GNN message passing~\cite{xiao2023spa, yang2025disentangled,wang2026usbd}. Consequently, geometric similarity does not imply spectral consistency. Even minor topological perturbations can substantially alter the Laplacian spectrum, leading to shifts in the effective filtering behavior of GNNs. As a result, two domains may appear geometrically aligned under global structural metrics while exhibiting markedly different spectral energy distributions~\cite{xiao2025spa++, you2023graph}. This discrepancy undermines the transferability of source-calibrated graph filters to the target domain. Therefore, structural objectives defined solely in geometric terms are insufficient for robust cross-domain adaptation. \textit{\textbf{(3) Entanglement of Source-Specific Structural Bias.}} In standard end-to-end GDA frameworks, GNN parameters are optimized jointly with source-domain semantic supervision~\cite{liu2023structural, wang2024degree,yin2022deal}. Consequently, the learned message-passing operators become coupled with source-specific connectivity patterns and structural priors~\cite{liu2022graph, wu2020comprehensive}. When transferred to a target domain with divergent topology, this residual structural bias can induce persistent aggregation mismatch even when latent representations appear statistically aligned~\cite{yin2025dream,yin2023coco,wang2026riemannian}. Collectively, these challenges make explicit cross-domain topology adaptation both demanding and underexplored in GDA.

\begin{figure}[t]
    \centering
    \includegraphics[width=1.0\linewidth]{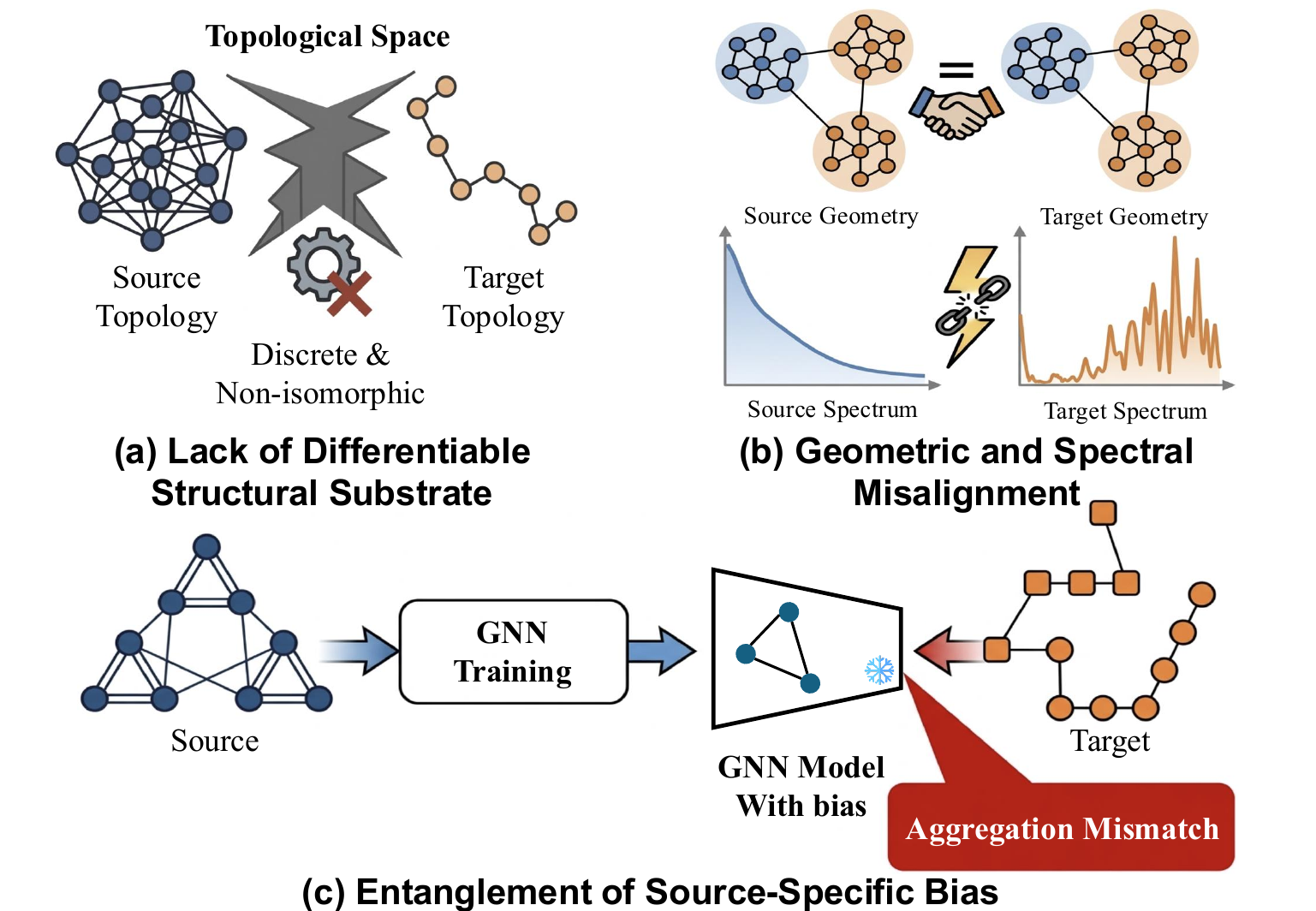}
    \caption{The key challenges in GDA: (a) The lack of a differentiable structural substrate prevents explicit optimization of cross-domain topology. (b)
Geometric alignment does not guarantee spectral consistency, leading to mismatched GNN filtering across domains. (c)
Source-specific structural bias entangles message passing, resulting in aggregation mismatch under topology shift.}
\vspace{-0.5cm}
\label{fig:challenge}
\end{figure}

In this paper, we propose a paradigm shift from feature-centric alignment to explicit structural adaptation via a differentiable structural substrate. To this end, we introduce \textbf{D}ual-Aligned \textbf{S}tructural \textbf{B}asis \textbf{D}istillation (\method{}) for graph domain adaptation. \method{} learns a compact set of synthetic prototype graphs that form a continuous and optimizable structural basis. This basis is optimized under source-domain supervision to preserve class-discriminative structures, serving as a semantic anchor for knowledge transfer, while being explicitly aligned to the target domain to capture its intrinsic structural characteristics. Concretely, each synthetic adjacency matrix is parameterized by continuous probabilistic variables, enabling gradient-based optimization over graph topology. A dual-alignment objective is then imposed with respect to the target domain: geometrically, permutation-invariant topological moments are matched to capture relational statistics; spectrally, Dirichlet energy calibration aligns Laplacian-induced filtering behaviors across domains. Finally, to eliminate source-specific structural bias, we adopt a decoupled inference paradigm that trains a fresh GNN solely on the distilled basis, isolating the message-passing operator from source-dependent connectivity priors before deployment on the target domain. Extensive experiments conducted on both graph and image benchmarks demonstrate consistent improvements over state-of-the-art baseline methods.

Our contributions can be summarized as follows:

\begin{itemize}
[itemsep=2pt,topsep=0pt,parsep=0pt,leftmargin=*]

\item We identify three key bottlenecks in structural adaptation for GDA: the lack of a differentiable structural substrate, the misalignment between geometric and spectral characteristics, and the entanglement of source-specific structural bias.

\item We propose \method{}, a novel framework for explicit structural alignment via basis distillation. By learning a differentiable structural basis of continuous probabilistic prototype graphs and enforcing dual alignment in both geometric and spectral domains, \method{} enables effective cross-domain topology adaptation.

\item Extensive experiments conducted on both graph and image benchmarks demonstrate that \method{} consistently achieves state-of-the-art performance.

\end{itemize}
\section{Related Work}

\begin{figure*}[h]
\centering
\includegraphics[scale=0.23]{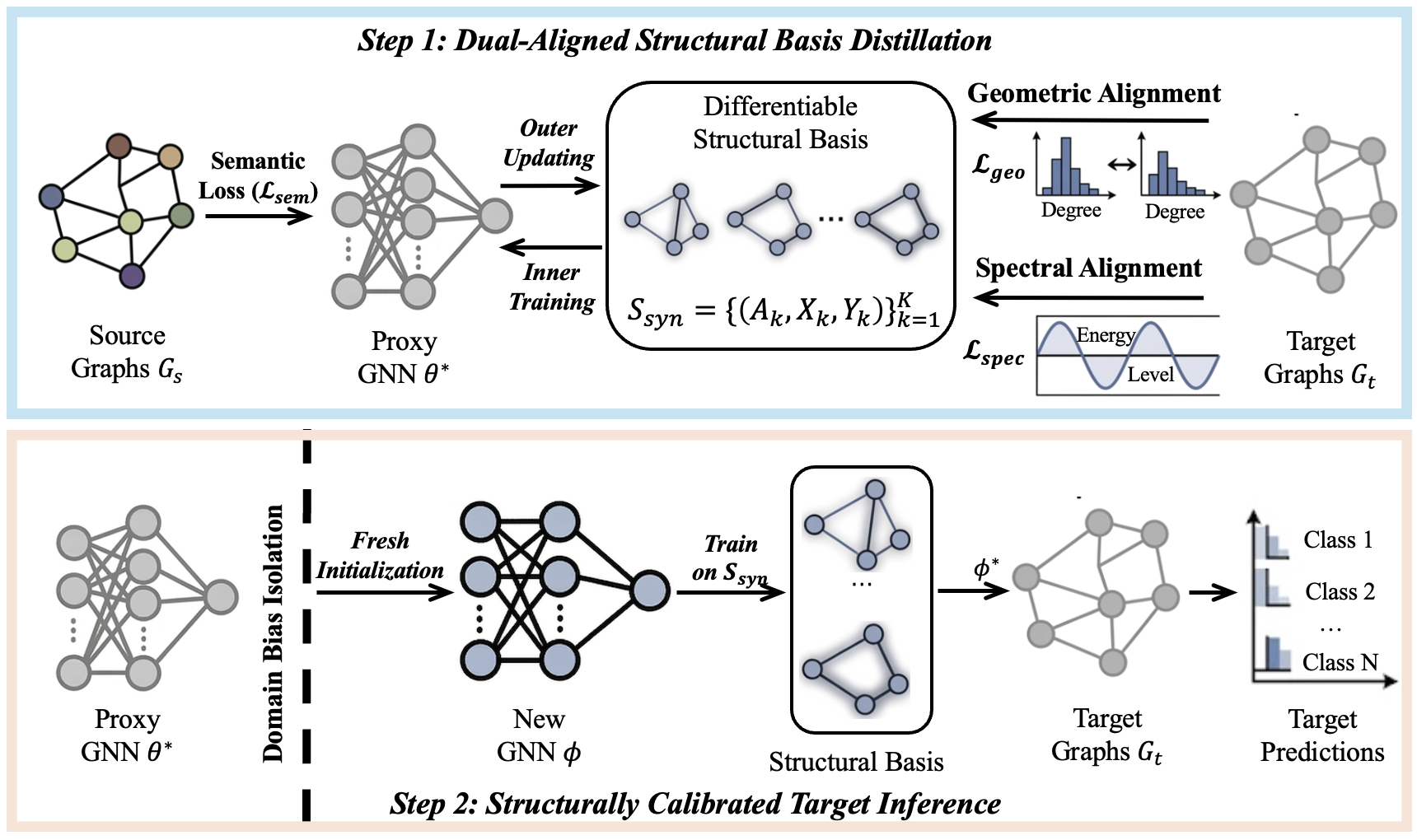}
% \vspace{-0.5cm}
\caption{Overview of the proposed \method{}, which consists of two key steps:
(1) Dual-Aligned Structural Basis Distillation, which constructs a differentiable structural substrate by distilling source knowledge into compact synthetic graphs and enforcing joint geometric and spectral alignment with the target domain;
(2) Structurally Calibrated Target Inference, which eliminates source structural bias via decoupled retraining on the distilled basis, ensuring topology-aware transfer under domain shift.}
\label{fig:framework}
\vspace{-0.1cm}
\end{figure*}

\textbf{Graph Domain Adaptation (GDA).} Early GDA approaches adapt classical domain adaptation techniques, such as adversarial learning and statistical distribution matching, to align node- or graph-level representations within a shared latent space~\cite{liu2024revisiting, yin2023coco,wang2026nested}. More recent methods incorporate graph-specific inductive biases, e.g., topology-aware contrastive objectives and refined neighborhood aggregation, to better capture relational dependencies during transfer~\cite{chen2025smoothness, chen2026learning}. Despite these advances, structural adaptation is rarely formulated as an explicit optimization objective~\cite{yin2025coupling, wang2024degree}. Instead, topology discrepancies are typically handled implicitly through representation learning, without a shared differentiable structural substrate for direct optimization. Under substantial topology shift, this implicit treatment becomes insufficient: differences in connectivity patterns can distort neighborhood aggregation and induce mismatches in Laplacian-based filtering behavior~\cite{liu2023structural, liu2024rethinking}. Moreover, geometric similarity does not guarantee spectral consistency, and the resulting mismatch can undermine the transferability of source-calibrated graph filters~\cite{wu2020unsupervised, yang2025disentangled}. Existing end-to-end frameworks further entangle propagation operators with source-specific structural priors, limiting generalization. In contrast, we propose \method{}, which explicitly distills a differentiable structural basis with geometric and spectral alignment, while decoupling source-specific bias for cross-domain generalization.

\noindent \textbf{Graph Data Distillation.} Early graph data distillation approaches focus on matching learning signals between synthetic and real data, e.g., via gradient matching or feature-statistics alignment, to condense large-scale graph datasets into representative subsets~\cite{hong2024label, lu2024adagmlp, jin2022condensing}. More recent methods extend this paradigm by matching training trajectories, capturing longer-horizon optimization dynamics beyond single-step supervision~\cite{lei2023comprehensive, huo2023t2, zhang2021graph}.  Despite their effectiveness in improving efficiency, existing graph distillation frameworks are primarily designed for in-domain settings~\cite{lai2025simple, yao2024mugsi}, where synthetic and real graphs follow the same distribution. As a result, they emphasize preserving intra-domain predictive fidelity rather than enhancing cross-domain robustness~\cite{wu2024teacher, wang2024self,tian2025knowledge}. Consequently, the potential of graph distillation for domain adaptation remains largely unexplored, as current formulations do not explicitly model structural discrepancies between source and target domains~\cite{liu2023graph, gao2025graph,joshi2022representation}. In contrast, we reinterpret distillation as a mechanism for structural alignment in GDA. By learning a differentiable structural basis, \method{} preserves source-domain semantic discriminability while adapting the synthesized prototypes to target-domain structural characteristics.
\section{Methodology}

\subsection{Problem Formulation}

We study Graph Domain Adaptation (GDA) for graph classification. 
Let $\mathcal{D}_S = \{(\mathcal{G}_i^s, y_i^s)\}_{i=1}^{N_S}$ denote a labeled source domain and $\mathcal{D}_T = \{\mathcal{G}_j^t\}_{j=1}^{N_T}$ an unlabeled target domain. 
Each graph instance is defined as $\mathcal{G} = (\mathcal{V}, \mathcal{E}, \mathbf{X})$, where $\mathcal{V}$, $\mathcal{E}$, and $\mathbf{X}$ denote the node set, edge set, and node feature matrix, respectively, associated with a graph-level label $y \in \mathcal{Y}$.  We assume that the source and target domains share the same label space $\mathcal{Y}$ but follow different underlying graph distributions, i.e., $P_S(\mathcal{G}) \neq P_T(\mathcal{G})$, where the discrepancy may stem from variations in connectivity patterns, structural statistics, or node attributes. Given labeled source data $\mathcal{D}_S$ and unlabeled target data $\mathcal{D}_T$, the goal is to learn a Graph Neural Network (GNN) $f_\theta: \mathcal{G} \rightarrow \mathcal{Y}$ that minimizes the expected target risk
\[
\mathcal{R}_T(f_\theta) = 
\mathbb{E}_{(\mathcal{G}, y) \sim P_T} 
\big[ \ell(f_\theta(\mathcal{G}), y) \big],
\]
by leveraging information from both domains.

\subsection{Overview of \method{}}

We present \method{}, a framework for explicit topology adaptation in GDA, as illustrated in Figure~\ref{fig:framework}. The framework consists of two key steps. \textbf{(1) Dual-Aligned Structural Basis Distillation.} To address the lack of a differentiable structural adaptation substrate, this component adopts a bi-level optimization scheme to learn a continuous probabilistic structural basis. Specifically, source-domain supervision is distilled into a set of synthetic prototype graphs with differentiable adjacency matrices. To align the learned basis with the target domain, we impose dual-alignment objectives on the prototypes, jointly enforcing permutation-invariant topological moment matching for geometric consistency and Dirichlet energy calibration for spectral alignment. \textbf{(2) Structurally Calibrated Target Inference.} To eliminate source-specific structural bias, we adopt a decoupled inference paradigm in which a new GNN is trained solely on the distilled structural basis and then applied to the target domain. This design isolates the learned message-passing operator from source-dependent connectivity patterns, improving transferability under topology shift.

\subsection{Dual-Aligned Structural Basis Distillation}

A central challenge in cross-domain structural adaptation is the absence of a shared and optimizable structural substrate across domains~\cite{chen2026adaptive, zhang2020deep}. Unlike node features, which can be aligned in a continuous latent space, graph topologies are discrete, unpaired, and non-isomorphic, preventing direct parameterization or optimization of structural variation across domains~\cite{cai2024graph, liu2022graph}. Moreover, since graph topology governs the Laplacian spectrum, feature alignment alone cannot regulate the filtering behavior of GNNs, leading to mismatches in both geometric relationships and spectral responses. To address these limitations, we propose \emph{Dual-Aligned Structural Basis Distillation (\method{})}, a bi-level learning framework that constructs a differentiable structural basis to serve as a shared substrate for cross-domain topology adaptation. Specifically, \method{} parameterizes synthetic adjacency matrices as continuous probabilistic variables and optimizes them through coordinated geometric and spectral alignment, realized via permutation-invariant topological moment matching and Dirichlet energy calibration.

\subsubsection{Dual-Aligned Structural Distillation Formulation}

We aim to synthesize a compact structural basis 
$\mathcal{S}_{\mathrm{syn}} = \{(A_k, X_k, Y_k)\}_{k=1}^K$ that preserves source-discriminative semantics while aligning with the target domain’s geometric and spectral characteristics. To enable gradient-based optimization over graph topology, each adjacency matrix $A_k$ is parameterized by continuous probabilistic variables, yielding a differentiable relaxation of the structural space for cross-domain topology alignment.

To jointly ensure semantic fidelity and structural alignment, we formulate structural distillation as a bi-level optimization problem. Let $\theta$ denote the parameters of a proxy GNN classifier. The objective is defined as:
\begin{subequations}\label{eq:bilevel}
\begin{align}
\min_{\mathcal{S}_{\mathrm{syn}}}\;\mathcal{L}_{\mathrm{total}}(\mathcal{S}_{\mathrm{syn}})
&= \mathcal{L}_{\mathrm{sem}}\!\left(\theta^{*}(\mathcal{S}_{\mathrm{syn}});\mathcal{D}_S\right) \nonumber\\
& + \lambda_1 \mathcal{L}_{\text{geo}}\!\left(\mathcal{S}_{\mathrm{syn}}, \mathcal{D}_T\right) + 
\lambda_2 \mathcal{L}_{\text{spec}}\!\left(\mathcal{S}_{\mathrm{syn}}, \mathcal{D}_T\right), \label{eq:outer_meta}\\
\text{s.t.}\quad
\theta^*(\mathcal{S}_{\mathrm{syn}})
 = &\arg\min_{\theta}\;
\frac{1}{K}\sum_{k=1}^{K}
\mathcal{L}_{\mathrm{CE}}\!\left(\mathrm{GNN}(X_k,A_k;\theta),\,Y_k\right). \label{eq:inner_proxy}
\end{align}
\end{subequations}
The inner problem in Eq.~(\ref{eq:inner_proxy}) trains the proxy model on the synthesized basis, inducing an implicit dependency of $\theta^{*}$ on the structural variables. The outer objective in Eq.~(\ref{eq:outer_meta}) updates the structural basis by jointly (i) preserving source-domain semantic discriminability via $\mathcal{L}_{\mathrm{sem}}$, and (ii) enforcing geometric and spectral alignment with the target domain through $\mathcal{L}_{\text{geo}}$ and $\mathcal{L}_{\text{spec}}$. Here, $\mathcal{L}_{\text{CE}}$ denotes the standard cross-entropy loss.  

\paragraph{\textbf{Source Semantic Consistency.}} 
The term $\mathcal{L}_{\mathrm{sem}}$ serves as a semantic anchor that preserves class discriminability during structural distillation. Since labeled supervision is available only in the source domain, we treat the source dataset $\mathcal{D}_S$ as the semantic reference. Specifically, the semantic loss evaluates the proxy model $\theta^{*}$, trained on the synthesized basis $\mathcal{S}_{\mathrm{syn}}$, over samples drawn from the source distribution:
\begin{equation}
\mathcal{L}_{\mathrm{sem}}\!\left(\theta^*(\mathcal{S}_{\mathrm{syn}});\mathcal{D}_S\right)
=
\mathbb{E}_{(X,A,Y)\sim \mathcal{D}_S}
\Big[
\mathcal{L}_{\mathrm{CE}}\big(\mathrm{GNN}(X,A;\theta^*(\mathcal{S}_{\mathrm{syn}})), Y\big)
\Big].\nonumber
\end{equation}

Within the bi-level formulation, the probabilistic adjacency matrices in $\mathcal{S}_{\mathrm{syn}}$ act as outer-level variables. By differentiating through the inner optimization in Eq.~(\ref{eq:inner_proxy}), gradients from $\mathcal{L}_{\mathrm{sem}}$ encourage the synthesized graphs to encode the source domain’s discriminative structure. Consequently, the distilled prototypes preserve semantic fidelity, avoiding degenerate solutions that are structurally aligned but lack label discriminability.

\paragraph{\textbf{Geometry-Preserving Topology Alignment.}}

To mitigate cross-domain structural discrepancies, we align the synthetic basis with the target domain’s geometric characteristics. A key challenge is that synthetic prototypes and target graphs may differ in size and lack canonical node correspondence, rendering node-wise alignment ill-posed. Moreover, correspondence-based optimal transport methods (e.g., Gromov–Wasserstein) are computationally expensive. To address these issues, we adopt a scalable and fully differentiable alternative based on permutation-invariant structural moment matching. Specifically, we extract differentiable statistics that capture complementary aspects of connectivity patterns and local clustering structures. For a continuous adjacency matrix $A\in\mathbb{R}^{N\times N}$, we define a moment set $\mathcal{M}=\{\phi_{\text{deg\_mean}},\phi_{\text{deg\_std}},\phi_{\text{den}},\phi_{\text{tri}}\}$. The mean degree $\phi_{\text{deg\_mean}}$ is defined as

\begin{equation}
\phi_{\text{deg\_mean}}(A)=\frac{1}{N}\sum_{i=1}^{N}\sum_{j=1}^{N}A_{ij},\nonumber
\end{equation}
and the degree standard deviation is
\begin{equation}
\phi_{\text{deg\_std}}(A)=\sqrt{\frac{1}{N}\sum_{i=1}^{N}\left(\sum_{j=1}^{N}A_{ij}-\phi_{\text{deg\_mean}}(A)\right)^{2}+\epsilon}.\nonumber
\end{equation}
The graph density is defined as
\begin{equation}
\phi_{\text{den}}(A)=\frac{\sum_{i=1}^{N}\sum_{j=1}^{N}A_{ij}}{N(N-1)+\epsilon},\nonumber
\end{equation}
and we use a normalized trace-based proxy for triangle intensity:
\begin{equation}
\phi_{\text{tri}}(A)=\frac{\operatorname{Tr}(A^{3})}{6N+\epsilon},\nonumber
\end{equation}
where $\epsilon$ is a small constant for numerical stability. The trace term $\operatorname{Tr}(A^{3})$ provides a differentiable approximation of triangle-related structure, allowing gradients to propagate through motif-level connectivity. Then, we minimize the empirical discrepancy between target graphs and synthetic prototypes in this moment space:
\begin{equation} \mathcal{L}_{\text{geo}}\!\left(\mathcal{S}_{\mathrm{syn}},\mathcal{D}_T\right) = \frac{1}{|\mathcal{D}_T|} \sum_{A_T\in\mathcal{D}_T} \left[ \frac{1}{K}\sum_{k=1}^{K}\sum_{m\in\mathcal{M}} \gamma_m \left\|\phi_m(A_k)-\phi_m(A_T)\right\|_2^{2} \right], \nonumber
\end{equation}
where $\gamma_m$ rescales different moments to account for magnitude differences. By matching these permutation-invariant structural statistics, the probabilistic adjacency variables are encouraged to evolve connectivity patterns that are geometrically consistent with the target domain.

\paragraph{\textbf{Spectral Energy Alignment.}} 

Geometric alignment alone does not guarantee spectral consistency. Message-passing GNNs can be viewed as Laplacian smoothing operators, whose behavior is governed by topology-induced spectral characteristics. To address spectral mismatch, we introduce a Dirichlet-energy calibration objective that regularizes feature smoothness with respect to the graph Laplacian. For a graph $G=(A,X)$, we define the Dirichlet energy functional
\[
\Omega(G)=\operatorname{Tr}(X^{\top}\hat{L}X),
\]
where $\hat{L}$ denotes the normalized Laplacian. This quantity measures the variation of node features over graph structure and serves as a smoothness-sensitive spectral statistic. We then align the synthetic basis with the target domain by minimizing the discrepancy between their empirical mean Dirichlet energies:
\begin{equation}
\mathcal{L}_{\text{spec}}\!\left(\mathcal{S}_\mathrm{syn}, \mathcal{D}_T\right) 
= 
\left\|
\frac{1}{K}\sum_{k=1}^{K} \Omega(G_k)
-
\frac{1}{|\mathcal{D}_T|}\sum_{G_T \in \mathcal{D}_T} \Omega(G_T)
\right\|_2^2.\nonumber
\end{equation}

By matching this spectral statistic, the distilled prototypes are encouraged to reproduce the target domain’s feature smoothness behavior under graph topology.

\subsubsection{Theoretical Analysis of Target Generalization}

The proposed bi-level optimization explicitly aligns the distilled structural basis with the target domain. This naturally raises a fundamental question: \textit{Is matching a finite set of macroscopic structural statistics sufficient to control adaptation to the target distribution?} To address this, we analyze the target generalization error through a structural representation space induced by geometric moments and spectral energies~\cite{muller1997integral, ben2010theory}. Specifically, we embed discrete graphs into a continuous feature space defined by these statistics and derive an upper bound on the target risk.

\begin{theorem}[Generalization Bound via Dual-Aligned Structural Basis]
\label{thm:generalization_bound}
Let $f$ denote the graph encoder and $h$ the classifier. Let 
$\mathcal{R}_{\mathcal{D}_T}(h \circ f)$ be the expected risk on the target domain $\mathcal{D}_T$, and 
$\hat{\mathcal{R}}_{\mathcal{S}_{\mathrm{syn}}}(h \circ f)$ the empirical risk evaluated on the synthesized structural basis $\mathcal{S}_{\mathrm{syn}}$. 
Assume that the loss function $\ell$ is $L_{\ell}$-Lipschitz continuous and that the encoder $f$ lies in a bounded structural RKHS. 
Then, with probability at least $1-\delta$, the target risk is bounded as:
\begin{align}
\mathcal{R}_{\mathcal{D}_T}(h \circ f) 
\leq\;& \hat{\mathcal{R}}_{\mathcal{S}_{\mathrm{syn}}}(h \circ f) \nonumber \\
& + C_{\mathrm{geo}} 
\left\| 
\mathbb{E}_{G \sim \mathcal{D}_T}[\mathcal{M}(G)] 
- 
\mathbb{E}_{G_k \sim \mathcal{S}_{\mathrm{syn}}}[\mathcal{M}(G_k)] 
\right\|_2 \nonumber \\
& + C_{\mathrm{spec}} 
\Big| 
\mathbb{E}_{G \sim \mathcal{D}_T}[\Omega(G)] 
- 
\mathbb{E}_{G_k \sim \mathcal{S}_{\mathrm{syn}}}[\Omega(G_k)] 
\Big| 
+ \lambda(\delta),\nonumber
\end{align}
where $\mathcal{M}$ and $\Omega$ denote the geometric moments and Dirichlet energy, respectively. $C_{\mathrm{geo}}, C_{\mathrm{spec}} > 0$ are capacity-dependent constants, and $\lambda(\delta)$ absorbs the optimal joint risk and the finite-sample generalization complexity.
\end{theorem}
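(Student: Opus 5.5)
The bound has the standard domain-adaptation shape (Ben-David et al.), with the discrepancy measured in a structural feature space. The plan is to split the target risk into three parts: the empirical synthetic risk, a distribution-discrepancy term, and a residual. The residual will absorb the ideal joint risk and the finite-sample deviation into $\lambda(\delta)$.

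\textbf{Step 1: choose the representation.} Let $\Psi(G) = (\mathcal{M}(G), \Omega(G)) \in \mathbb{R}^{|\mathcal{M}|+1}$ be the structural embedding, and let $\mathcal{D}_{\mathrm{syn}}$ be the distribution over prototypes that underlies $\mathcal{S}_{\mathrm{syn}}$. Following the phrase ``bounded structural RKHS'', I read the hypothesis as saying that the composite loss $g(G) = \ell(h\circ f(G), y)$ factors through $\Psi$ via a function in an RKHS $\mathcal{H}$ on $\mathbb{R}^{|\mathcal{M}|+1}$. I take $\mathcal{H}$ to have a linear kernel on the two blocks, with block weights $\gamma_{\mathrm{geo}}, \gamma_{\mathrm{spec}}$, and I take $\|g\|_{\mathcal{H}} \le L_\ell B$, using that $\ell$ is $L_\ell$-Lipschitz and $f$ has norm at most $B$.

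\textbf{Step 2: the discrepancy term.} The standard triangle argument with an ideal joint hypothesis $h^*$ gives
\[
\mathcal{R}_{\mathcal{D}_T} \le \mathcal{R}_{\mathcal{D}_{\mathrm{syn}}} + \sup_{g \in \mathcal{H},\, \|g\| \le L_\ell B} \big| \mathbb{E}_T g - \mathbb{E}_{\mathrm{syn}} g \big| + \lambda^*.
\]
The supremum is the integral probability metric of M\"uller. For an RKHS ball it equals $L_\ell B$ times the MMD, i.e.\ $L_\ell B\,\|\mu_T - \mu_{\mathrm{syn}}\|_{\mathcal{H}}$. With the linear block kernel, the mean embedding is just $\mathbb{E}[\Psi(G)]$. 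The triangle inequality then splits the norm into the geometric block (an $\ell_2$ norm of mean moments) and the spectral block (an absolute difference of mean energies). This gives $C_{\mathrm{geo}} = L_\ell B\sqrt{\gamma_{\mathrm{geo}}}$ and $C_{\mathrm{spec}} = L_\ell B\sqrt{\gamma_{\mathrm{spec}}}$.

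\textbf{Step 3: sampling.} Replace $\mathcal{R}_{\mathcal{D}_{\mathrm{syn}}}$ by its empirical version on the $K$ prototypes. A Rademacher-complexity bound for the bounded RKHS ball with McDiarmid's inequality gives a deviation of $O(L_\ell B R/\sqrt{K} + \sqrt{\log(1/\delta)/K})$, where $R$ bounds $\|\Psi\|$. Bounding $\Psi$ requires bounded features and bounded graph sizes; I take these as implicit assumptions. This deviation, plus $\lambda^*$, is put into $\lambda(\delta)$. The theorem states population means over $\mathcal{D}_T$. If the empirical target means used in the losses are wanted instead, a further concentration term is also folded into $\lambda(\delta)$.

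\textbf{Main obstacle.} The delicate step is justifying that the loss depends on $G$ only through $\Psi(G)$, linearly enough for first-moment matching to control it. In general a GNN's loss is not a function of these five statistics, so the RKHS assumption has to be made to carry that content explicitly. My first attempt would be to state the assumption as $g = w^\top \Psi$ plus a residual. The residual's discrepancy would then be bounded by a constant and absorbed into $\lambda(\delta)$, in the same way that $\lambda^*$ absorbs non-transferable error. Richer kernels would only give higher-moment discrepancies, so I would keep the linear kernel so that the stated form, with first moments only, holds.
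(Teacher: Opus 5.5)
Your proposal follows essentially the same route as the paper's proof. It uses an IPM form of the Ben-David bound with the ideal joint risk, and a linear representation of the loss over $\psi(G)=(\mathcal{M}(G),\Omega(G))$, so that Cauchy--Schwarz (equivalently, the MMD of a linear kernel) reduces the discrepancy to the gap in mean structural statistics; the triangle inequality then separates the geometric and spectral blocks, and a McDiarmid-type concentration term over the $K$ prototypes is folded, together with the joint risk, into $\lambda(\delta)$. The differences are minor and in your favor: you state explicitly that the loss factoring linearly through $\psi$ is an assumption (the paper presents it as a consequence of the Riesz representation theorem), and your Rademacher term is the uniform bound that is actually needed, since $h\circ f$ is trained on the same prototypes.
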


Theorem~\ref{thm:generalization_bound} decomposes the target risk into two components: (i) the empirical risk on the synthesized structural basis, and (ii) the structural distribution discrepancy between the synthesized basis and the target domain. Notably, this discrepancy is explicitly characterized by mismatches in geometric moments and spectral energies. Since the proposed dual-alignment objectives directly minimize these quantities, the proposed optimization reduces the dominant terms in the bound, thereby improving target-domain generalization.

\subsection{Structurally Calibrated Target Inference}

With the dual-aligned basis $\mathcal{S}_{\mathrm{syn}}$ distilled, the remaining objective is to transfer this structural knowledge to the unlabeled target domain. Conventional GDA approaches predominantly rely on shared latent-space feature alignment~\cite{yin2022deal,yin2023coco}. However, the GNN parameters learned under such schemes remain implicitly calibrated to the source domain~\cite{shou2025graph,ngo2025higda}. When deployed on structurally divergent target graphs, this implicit calibration leads to topology-induced aggregation mismatches, as message-passing operators are biased toward source-specific connectivity patterns and spectral characteristics.  To overcome this limitation, we adopt a decoupled inference paradigm. Instead of directly transferring an implicitly aligned model, we train a new GNN exclusively on  $\mathcal{S}_{\mathrm{syn}}$, thereby isolating structural bias and ensuring compatibility with the target domain.

\subsubsection{Structural Bias Isolation via Re-initialization.}

To ensure strict structural isolation, we discard both the original source dataset $\mathcal{D}_S$ and the proxy model $\theta^{*}$ employed during distillation. Although $\theta^{*}$ converges on the synthesized basis, it co-evolves with $\mathcal{S}_{\mathrm{syn}}$ throughout the bi-level optimization process. As a result, its parameters inevitably encode the source-biased optimization trajectory, inducing structural co-adaptation between the model and the distilled basis. To fully eliminate these residual trajectory-dependent biases, we instantiate a freshly initialized GNN $\phi$ with random parameters and train it exclusively on  $\mathcal{S}_{\mathrm{syn}}$:
\begin{equation}
\phi^{*}
=
\arg\min_{\phi}
\frac{1}{K}
\sum_{k=1}^{K}
\mathcal{L}_{\mathrm{CE}}
\big(
\mathrm{GNN}(X_k, A_k; \phi),\,
Y_k
\big),
\label{eq:inference_train}
\end{equation}
where $(A_k, X_k, Y_k) \in \mathcal{S}_{\mathrm{syn}}$. Because $\phi$ is optimized over a fixed and target-aligned structural space instead of a dynamically coupled joint optimization landscape, it learns a purely calibrated filtering operator that is free from residual source-specific priors.

\subsubsection{Theoretical Analysis of Structural Bias Isolation}

The structurally calibrated inference stage explicitly discards the proxy model in favor of a freshly initialized GNN. This design raises a fundamental theoretical question: \emph{If the proxy model has already achieved optimal performance on the synthesized basis, why is it necessary to discard it and retrain a fresh model for target generalization?} To address this question, we analyze the generalization gap from a statistical learning perspective by characterizing both hypothesis complexity and optimization-induced co-adaptation. 

\begin{theorem}[Generalization Benefit of Structural Bias Isolation]
\label{thm:isolation_bound}
Let $\mathcal{F}$ be the hypothesis class of the GNN. Let $\theta^*$ be the proxy model derived from the joint bi-level optimization trajectory, and let $\phi^*$ be the fresh model trained exclusively on the fixed, converged basis $\mathcal{S}_{\mathrm{syn}}^*$. 
Due to optimization co-adaptation, the effective search space of the proxy model is bounded by the joint empirical Rademacher complexity $\mathfrak{R}(\mathcal{F} \times \mathcal{S}_{\mathrm{syn}})$. In contrast, the fresh model is bounded by the conditional complexity $\mathfrak{R}(\mathcal{F} \mid \mathcal{S}_{\mathrm{syn}}^*)$. 
With probability at least $1-\delta$, the generalization bounds satisfy a strict dominance relation:
\begin{equation}
    \text{Bound}(\phi^*) \leq \text{Bound}(\theta^*) - \Delta_{\mathrm{traj}},\nonumber
\end{equation}
where $\Delta_{\mathrm{traj}} \propto \big( \mathfrak{R}(\mathcal{F} \times \mathcal{S}_{\mathrm{syn}}) - \mathfrak{R}(\mathcal{F} \mid \mathcal{S}_{\mathrm{syn}}^*) \big) > 0$ represents the strictly positive excess risk penalty induced by trajectory memorization and structural co-adaptation.
\end{theorem}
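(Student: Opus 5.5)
The plan is to compare two uniform-convergence bounds that differ only in their complexity term. The first step is to write, for a loss bounded in $[0,1]$ and $L_\ell$-Lipschitz, the standard Rademacher bound for any hypothesis $g$ in a class $\mathcal{H}$:
\[
\mathcal{R}(g)\le \hat{\mathcal{R}}(g)+2L_\ell\,\mathfrak{R}(\mathcal{H})+3\sqrt{\log(2/\delta)/(2K)} .
\]
I would apply it twice, each time with the same target-side terms borrowed from Theorem~\ref{thm:generalization_bound} (the moment and Dirichlet-energy discrepancies together with $\lambda(\delta)$).

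For the fresh model $\phi^*$, the basis $\mathcal{S}_{\mathrm{syn}}^*$ is frozen before $\phi$ is initialized. Conditioning on $\mathcal{S}_{\mathrm{syn}}^*$ therefore lets us take $\mathcal{H}=\mathcal{F}$ evaluated on a fixed sample, which gives the complexity $\mathfrak{R}(\mathcal{F}\mid\mathcal{S}_{\mathrm{syn}}^*)$. The proxy $\theta^*$ is different: it is selected jointly with the basis along the bi-level trajectory, so it is a data-dependent choice of the pair $(\theta,\mathcal{S})$. A bound that must hold uniformly over that selection has to take a supremum over the product class $\mathcal{F}\times\mathcal{S}_{\mathrm{syn}}$, i.e.\ over all basis configurations reachable by the outer loop. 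This is the precise sense of the ``effective search space'' in the statement, and it yields $\mathfrak{R}(\mathcal{F}\times\mathcal{S}_{\mathrm{syn}})$.

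The next step is to compare the empirical risks. Both models minimize the same inner cross-entropy on the same $\mathcal{S}_{\mathrm{syn}}^*$, with $\theta^*$ evaluated at convergence. I would therefore either assume or argue $\hat{\mathcal{R}}_{\mathcal{S}^*}(\phi^*)\le\hat{\mathcal{R}}_{\mathcal{S}^*}(\theta^*)$, or at least equality up to optimization tolerance. The discrepancy terms coincide because the basis is the same in both bounds. Subtracting the two bounds then gives $\Delta_{\mathrm{traj}}=2L_\ell\bigl(\mathfrak{R}(\mathcal{F}\times\mathcal{S}_{\mathrm{syn}})-\mathfrak{R}(\mathcal{F}\mid\mathcal{S}_{\mathrm{syn}}^*)\bigr)$, which matches the stated proportionality.

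It remains to show $\Delta_{\mathrm{traj}}>0$. Monotonicity of the supremum gives $\ge$ immediately, since the fixed basis is one element of the product class. The main obstacle is the strict inequality. My first attempt would impose a nondegeneracy assumption: there exist a second basis $\mathcal{S}'$ in the reachable set and a sign vector $\sigma$ such that $\sup_f\sum_i\sigma_i\ell(f;\mathcal{S}')$ strictly exceeds the same supremum on $\mathcal{S}^*$ with positive probability over $\sigma$. Taking expectations over $\sigma$ then gives a strict gap. Without such an assumption, for example when the adjacency parameterization is trivial, equality can hold, so the ``strict dominance'' has to be stated under it. A secondary subtlety is that $\theta^*$ is a single hypothesis, not a class. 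I would handle this by noting that, since $\theta^*$ depends on the data through the trajectory, any valid a priori bound for it must be uniform over the joint class, which is exactly what justifies charging it the joint complexity.
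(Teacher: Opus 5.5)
Your proposal follows the paper's proof essentially step for step. The paper also gives $\theta^*$ a Rademacher bound charged to the joint class $\mathcal{F}\times\mathbb{S}$ and gives $\phi^*$ one charged only to $\mathcal{F}$ on the frozen $\mathcal{S}_{\mathrm{syn}}^*$, with a shared discrepancy term. It likewise assumes comparable empirical risks and subtracts, getting $\Delta_{\mathrm{traj}}$ as twice the complexity gap; you additionally keep the contraction factor $L_\ell$.

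The one real difference is the positivity step. The paper simply asserts $\hat{\mathfrak{R}}(\mathcal{F})\ll\hat{\mathfrak{R}}(\mathcal{F}\times\mathbb{S})$ through a covering-number comparison. You correctly note that monotonicity of the supremum gives only $\geq$, and you make the nondegeneracy hypothesis needed for strictness explicit. That is the more defensible treatment, since strictness can genuinely fail: for instance, if $\mathcal{F}$ already attains the maximal Rademacher value on $\mathcal{S}_{\mathrm{syn}}^*$, the joint complexity cannot exceed it.
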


Theorem~\ref{thm:isolation_bound} establishes the theoretical foundation of the decoupled inference paradigm. It shows that the proxy model $\theta^*$, by participating in the bi-level synthesis procedure, becomes implicitly coupled to the transient and source-dependent structural states encountered along the optimization trajectory.

\begin{algorithm}[t]
\caption{The framework of \method{}}
\label{alg:overall}
\begin{algorithmic}[1]
\REQUIRE Labeled Source Data $\mathcal{D}_S$, Unlabeled Target Data $\mathcal{D}_T$, Basis Size $K$, Inner-loop steps $T$, Learning rates $\eta_{in}, \eta_{out}$.
\ENSURE Target Domain Predictions $\hat{Y}_T$.

\STATE \textbf{--- Stage 1: Dual-Aligned Structural Basis Distillation ---}
\STATE Initialize synthetic basis $\mathcal{S}_{\mathrm{syn}} = \{(A_k, X_k, Y_k)\}_{k=1}^K$ randomly.
\WHILE{not converged}
    \STATE \textit{// Inner Loop: Update Proxy Model}
    \STATE Calculate proxy parameters $\theta^*(\mathcal{S}_{\mathrm{syn}})$ by taking $T$ gradient steps to minimize $\mathcal{L}_{\text{CE}}$ on $\mathcal{S}_{\mathrm{syn}}$ (Eq.~\ref{eq:inner_proxy}).

    \STATE \textit{// Outer Loop: Update Structural Basis}
    \STATE Compute Source Semantic loss $\mathcal{L}_{\text{sem}}$.
    \STATE Compute Geometric Alignment Loss $\mathcal{L}_{\text{geo}}$.
    \STATE Compute Spectral Alignment Loss $\mathcal{L}_{\text{spec}}$.
    \STATE Update synthetic basis $\mathcal{S}_{\mathrm{syn}} \leftarrow \mathcal{S}_{\mathrm{syn}} - \eta_{out} \nabla_{\mathcal{S}_{\mathrm{syn}}} \big(\mathcal{L}_{\text{sem}} + \lambda_1 \mathcal{L}_{\text{geo}} + \lambda_2 \mathcal{L}_{\text{spec}}\big)$.

\ENDWHILE
\STATE \textbf{Output:} Dual-Aligned Structural Basis $\mathcal{S}_{\mathrm{syn}}$.

\STATE \textbf{--- Stage 2: Structurally Calibrated Target Inference ---}
\STATE \textit{// Step 1: Structural Bias Isolation}
\STATE Discard source data $\mathcal{D}_S$ and proxy model $\theta^*$.
\STATE Initialize a fresh target-specific GNN classifier $\phi$.

\STATE \textit{// Step 2: Target-Calibrated Training}
\WHILE{not converged}
    \STATE Sample batches from the optimized basis $\mathcal{S}_{\mathrm{syn}}$.
    \STATE Update $\phi$ by minimizing $\mathcal{L}_{\text{CE}}$ exclusively on $\mathcal{S}_{\mathrm{syn}}$ (Eq.~\ref{eq:inference_train}).
\ENDWHILE
\STATE Obtain the structurally unbiased optimal model $\phi^*$.

\STATE \textit{// Step 3: Prediction}
\RETURN Predictions on target domain $\hat{Y}_T = \text{GNN}(A_T, X_T; \phi^*)$.
\end{algorithmic}
\end{algorithm}

\subsection{Learning Framework}
\label{sec:learning_framework}

The overall framework of \method{} is summarized in Algorithm~\ref{alg:overall}. 
We treat the probabilistic adjacency matrices and node features of the synthetic basis $\mathcal{S}_{\mathrm{syn}}$ as learnable variables and optimize them through a bi-level scheme. In the \emph{inner loop}, the proxy model $\theta^{*}$ is trained exclusively on $\mathcal{S}_{\mathrm{syn}}$ (Line 5), establishing a differentiable dependency between the synthesized structural configurations and the classification objective. In the \emph{outer loop}, the proxy is evaluated on real source data to compute the semantic fidelity loss $\mathcal{L}_{\mathrm{sem}}$, while geometric and spectral discrepancies with respect to the unlabeled target domain are quantified via $\mathcal{L}_{\mathrm{geo}}$ and $\mathcal{L}_{\mathrm{spec}}$ (Lines 7–10). The structural basis is then updated by backpropagating gradients through the inner optimization trajectory. After convergence, we obtain the optimized dual-aligned structural basis $\mathcal{S}_{\mathrm{syn}}$. During inference, we adopt a structurally calibrated inference strategy to eliminate residual source bias. Both the source dataset $\mathcal{D}_S$ and the proxy model $\theta^{*}$ are discarded (Line 15). A freshly initialized GNN $\phi$ is then trained from scratch solely on the fixed basis $\mathcal{S}_{\mathrm{syn}}$ (Lines 18–21). The resulting structurally calibrated model $\phi^{*}$ is directly deployed on unseen target graphs (Line 24). The complexity analysis can be found in Appendix.~\ref{sec:complexity}.

 \begin{table}[t]
\small
\centering
\caption{The image classification results (in \%) under edge density domain shifts (source $\rightarrow$ target) on the MNIST and CIFAR-10 datasets. \textbf{Bold} results indicate the best performance.}
\resizebox{0.48\textwidth}{!}{
\begin{tabular}{l|c|c|c|c|c|c|c|c}
\toprule
\multirow{2}{*}{\textbf{Methods}}
& \multicolumn{4}{c|}{\textbf{MNIST}}
& \multicolumn{4}{c}{\textbf{CIFAR-10}} \\
\cmidrule(lr){2-5} \cmidrule(lr){6-9}
& S0$\rightarrow$S1 & S1$\rightarrow$S0 & S0$\rightarrow$S2 & S2$\rightarrow$S0
& C0$\rightarrow$C1 & C1$\rightarrow$C0 & C0$\rightarrow$C2 & C2$\rightarrow$C0 \\
\midrule
G-CRD      & 48.34{\scriptsize$\pm 2.5$} & 49.12{\scriptsize$\pm 2.4$} & 38.45{\scriptsize$\pm 2.1$} & 47.55{\scriptsize$\pm 2.0$} & 25.34{\scriptsize$\pm 1.9$} & 30.45{\scriptsize$\pm 2.1$} & 27.12{\scriptsize$\pm 2.2$} & 29.45{\scriptsize$\pm 2.5$} \\
MuGSI      & 45.12{\scriptsize$\pm 3.2$} & 45.67{\scriptsize$\pm 3.1$} & 35.12{\scriptsize$\pm 3.5$} & 43.12{\scriptsize$\pm 3.0$} & 22.11{\scriptsize$\pm 2.6$} & 27.12{\scriptsize$\pm 3.1$} & 24.33{\scriptsize$\pm 2.8$} & 26.11{\scriptsize$\pm 2.7$} \\
TGS        & 49.88{\scriptsize$\pm 1.8$} & 51.22{\scriptsize$\pm 1.9$} & 39.56{\scriptsize$\pm 2.2$} & 48.66{\scriptsize$\pm 2.4$} & 26.55{\scriptsize$\pm 1.6$} & 31.55{\scriptsize$\pm 1.8$} & 28.45{\scriptsize$\pm 1.9$} & 30.55{\scriptsize$\pm 2.1$} \\
LAD-GNN    & 51.45{\scriptsize$\pm 2.1$} & 52.88{\scriptsize$\pm 2.3$} & 42.10{\scriptsize$\pm 1.9$} & 50.12{\scriptsize$\pm 2.1$} & 28.12{\scriptsize$\pm 1.7$} & 33.22{\scriptsize$\pm 2.0$} & 30.15{\scriptsize$\pm 2.1$} & 32.12{\scriptsize$\pm 1.9$} \\
AdaGMLP    & 47.22{\scriptsize$\pm 2.7$} & 48.34{\scriptsize$\pm 2.6$} & 36.78{\scriptsize$\pm 3.1$} & 46.33{\scriptsize$\pm 2.5$} & 24.55{\scriptsize$\pm 2.4$} & 29.66{\scriptsize$\pm 2.8$} & 26.55{\scriptsize$\pm 2.6$} & 28.33{\scriptsize$\pm 2.9$} \\
ClustGDD   & 44.67{\scriptsize$\pm 3.6$} & 46.11{\scriptsize$\pm 3.5$} & 34.11{\scriptsize$\pm 3.7$} & 44.22{\scriptsize$\pm 3.6$} & 21.33{\scriptsize$\pm 3.3$} & 26.44{\scriptsize$\pm 3.7$} & 23.11{\scriptsize$\pm 3.5$} & 25.44{\scriptsize$\pm 3.4$} \\
\midrule
SGDA       & 43.15{\scriptsize$\pm 3.3$} & 44.22{\scriptsize$\pm 3.4$} & 32.44{\scriptsize$\pm 3.6$} & 42.15{\scriptsize$\pm 3.5$} & 20.12{\scriptsize$\pm 3.1$} & 25.12{\scriptsize$\pm 3.2$} & 22.45{\scriptsize$\pm 3.4$} & 24.15{\scriptsize$\pm 3.3$} \\
StruRW     & 52.34{\scriptsize$\pm 1.6$} & 53.15{\scriptsize$\pm 1.5$} & 41.25{\scriptsize$\pm 1.8$} & 51.45{\scriptsize$\pm 1.9$} & 29.15{\scriptsize$\pm 1.5$} & 34.15{\scriptsize$\pm 1.6$} & 31.05{\scriptsize$\pm 1.7$} & 33.15{\scriptsize$\pm 1.5$} \\
A2GNN      & 46.88{\scriptsize$\pm 2.9$} & 47.55{\scriptsize$\pm 3.0$} & 35.67{\scriptsize$\pm 3.2$} & 45.67{\scriptsize$\pm 2.8$} & 23.44{\scriptsize$\pm 2.8$} & 28.55{\scriptsize$\pm 3.0$} & 25.66{\scriptsize$\pm 2.9$} & 27.66{\scriptsize$\pm 2.8$} \\
PA-BOTH    & 50.12{\scriptsize$\pm 2.2$} & 51.66{\scriptsize$\pm 2.1$} & 40.55{\scriptsize$\pm 2.5$} & 49.22{\scriptsize$\pm 2.4$} & 27.66{\scriptsize$\pm 2.0$} & 32.44{\scriptsize$\pm 2.2$} & 29.33{\scriptsize$\pm 2.4$} & 31.44{\scriptsize$\pm 2.3$} \\
GAA        & 41.22{\scriptsize$\pm 3.5$} & 43.11{\scriptsize$\pm 3.7$} & 33.88{\scriptsize$\pm 3.4$} & 41.55{\scriptsize$\pm 3.4$} & 19.88{\scriptsize$\pm 3.5$} & 24.33{\scriptsize$\pm 3.6$} & 21.55{\scriptsize$\pm 3.3$} & 23.55{\scriptsize$\pm 3.7$} \\
TDSS       & 51.98{\scriptsize$\pm 1.7$} & 53.45{\scriptsize$\pm 1.8$} & 42.89{\scriptsize$\pm 2.0$} & 51.88{\scriptsize$\pm 1.7$} & 29.45{\scriptsize$\pm 1.8$} & 34.55{\scriptsize$\pm 1.9$} & 31.22{\scriptsize$\pm 2.0$} & 33.45{\scriptsize$\pm 1.8$} \\
\midrule
\method    & \textbf{54.26}{\scriptsize$\pm 1.5$} & \textbf{54.81}{\scriptsize$\pm 1.8$} & \textbf{44.17}{\scriptsize$\pm 2.1$} & \textbf{52.95}{\scriptsize$\pm 2.5$}
           & \textbf{30.49}{\scriptsize$\pm 1.4$} & \textbf{35.61}{\scriptsize$\pm 1.7$} & \textbf{32.28}{\scriptsize$\pm 2.2$} & \textbf{34.66}{\scriptsize$\pm 1.3$} \\
\bottomrule
\end{tabular}}
\label{tab:mnist_cifar10_edge_part}
\vspace{-0.5cm}
\end{table}

\begin{table*}[h]
    \small
    \centering
    \caption{Graph classification results (in \%) under node and edge density domain shifts on the Mutagenicity dataset, and 
    under correlation shifts (Corr. Shift) and feature shifts (source$\rightarrow$target). S, SB, P, D, C, CM, B, and BM denote Spurious-Motif, Spurious-Motif\_bias, PROTEINS, DD, COX2, COX2\_MD, BZR, and BZR\_MD, respectively. \textbf{Bold} indicates the best performance.}
    \vspace{-0.1cm}
    \resizebox{1.0\textwidth}{!}{
    \begin{tabular}{l|c|c|c|c|c|c|c|c|c|c|c|c|c|c}
    \toprule
    \multirow{2}{*}{\textbf{Methods}}
    & \multicolumn{3}{c|}{\textbf{Node Shift}}
    & \multicolumn{3}{c|}{\textbf{Edge Shift}}
    & \multicolumn{2}{c|}{\textbf{Corr. Shift}} 
    & \multicolumn{6}{c}{\textbf{Feature Shift}} \\
    \cmidrule(lr){2-4} \cmidrule(lr){5-7} \cmidrule(lr){8-9} \cmidrule(lr){10-15}
    & M0$\rightarrow$M1 & M0$\rightarrow$M2 & M0$\rightarrow$M3
    & M0$\rightarrow$M1 & M0$\rightarrow$M2 & M0$\rightarrow$M3
    & S$\rightarrow$SB & SB$\rightarrow$S & P$\rightarrow$D & D$\rightarrow$P & C$\rightarrow$CM & CM$\rightarrow$C & B$\rightarrow$BM & BM$\rightarrow$B \\
    \midrule
    G-CRD & 60.4{\scriptsize$\pm 0.5$} & 55.3{\scriptsize$\pm 1.1$} & 59.6{\scriptsize$\pm 3.2$} & 73.7{\scriptsize$\pm 0.6$} & 55.3{\scriptsize$\pm 1.0$} & 58.4{\scriptsize$\pm 2.5$} & 52.8{\scriptsize$\pm 2.4$} & 53.1{\scriptsize$\pm 2.5$} & 60.7{\scriptsize$\pm 1.4$} & 71.4{\scriptsize$\pm 1.2$} & 58.7{\scriptsize$\pm 1.5$} & 71.8{\scriptsize$\pm 1.6$} & 63.8{\scriptsize$\pm 1.9$} & 83.9{\scriptsize$\pm 1.1$} \\
    MuGSI & 45.3{\scriptsize$\pm 2.4$} & 48.6{\scriptsize$\pm 3.1$} & 56.2{\scriptsize$\pm 2.0$} & 59.8{\scriptsize$\pm 2.5$} & 55.4{\scriptsize$\pm 2.9$} & 54.2{\scriptsize$\pm 2.1$} & 49.5{\scriptsize$\pm 3.2$} & 50.2{\scriptsize$\pm 2.8$} & 57.5{\scriptsize$\pm 2.2$} & 70.1{\scriptsize$\pm 1.9$} & 55.9{\scriptsize$\pm 2.4$} & 69.4{\scriptsize$\pm 1.8$} & 61.8{\scriptsize$\pm 2.1$} & 80.5{\scriptsize$\pm 2.0$} \\
    TGS & 53.8{\scriptsize$\pm 1.8$} & 58.2{\scriptsize$\pm 1.6$} & 50.7{\scriptsize$\pm 2.5$} & 66.4{\scriptsize$\pm 1.8$} & 63.2{\scriptsize$\pm 2.1$} & 52.8{\scriptsize$\pm 2.1$} & 50.4{\scriptsize$\pm 2.6$} & 52.5{\scriptsize$\pm 2.4$} & 59.1{\scriptsize$\pm 1.6$} & 68.7{\scriptsize$\pm 1.5$} & 57.2{\scriptsize$\pm 1.9$} & 68.1{\scriptsize$\pm 2.3$} & 65.2{\scriptsize$\pm 1.8$} & 82.1{\scriptsize$\pm 1.4$} \\
    LAD-GNN & 58.5{\scriptsize$\pm 2.3$} & 62.4{\scriptsize$\pm 2.8$} & 53.6{\scriptsize$\pm 1.7$} & 71.2{\scriptsize$\pm 2.1$} & 72.5{\scriptsize$\pm 1.7$} & 51.4{\scriptsize$\pm 1.9$} & 51.9{\scriptsize$\pm 2.8$} & 54.0{\scriptsize$\pm 2.2$} & 59.5{\scriptsize$\pm 1.8$} & 71.8{\scriptsize$\pm 1.4$} & 56.5{\scriptsize$\pm 2.1$} & 72.3{\scriptsize$\pm 1.5$} & 64.9{\scriptsize$\pm 2.0$} & 81.8{\scriptsize$\pm 1.7$} \\
    AdaGMLP & 51.2{\scriptsize$\pm 1.9$} & 42.8{\scriptsize$\pm 1.4$} & 55.3{\scriptsize$\pm 2.1$} & 62.1{\scriptsize$\pm 2.4$} & 58.9{\scriptsize$\pm 2.4$} & 55.6{\scriptsize$\pm 2.2$} & 49.2{\scriptsize$\pm 2.5$} & 50.9{\scriptsize$\pm 2.1$} & 55.2{\scriptsize$\pm 2.3$} & 66.9{\scriptsize$\pm 2.1$} & 53.8{\scriptsize$\pm 2.5$} & 67.5{\scriptsize$\pm 2.0$} & 60.1{\scriptsize$\pm 2.4$} & 78.6{\scriptsize$\pm 2.2$} \\
    ClustGDD & 44.9{\scriptsize$\pm 2.6$} & 50.5{\scriptsize$\pm 2.2$} & 49.8{\scriptsize$\pm 2.3$} & 55.3{\scriptsize$\pm 2.6$} & 45.6{\scriptsize$\pm 2.9$} & 56.1{\scriptsize$\pm 2.3$} & 47.5{\scriptsize$\pm 3.5$} & 48.1{\scriptsize$\pm 3.4$} & 56.8{\scriptsize$\pm 2.0$} & 68.3{\scriptsize$\pm 1.8$} & 54.1{\scriptsize$\pm 2.2$} & 68.9{\scriptsize$\pm 2.4$} & 62.3{\scriptsize$\pm 1.6$} & 79.2{\scriptsize$\pm 2.5$} \\
    \midrule
    SGDA & 41.0{\scriptsize$\pm 3.5$} & 52.1{\scriptsize$\pm 1.8$} & 59.8{\scriptsize$\pm 2.1$} & 65.0{\scriptsize$\pm 2.2$} & 50.2{\scriptsize$\pm 1.5$} & 62.2{\scriptsize$\pm 1.9$} & 50.1{\scriptsize$\pm 2.9$} & 51.3{\scriptsize$\pm 2.7$} & 58.2{\scriptsize$\pm 1.7$} & 69.5{\scriptsize$\pm 2.0$} & 56.4{\scriptsize$\pm 1.8$} & 70.2{\scriptsize$\pm 2.1$} & 64.5{\scriptsize$\pm 2.3$} & 81.2{\scriptsize$\pm 1.9$} \\
    StruRW & 59.9{\scriptsize$\pm 1.8$} & 34.1{\scriptsize$\pm 1.6$} & 48.4{\scriptsize$\pm 2.0$} & 74.2{\scriptsize$\pm 1.7$} & 75.6{\scriptsize$\pm 1.9$} & 54.4{\scriptsize$\pm 2.3$} & 49.5{\scriptsize$\pm 2.7$} & 52.7{\scriptsize$\pm 2.5$} & 53.2{\scriptsize$\pm 2.1$} & 65.4{\scriptsize$\pm 1.8$} & 52.8{\scriptsize$\pm 2.3$} & 64.5{\scriptsize$\pm 1.9$} & 61.5{\scriptsize$\pm 2.4$} & 78.2{\scriptsize$\pm 1.5$} \\
    A2GNN & 57.6{\scriptsize$\pm 2.1$} & 34.6{\scriptsize$\pm 2.7$} & 54.8{\scriptsize$\pm 3.1$} & 32.3{\scriptsize$\pm 3.0$} & 37.8{\scriptsize$\pm 2.5$} & 53.5{\scriptsize$\pm 2.7$} & 44.7{\scriptsize$\pm 2.8$} & 46.5{\scriptsize$\pm 3.5$} & 55.8{\scriptsize$\pm 1.9$} & 66.1{\scriptsize$\pm 2.2$} & 51.4{\scriptsize$\pm 1.7$} & 66.2{\scriptsize$\pm 2.5$} & 59.8{\scriptsize$\pm 1.8$} & 77.6{\scriptsize$\pm 2.1$} \\
    PA-BOTH & 54.8{\scriptsize$\pm 1.6$} & 67.3{\scriptsize$\pm 3.0$} & 47.9{\scriptsize$\pm 1.8$} & 68.7{\scriptsize$\pm 2.2$} & 77.0{\scriptsize$\pm 1.5$} & 48.9{\scriptsize$\pm 2.1$} & 51.2{\scriptsize$\pm 2.3$} & 52.9{\scriptsize$\pm 2.1$} & 54.1{\scriptsize$\pm 2.5$} & 63.8{\scriptsize$\pm 1.6$} & 53.9{\scriptsize$\pm 2.0$} & 65.7{\scriptsize$\pm 1.4$} & 62.4{\scriptsize$\pm 2.2$} & 79.1{\scriptsize$\pm 1.8$} \\
    GAA & 40.1{\scriptsize$\pm 3.2$} & 31.3{\scriptsize$\pm 1.4$} & 58.1{\scriptsize$\pm 2.1$} & 70.5{\scriptsize$\pm 1.8$} & 65.9{\scriptsize$\pm 2.2$} & 57.1{\scriptsize$\pm 1.9$} & 47.8{\scriptsize$\pm 2.1$} & 49.3{\scriptsize$\pm 2.4$} & 52.7{\scriptsize$\pm 1.8$} & 64.5{\scriptsize$\pm 2.4$} & 50.6{\scriptsize$\pm 2.1$} & 63.9{\scriptsize$\pm 2.8$} & 60.1{\scriptsize$\pm 2.5$} & 76.8{\scriptsize$\pm 2.3$} \\
    TDSS & 40.1{\scriptsize$\pm 2.0$} & 67.7{\scriptsize$\pm 2.6$} & 52.5{\scriptsize$\pm 2.7$} & 67.7{\scriptsize$\pm 2.1$} & 40.1{\scriptsize$\pm 2.8$} & 55.8{\scriptsize$\pm 2.2$} & 52.1{\scriptsize$\pm 2.5$} & 53.4{\scriptsize$\pm 2.4$} & 56.4{\scriptsize$\pm 2.0$} & 67.2{\scriptsize$\pm 1.5$} & 54.2{\scriptsize$\pm 1.9$} & 67.1{\scriptsize$\pm 2.2$} & 63.5{\scriptsize$\pm 1.7$} & 80.4{\scriptsize$\pm 1.6$} \\
    \midrule
    \method {} & \textbf{63.8}{\scriptsize$\pm 1.4$} & \textbf{69.5}{\scriptsize$\pm 2.0$} & \textbf{62.7}{\scriptsize$\pm 1.5$} & \textbf{76.5}{\scriptsize$\pm 1.4$} & \textbf{79.1}{\scriptsize$\pm 1.3$} & \textbf{60.8}{\scriptsize$\pm 1.7$} & \textbf{54.3}{\scriptsize$\pm 2.1$} & \textbf{55.6}{\scriptsize$\pm 1.8$} & \textbf{62.4}{\scriptsize$\pm 1.5$} & \textbf{73.5}{\scriptsize$\pm 1.1$} & \textbf{60.2}{\scriptsize$\pm 1.4$} & \textbf{74.1}{\scriptsize$\pm 1.3$} & \textbf{69.5}{\scriptsize$\pm 1.6$} & \textbf{85.4}{\scriptsize$\pm 1.0$} \\
    \bottomrule
    \end{tabular}}
    \label{tab:combined_shifts}
    \end{table*}

\section{Experiments}

\textbf{Datasets.} To evaluate the effectiveness of \method{}, we conduct experiments on both graph and image benchmarks under three representative types of domain shifts. (1) \textbf{Structure-based shifts:} Structural shifts are simulated on MNIST~\cite{lecun2002gradient}, CIFAR10~\cite{krizhevsky2009learning}, PROTEINS~\citep{dobson2003distinguishing}, Mutagenicity~\citep{kazius2005derivation}, NCI1~\citep{wale2008comparison}, FRANKENSTEIN~\citep{orsini2015graph}, and ogbg-molhiv~\citep{hu2021ogblsc} by partitioning graphs into multiple domains according to node or edge densities~\cite{yin2023coco,yin2025dream}. (2) \textbf{Feature-based shifts:} We further evaluate on DD, PROTEINS, BZR, BZR\_MD, COX2, and COX2\_MD, where source and target domains share identical global structures but exhibit significant shifts in node feature distributions. (3) \textbf{Correlation shifts:} To assess robustness against spurious correlations, we adopt the synthetic Spurious-Motif dataset~\cite{wu2022discovering}, in which the predictive relationship between class labels and specific motif patterns differs across domains. More details of the above datasets are provided in Appendix~\ref{sec:dataset}.

\noindent \textbf{Baselines.} We compare \method{} against a comprehensive set of competitive methods: (1) \emph{Graph distillation methods} include G-CRD~\cite{joshi2022representation}, MuGSI~\cite{yao2024mugsi}, TGS~\cite{wu2024teacher}, LAD-GNN~\cite{hong2024label}, AdaGMLP~\cite{lu2024adagmlp}, and ClusterGDD~\cite{lai2025simple}; (2) \emph{Graph domain adaptation methods} include SGDA~\citep{qiao2023semi}, StruRW~\citep{liu2023structural}, A2GNN~\citep{liu2024rethinking}, PA-BOTH~\citep{liu2024pairwise}, GAA~\citep{fang2025benefits}, and TDSS~\citep{chen2025smoothness}. More details of baselines can be found in Appendix~\ref{sec:baselines}.

\noindent \textbf{Implementation details.} We implement \method{} using PyTorch and conduct all experiments on NVIDIA A100 GPUs. By default, a 3-layer GIN~\citep{xu2018how} serves as the backbone encoder. The hidden dimension is set to 128, with a dropout rate of 0.2. For the bi-level optimization, both the inner and outer loops are optimized with Adam optimizer. In the inner loop, the proxy model is trained for up to $T=20$ steps with a learning rate of $1\times10^{-3}$. In the outer loop, the structural basis is updated using a meta learning rate of $1\times10^{-4}$ with gradient clipping at 1.0. To balance semantic preservation and structural alignment, we set the loss weights to $\lambda_1=0.7$ and $\lambda_2=0.5$. We report classification accuracy on image datasets (e.g., MNIST), TUDataset benchmarks (e.g., PROTEINS) and Spurious-Motif, and AUC on OGB datasets  (e.g., ogbg-molhiv). All results are averaged over five independent runs.

\subsection{Performance Comparison}\label{sec:performance_main}

We present the comprehensive results of the proposed \method{} and all baseline methods under three types of domain shifts across different datasets in Tables~\ref{tab:mnist_cifar10_edge_part}, \ref{tab:combined_shifts}, and \ref{tab:mnist_cifar10_edge}-\ref{tab:molhiv_edge}. From these results, we have the following observations: (1) Existing graph distillation approaches (e.g., G-CRD) primarily aim to compress the source dataset into a compact synthetic set by matching gradients, feature statistics, or training trajectories. While effective at preserving source-domain predictive knowledge, these methods are inherently designed for in-domain scenarios and do not explicitly account for cross-domain discrepancies. As a result, the distilled graphs tend to encode source-specific structural patterns, leading to poor generalization when distribution shifts occur in the target domain. (2) Graph domain adaptation methods (e.g., SGDA) alleviate distribution mismatch by aligning representations or reweighting structures using target-domain information. Although some methods incorporate topology-aware mechanisms, the majority still follow a feature-centric paradigm, where structural adaptation is treated implicitly through message passing or auxiliary regularization. Consequently, these approaches may partially mitigate feature discrepancies but remain vulnerable under significant topology shifts. (3) The proposed \method{} achieves state-of-the-art performance on both graph and image benchmarks across three types of domain shifts. This superiority can be attributed to two key factors. First, by shifting from model-centric adaptation to a data-centric structural basis distillation paradigm, \method{} effectively decouples the learning process from source-specific structural biases and mitigates the accumulation of predictive errors during transfer. Second, the proposed dual-aligned structural optimization integrates geometry-preserving topology moment matching with Dirichlet energy based spectral alignment, enabling \method{} to capture both geometric and spectral characteristics of the target domain. More experimental results can be found in Appendix~\ref{sec:model performance}.

\begin{figure}[t]
    \centering
    % \vspace{0.2cm}
    \begin{subfigure}[t]{0.23\textwidth}
        \centering
        \includegraphics[width=\linewidth]{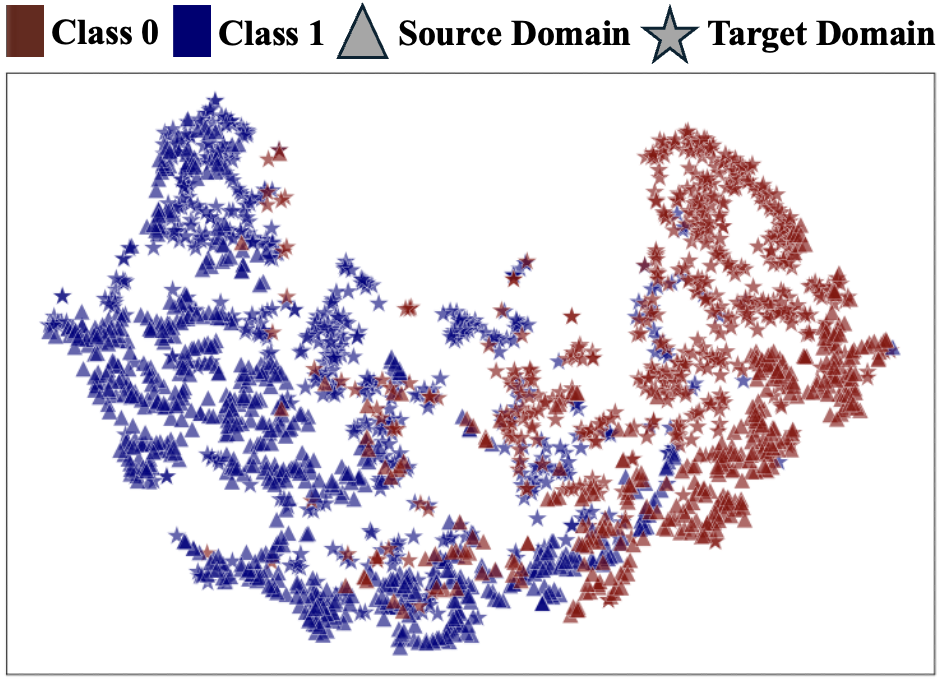}
        \caption{\method{}}
        \label{fig:ours}
    \end{subfigure}\hfill
    \begin{subfigure}[t]{0.23\textwidth}
        \centering
    \raisebox{-0.024cm}{\includegraphics[width=0.955\linewidth,height=2.73cm]{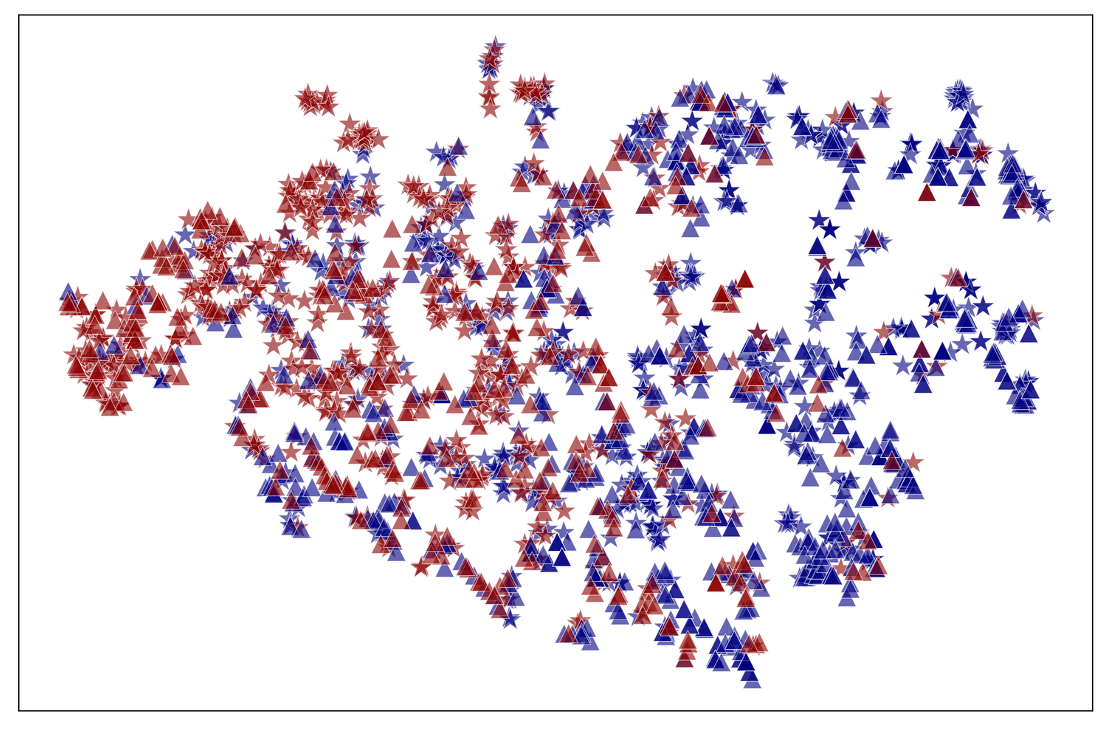}}
        \caption{TDSS}\label{fig:tsne_tdss}
    \end{subfigure}
    % \vspace{-0.1cm}
    \caption{T-SNE visualizations on the Mutagenicity dataset for \method{} and baselines.}
    \vspace{-0.3cm}
    \label{fig:generalization_tsne_mutag}
\end{figure}

Additionally, we present t-SNE visualizations of \method{} and baseline methods in Figure~\ref{fig:generalization_tsne_mutag}. Compared to baselines (e.g., TDSS), \method{} produces more compact intra-class clusters and clearer inter-class separation across domains. Notably, samples from the source and target domains belonging to the same class are more closely aligned, while those from different classes remain well separated, indicating that \method{} effectively mitigates domain discrepancies. More visualizations are provided in Appendix~\ref{sec:vis}.

\begin{table}[h]
\small
\centering
\vspace{-0.1cm}
\caption{The results of ablation studies on the Mutagenicity dataset. \textbf{Bold} results indicate the best performance.}
\resizebox{0.48\textwidth}{!}{
\begin{tabular}{l|c|c|c|c|c|c}
\toprule
\textbf{Methods}
& M0$\rightarrow$M1 & M1$\rightarrow$M0
& M0$\rightarrow$M2 & M2$\rightarrow$M0
& M0$\rightarrow$M3 & M3$\rightarrow$M0 \\
\midrule 
\method{} w/o SE  & 64.2 & 61.5 & 65.8 & 50.3 & 53.4 & 54.2 \\
\method{} w/o SP & 71.5 & 68.1 & 74.2 & 55.6 & 56.5 & 60.1 \\
\method{} w/o GE  & 69.5 & 66.4 & 71.6 & 53.8 & 54.2 & 57.1 \\
\method{} w/o TG & 72.4 & 70.6 & 73.9 & 57.1 & 58.6 & 62.3 \\
\midrule
\method{} & \textbf{76.5} & \textbf{72.3} & \textbf{79.1} & \textbf{59.4} & \textbf{60.8} & \textbf{64.5} \\
\bottomrule
\end{tabular}
}
\label{tab:ablation_mutag_part1}
\vspace{-0.4cm}
\end{table}

\subsection{Ablation Study}~\label{sec:ablation}

To systematically examine the contribution of each key component in \method{}, we conduct ablation studies on four model variants: (1) \method{} w/o SE, which removes the semantic loss $\mathcal{L}_{\text{sem}}$, thereby discarding the supervision that preserves source-domain discriminative structures during structural distillation; (2) \method{} w/o SP, which excludes the spectral alignment loss $\mathcal{L}_{\text{spec}}$, eliminating the constraint that aligns Laplacian-induced filtering behaviors across domains; (3) \method{} w/o GE, which removes the geometric alignment loss $\mathcal{L}_{\text{geo}}$, thus preventing the structural basis from matching permutation-invariant topological statistics of the target domain; and (4) \method{} w/o TG, which directly applies the proxy model obtained during the distillation stage to the target domain without retraining a new GNN, thereby disabling the structural bias isolation mechanism in the inference stage.

Experimental results are shown in Table~\ref{tab:ablation_mutag_part1}. From the table, we observe that: 
(1) The semantic loss $\mathcal{L}_{\text{sem}}$ plays a critical role in preserving source-discriminative semantics during structural distillation. When this component is removed (\method{} w/o SE), performance drops significantly across all transfer tasks, indicating that without semantic supervision, the synthesized structural basis fails to maintain alignment with source decision boundaries, thereby impairing cross-domain transferability; (2) The spectral alignment loss $\mathcal{L}_{\text{spec}}$ is essential for matching the Laplacian-induced filtering behavior between domains. Removing this component (\method{} w/o SP) consistently degrades performance, suggesting that without spectral calibration, the learned structural basis cannot properly capture target-domain frequency characteristics, leading to mismatched message-passing dynamics; 
(3) The geometric alignment loss $\mathcal{L}_{\text{geo}}$ enforces consistency in permutation-invariant structural statistics between the synthesized basis and the target domain. When this constraint is removed (\method{} w/o GE), performance decreases across all settings, demonstrating that geometric alignment is necessary to preserve relational structures under topology shift; (4) The decoupled inference mechanism further contributes to performance gains by isolating source-specific structural bias. When this component is disabled (\method{} w/o TG) and the proxy model is directly applied to the target domain, performance degrades consistently, indicating that retraining a fresh GNN on the distilled structural basis is crucial for eliminating residual source bias. More results can be found in Appendix.~\ref{sec:ablation study}.

\begin{figure}[h]
    \begin{subfigure}{0.48\linewidth}
        \centering
    \raisebox{0.3cm}{\includegraphics[width=\linewidth]{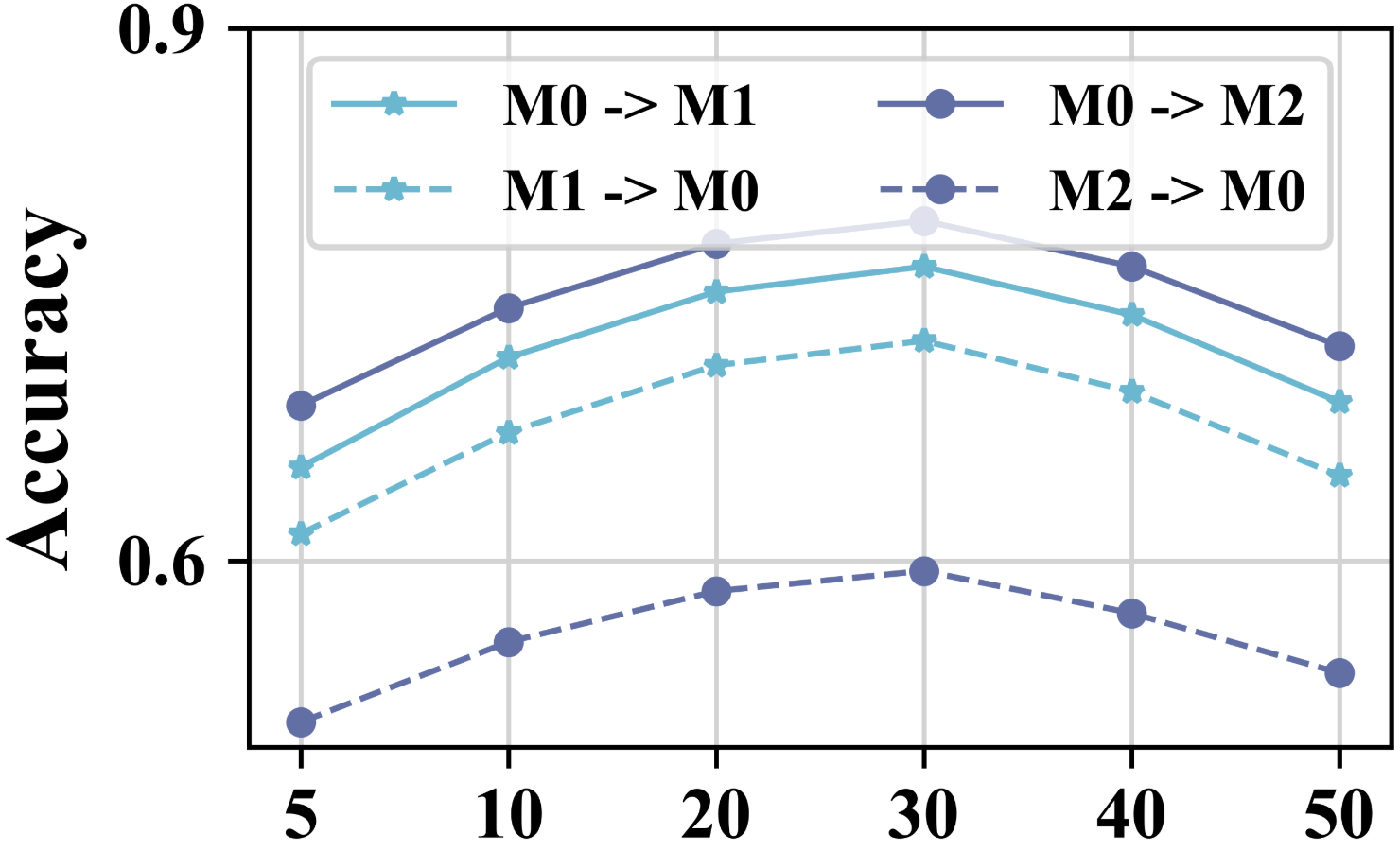}}
        \caption{Synthetic Bases $K$}
        \label{fig:mutag_K}
    \end{subfigure}
    \hspace{0.1cm}
    \hfill
    \begin{subfigure}[b]{0.48\linewidth}\centering\includegraphics[width=\linewidth]{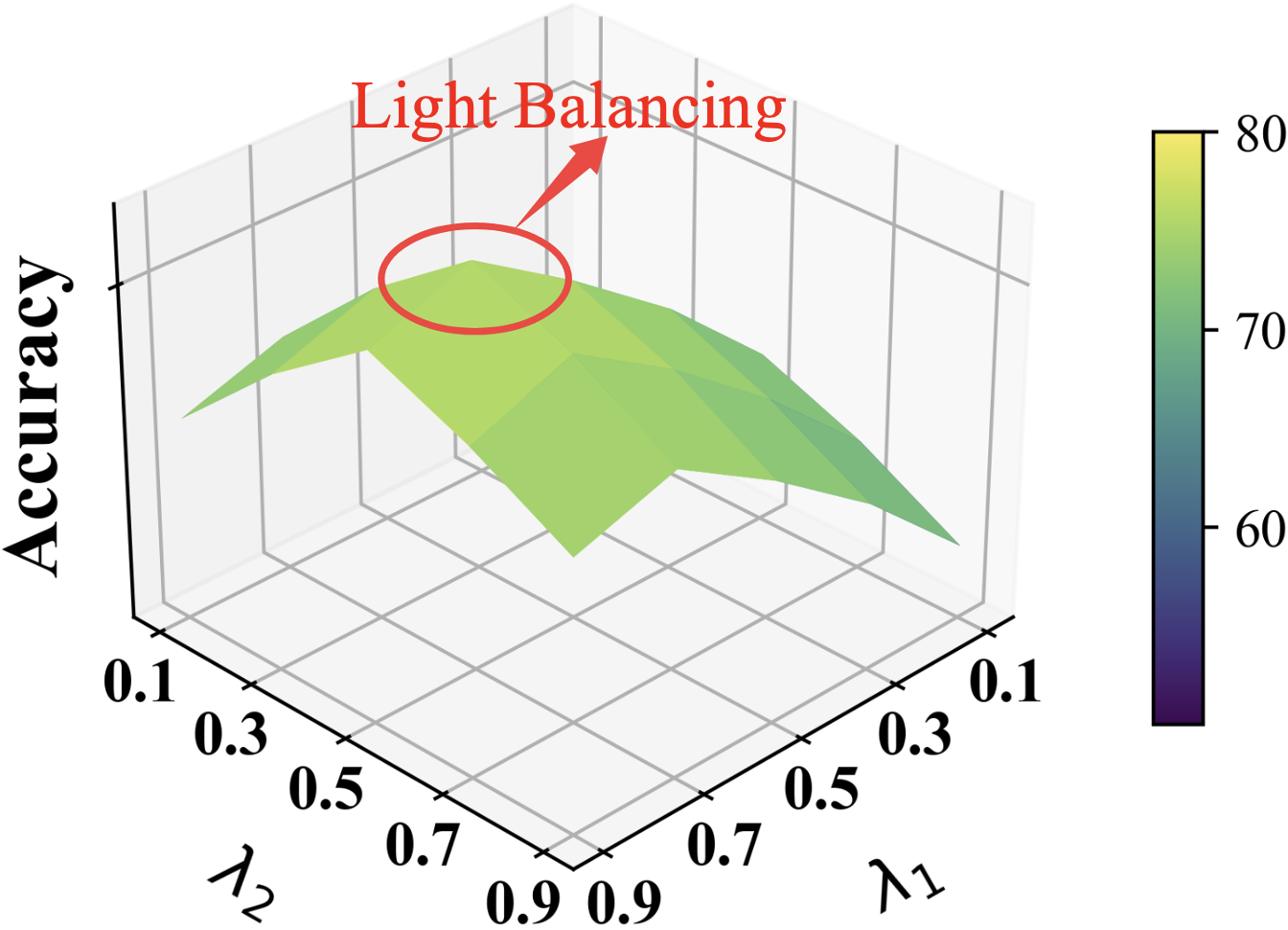}
        \caption{Coefficient ($\lambda_1$, $\lambda_2$)}
        \label{fig:lambda1_lambda2}
    \end{subfigure}
    \vspace{-0.1cm}
    \caption{Sensitivity analysis of the number of synthetic bases $K$ and balance coefficient ($\lambda_1$, $\lambda_2$) on the Mutagenicity dataset.}
    \vspace{-0.5cm}
    \label{fig:sensitivity}
\end{figure}

\subsection{Sensitivity Analysis}\label{sec:sensitivity}

We conduct a sensitivity analysis on the structural basis size $K$ and the balance coefficients $(\lambda_1, \lambda_2)$, as illustrated in Figure.~\ref{fig:sensitivity}. Here, $K$ controls the number of synthetic graphs in the dual-aligned structural basis, thereby determining the expressive capacity of the differentiable structural substrate for modeling cross-domain topology variations. The coefficients $\lambda_1$ and $\lambda_2$ balance the geometric alignment loss $L_{\text{geo}}$ and the spectral alignment loss $L_{\text{spec}}$, respectively, governing the trade-off between matching permutation-invariant topological statistics and aligning Laplacian-induced spectral characteristics during structural distillation.

Figure.~\ref{fig:sensitivity} illustrates the impact of these hyperparameters on the performance of \method{} on the Mutagenicity dataset. We vary the number of synthetic bases $K$ within $\{5, 10, 20, 30, 40, 50\}$, and set both $\lambda_1$ and $\lambda_2$ within $\{0.1, 0.3, 0.5, 0.7, 0.9\}$. We can find that: (1) As shown in Figure.~\ref{fig:sensitivity}(a), performance improves as $K$ increases from 5 to 30, indicating that a larger structural basis enhances expressiveness and better captures cross-domain topology variations. Further increasing $K$ yields diminishing returns and slight degradation, suggesting redundancy in overly large bases. We therefore set $K=30$ by default. (2) Figure.~\ref{fig:sensitivity}(b) shows that the best performance is achieved at $\lambda_1=0.7$ and $\lambda_2=0.5$, highlighting the need for a balanced trade-off between geometric and spectral alignment. Smaller values lead to insufficient alignment, while larger ones overemphasize a single objective and disrupt their complementarity. More experimental results on other datasets are provided in Appendix~\ref{sec:sensitive analysis}.

\begin{figure}[h]
    \centering\includegraphics[width=1.0\linewidth]{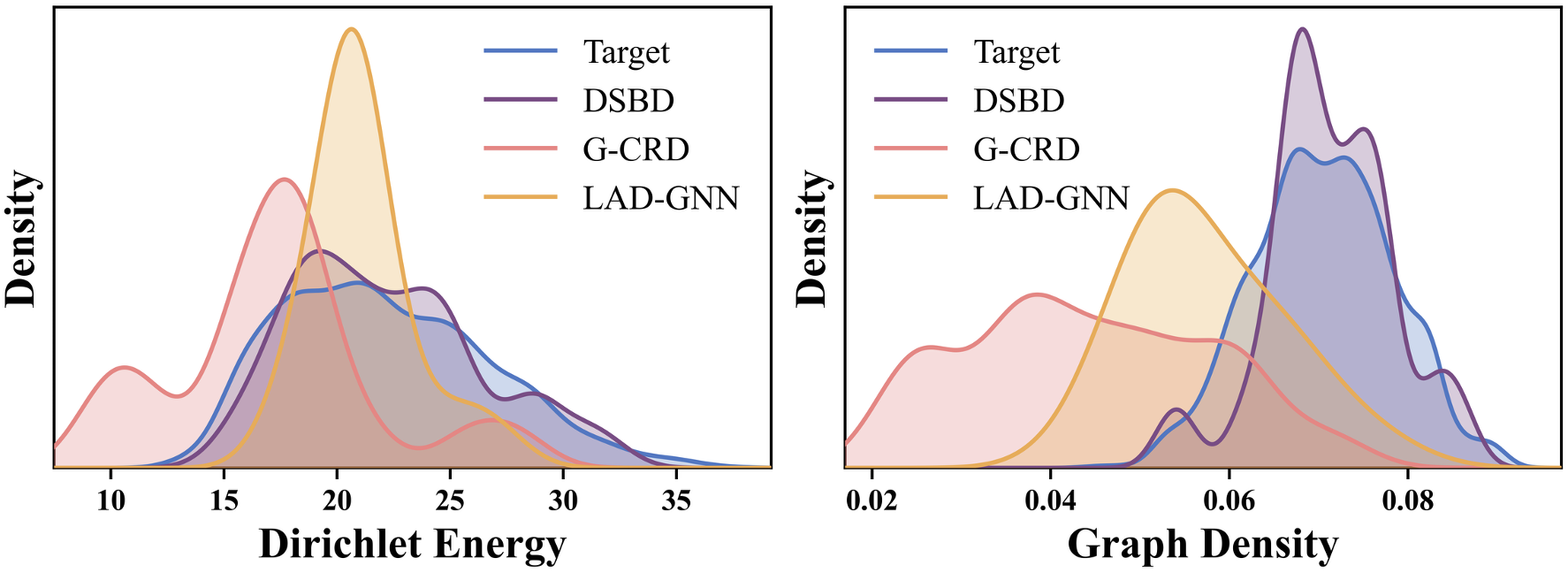}
    \vspace{-0.6cm}
    \caption{Distribution of Dirichlet energy and graph density of distilled basis between \method{} and baselines.}
    \label{fig:distribution}
    % \vspace{-0.4cm}
\end{figure}

\begin{figure}[h]
    \centering
    \begin{subfigure}[t]{0.14\textwidth}
        \centering
        \includegraphics[width=\linewidth]{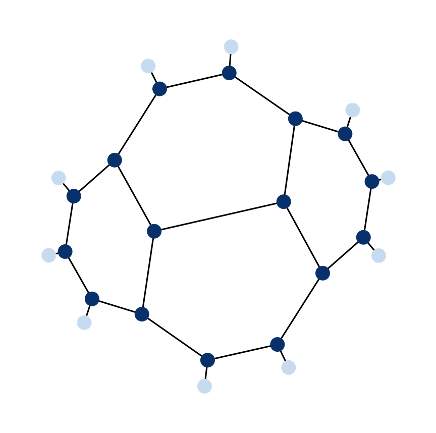}
        \caption{$\Omega$=13.47}
        \label{fig:graph_1}
    \end{subfigure}
    \hfill
    \begin{subfigure}[t]{0.14\textwidth}
        \centering
        \includegraphics[width=\linewidth]{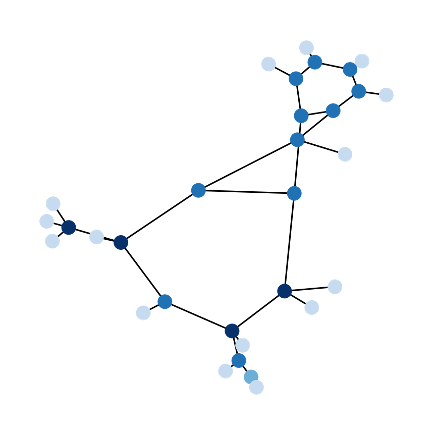}
        \caption{$\Omega$=22.93}
        \label{fig:graph_3}
    \end{subfigure}
    \hfill
    \begin{subfigure}[t]{0.14\textwidth}
        \centering
        \includegraphics[width=\linewidth]{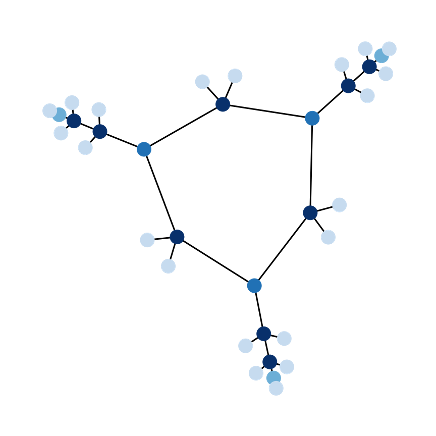}
        \caption{$\Omega$=34.53}
        \label{fig:graph_5}
    \end{subfigure}
    \vspace{-0.2cm}
    \caption{Visualizations of the distilled basis of \method{} with different Dirichlet energy $\Omega$.}
    \label{fig:synthetic_basis_energy}
    % \vspace{-0.3cm}
\end{figure}

\subsection{Qualitative Analysis of Distilled Basis}\label{sec:vis_graph}

To assess whether the distilled structural basis aligns with the target domain, we compare the distributions of Dirichlet energy and graph density between \method{} and baseline methods (e.g., G-CRD and LAD-GNN). As shown in Figure~\ref{fig:distribution}, \method{} exhibits a strong overlap with the target distribution in both metrics, indicating effective alignment in both spectral and geometric aspects. In contrast, baseline methods show noticeable distributional shifts, suggesting their inability to capture the structural patterns of the target domain. 

We further visualize synthesized graphs with varying Dirichlet energy in Figure~\ref{fig:synthetic_basis_energy}. These graphs span a diverse range of structures, forming a comprehensive basis that closely matches the energy distribution of the target domain. More visualization of synthesized graphs have been provided in Appendix~\ref{sec:vis}.

\section{Conclusion}

In this paper, we propose DSBD, a novel framework for graph domain adaptation that explicitly addresses cross-domain structural discrepancies through dual-aligned structural basis distillation. By constructing a differentiable structural substrate and jointly enforcing geometric and spectral alignment via topological moment matching and Dirichlet energy calibration, DSBD effectively captures both connectivity patterns and Laplacian-induced filtering behaviors of the target domain. Furthermore, we introduce a decoupled inference paradigm that mitigates source-specific structural bias by training a fresh model on the distilled basis, thereby improving generalization under topology shift. Rigorous theoretical analysis further validates that this distillation process effectively minimizes the target risk bound, ensuring robust cross-domain knowledge transfer. Extensive experiments on graph and image benchmarks demonstrate that DSBD consistently outperforms state-of-the-art methods across diverse domain shifts.

\bibliographystyle{plain}
\bibliography{reference}

@String { ACMMM        = {Proceedings of the ACM International Conference on Multimedia} }

@String { CVPR         = {Proceedings of the IEEE/CVF Conference on Computer Vision and Pattern Recognition} }

@String { ECCV         = {Proceedings of the European Conference on Computer Vision.} }

@String { ICLR         = {Proceedings of the International Conference on Learning Representations} }

@String { ICML         = {Proceedings of the International Conference on Machine Learning} }

@String { NIPS         = {Proceedings of the Conference on Neural Information Processing Systems} }

@String { AAAI         = {Proceedings of the AAAI Conference on Artificial Intelligence}}

@String { IJCAI         = {Proceedings of the International Joint Conference on Artificial Intelligence}}

@String { WWW        = {Proceedings of the ACM Web Conference}}

@String { CIKM        = {Proceedings of the International Conference on Information and Knowledge Management}}

@String { KDD        = {Proceedings of the International ACM SIGKDD Conference on Knowledge Discovery \& Data Mining}}

@article{yin2025dream,
  title={Dream: a dual variational framework for unsupervised graph domain adaptation},
  author={Yin, Nan and Shen, Li and Wang, Mengzhu and Liu, Xinwang and Chen, Chong and Hua, Xian-Sheng},
  journal={IEEE Transactions on Pattern Analysis and Machine Intelligence},
  year={2025},
  publisher={IEEE}
}

@inproceedings{yin2022deal,
  title={Deal: An unsupervised domain adaptive framework for graph-level classification},
  author={Yin, Nan and Shen, Li and Li, Baopu and Wang, Mengzhu and Luo, Xiao and Chen, Chong and Luo, Zhigang and Hua, Xian-Sheng},
  booktitle=ACMMM,
  pages={3470--3479},
  year={2022}
}

@inproceedings{yin2023coco,
  title={Coco: A coupled contrastive framework for unsupervised domain adaptive graph classification},
  author={Yin, Nan and Shen, Li and Wang, Mengzhu and Lan, Long and Ma, Zeyu and Chen, Chong and Hua, Xian-Sheng and Luo, Xiao},
  booktitle=ICML,
  pages={40040--40053},
  year={2023},
  organization={PMLR}
}

@inproceedings{pang2023sa,
  title={Sa-gda: Spectral augmentation for graph domain adaptation},
  author={Pang, Jinhui and Wang, Zixuan and Tang, Jiliang and Xiao, Mingyan and Yin, Nan},
  booktitle=ACMMM,
  pages={309--318},
  year={2023}
}

@article{wang2024degree,
  title={Degree-Conscious Spiking Graph for Cross-Domain Adaptation},
  author={Wang, Yingxu and Wang, Mengzhu and Su, Houcheng and Yin, Nan and Yao, Quanming and Kwok, James},
  journal={arXiv preprint arXiv:2410.06883},
  year={2024}
}

@article{wang2026riemannian,
  title={Riemannian Flow Matching for Disentangled Graph Domain Adaptation},
  author={Wang, Yingxu and Liu, Xinwang and Wang, Mengzhu and Gao, Siyang and Yin, Nan},
  journal={arXiv preprint arXiv:2602.00656},
  year={2026}
}

@article{ben2010theory,
  title={A theory of learning from different domains},
  author={Ben-David, Shai and Blitzer, John and Crammer, Koby and Kulesza, Alex and Pereira, Fernando and Vaughan, Jennifer Wortman},
  journal={Machine learning},
  volume={79},
  number={1},
  pages={151--175},
  year={2010},
  publisher={Springer}
}

@article{sriperumbudur2010hilbert,
  title={Hilbert space embeddings and metrics on probability measures},
  author={Sriperumbudur, Bharath K and Gretton, Arthur and Fukumizu, Kenji and Sch{\"o}lkopf, Bernhard and Lanckriet, Gert RG},
  journal={The Journal of Machine Learning Research},
  volume={11},
  pages={1517--1561},
  year={2010},
  publisher={JMLR. org}
}

@inproceedings{wu2020unsupervised,
  title={Unsupervised domain adaptive graph convolutional networks},
  author={Wu, Man and Pan, Shirui and Zhou, Chuan and Chang, Xiaojun and Zhu, Xingquan},
  booktitle=WWW,
  pages={1457--1467},
  year={2020}
}

@inproceedings{qiao2023semi,
  title={Semi-supervised domain adaptation in graph transfer learning},
  author={Qiao, Ziyue and Luo, Xiao and Xiao, Meng and Dong, Hao and Zhou, Yuanchun and Xiong, Hui},
  booktitle=IJCAI,
  pages={2279--2287},
  year={2023}
}

@inproceedings{liu2023structural,
  title={Structural re-weighting improves graph domain adaptation},
  author={Liu, Shikun and Li, Tianchun and Feng, Yongbin and Tran, Nhan and Zhao, Han and Qiu, Qiang and Li, Pan},
  booktitle=ICML,
  pages={21778--21793},
  year={2023},
  organization={PMLR}
}

@article{liu2024rethinking,
  title={Rethinking Propagation for Unsupervised Graph Domain Adaptation},
  author={Liu, Meihan and Fang, Zeyu and Zhang, Zhen and Gu, Ming and Zhou, Sheng and Wang, Xin and Bu, Jiajun},
  journal=AAAI,
  year={2024},
  pages={13963-13971}
}

@article{liu2024pairwise,
  title={Pairwise Alignment Improves Graph Domain Adaptation},
  author={Liu, Shikun and Zou, Deyu and Zhao, Han and Li, Pan},
  journal= ICML,
  year= {2024}
}

@article{fang2025benefits,
  title={On the benefits of attribute-driven graph domain adaptation},
  author={Fang, Ruiyi and Li, Bingheng and Kang, Zhao and Zeng, Qiuhao and Dashtbayaz, Nima Hosseini and Pu, Ruizhi and Wang, Boyu and Ling, Charles},
  journal={arXiv preprint arXiv:2502.06808},
  year={2025}
}

@inproceedings{chen2025smoothness,
  title={Smoothness really matters: A simple yet effective approach for unsupervised graph domain adaptation},
  author={Chen, Wei and Ye, Guo and Wang, Yakun and Zhang, Zhao and Zhang, Libang and Wang, Daixin and Zhang, Zhiqiang and Zhuang, Fuzhen},
  booktitle=AAAI,
  volume={39},
  pages={15875--15883},
  year={2025}
}

@article{wu2024graph,
  title={Graph learning under distribution shifts: A comprehensive survey on domain adaptation, out-of-distribution, and continual learning},
  author={Wu, Man and Zheng, Xin and Zhang, Qin and Shen, Xiao and Luo, Xiong and Zhu, Xingquan and Pan, Shirui},
  journal={arXiv preprint arXiv:2402.16374},
  year={2024}
}

@article{cai2024graph,
  title={Graph domain adaptation: A generative view},
  author={Cai, Ruichu and Wu, Fengzhu and Li, Zijian and Wei, Pengfei and Yi, Lingling and Zhang, Kun},
  journal={ACM Transactions on Knowledge Discovery from Data},
  volume={18},
  number={3},
  pages={1--24},
  year={2024},
  publisher={ACM New York, NY}
}

@inproceedings{ding2018graph,
  title={Graph adaptive knowledge transfer for unsupervised domain adaptation},
  author={Ding, Zhengming and Li, Sheng and Shao, Ming and Fu, Yun},
  booktitle=ECCV,
  pages={37--52},
  year={2018}
}

@article{dai2022graph,
  title={Graph transfer learning via adversarial domain adaptation with graph convolution},
  author={Dai, Quanyu and Wu, Xiao-Ming and Xiao, Jiaren and Shen, Xiao and Wang, Dan},
  journal={IEEE Transactions on Knowledge and Data Engineering},
  volume={35},
  number={5},
  pages={4908--4922},
  year={2022},
  publisher={IEEE}
}

@inproceedings{ma2019gcan,
  title={Gcan: Graph convolutional adversarial network for unsupervised domain adaptation},
  author={Ma, Xinhong and Zhang, Tianzhu and Xu, Changsheng},
  booktitle=CVPR,
  pages={8266--8276},
  year={2019}
}

@article{xiao2023spa,
  title={SPA: A graph spectral alignment perspective for domain adaptation},
  author={Xiao, Zhiqing and Wang, Haobo and Jin, Ying and Feng, Lei and Chen, Gang and Huang, Fei and Zhao, Junbo},
  journal=NIPS,
  volume={36},
  pages={37252--37272},
  year={2023}
}

@inproceedings{yang2025disentangled,
  title={Disentangled graph spectral domain adaptation},
  author={Yang, Liang and Chen, Xin and Zhuo, Jiaming and Jin, Di and Wang, Chuan and Cao, Xiaochun and Wang, Zhen and Guo, Yuanfang},
  booktitle=ICML,
  year={2025}
}

@article{xia2021graph,
  title={Graph learning: A survey},
  author={Xia, Feng and Sun, Ke and Yu, Shuo and Aziz, Abdul and Wan, Liangtian and Pan, Shirui and Liu, Huan},
  journal={IEEE Transactions on Artificial Intelligence},
  volume={2},
  number={2},
  pages={109--127},
  year={2021},
  publisher={IEEE}
}

@article{khoshraftar2024survey,
  title={A survey on graph representation learning methods},
  author={Khoshraftar, Shima and An, Aijun},
  journal={ACM Transactions on Intelligent Systems and Technology},
  volume={15},
  number={1},
  pages={1--55},
  year={2024},
  publisher={ACM New York, NY}
}

@article{zhang2020deep,
  title={Deep learning on graphs: A survey},
  author={Zhang, Ziwei and Cui, Peng and Zhu, Wenwu},
  journal={IEEE Transactions on Knowledge and Data Engineering},
  volume={34},
  number={1},
  pages={249--270},
  year={2020},
  publisher={IEEE}
}

@article{liu2022graph,
  title={Graph self-supervised learning: A survey},
  author={Liu, Yixin and Jin, Ming and Pan, Shirui and Zhou, Chuan and Zheng, Yu and Xia, Feng and Yu, Philip S},
  journal={IEEE transactions on knowledge and data engineering},
  volume={35},
  number={6},
  pages={5879--5900},
  year={2022},
  publisher={IEEE}
}

@article{wu2020comprehensive,
  title={A comprehensive survey on graph neural networks},
  author={Wu, Zonghan and Pan, Shirui and Chen, Fengwen and Long, Guodong and Zhang, Chengqi and Yu, Philip S},
  journal={IEEE transactions on neural networks and learning systems},
  volume={32},
  number={1},
  pages={4--24},
  year={2020},
  publisher={IEEE}
}

@inproceedings{you2023graph,
  title={Graph domain adaptation via theory-grounded spectral regularization},
  author={You, Yuning and Chen, Tianlong and Wang, Zhangyang and Shen, Yang},
  booktitle=ICLR,
  year={2023}
}

@inproceedings{shi2023improving,
  title={Improving graph domain adaptation with network hierarchy},
  author={Shi, Boshen and Wang, Yongqing and Guo, Fangda and Shao, Jiangli and Shen, Huawei and Cheng, Xueqi},
  booktitle=CIKM,
  pages={2249--2258},
  year={2023}
}

@article{xiao2025spa++,
  title={SPA++: Generalized Graph Spectral Alignment for Versatile Domain Adaptation},
  author={Xiao, Zhiqing and Wang, Haobo and Lu, Xu and Ye, Wentao and Chen, Gang and Zhao, Junbo},
  journal={arXiv preprint arXiv:2508.05182},
  year={2025}
}

@article{liu2024revisiting,
  title={Revisiting, benchmarking and understanding unsupervised graph domain adaptation},
  author={Liu, Meihan and Zhang, Zhen and Tang, Jiachen and Bu, Jiajun and He, Bingsheng and Zhou, Sheng},
  journal=NIPS,
  volume={37},
  pages={89408--89436},
  year={2024}
}

@article{joshi2022representation,
  title={On representation knowledge distillation for graph neural networks},
  author={Joshi, Chaitanya K and Liu, Fayao and Xun, Xu and Lin, Jie and Foo, Chuan Sheng},
  journal={IEEE transactions on neural networks and learning systems},
  volume={35},
  number={4},
  pages={4656--4667},
  year={2022},
  publisher={IEEE}
}

@inproceedings{yao2024mugsi,
  title={Mugsi: Distilling gnns with multi-granularity structural information for graph classification},
  author={Yao, Tianjun and Sun, Jiaqi and Cao, Defu and Zhang, Kun and Chen, Guangyi},
  booktitle=WWW,
  pages={709--720},
  year={2024}
}

@article{wu2024teacher,
  title={A teacher-free graph knowledge distillation framework with dual self-distillation},
  author={Wu, Lirong and Lin, Haitao and Gao, Zhangyang and Zhao, Guojiang and Li, Stan Z},
  journal={IEEE Transactions on Knowledge and Data Engineering},
  volume={36},
  number={9},
  pages={4375--4385},
  year={2024},
  publisher={IEEE}
}

@inproceedings{hong2024label,
  title={Label attentive distillation for GNN-based graph classification},
  author={Hong, Xiaobin and Li, Wenzhong and Wang, Chaoqun and Lin, Mingkai and Lu, Sanglu},
  booktitle=AAAI,
  volume={38},
  number={8},
  pages={8499--8507},
  year={2024}
}

@inproceedings{lu2024adagmlp,
  title={Adagmlp: Adaboosting gnn-to-mlp knowledge distillation},
  author={Lu, Weigang and Guan, Ziyu and Zhao, Wei and Yang, Yaming},
  booktitle=KDD,
  pages={2060--2071},
  year={2024}
}

@inproceedings{lai2025simple,
  title={Simple yet effective graph distillation via clustering},
  author={Lai, Yurui and Zhang, Taiyan and Yang, Renchi},
  booktitle=KDD,
  pages={1229--1240},
  year={2025}
}

@article{tian2025knowledge,
  title={Knowledge distillation on graphs: A survey},
  author={Tian, Yijun and Pei, Shichao and Zhang, Xiangliang and Zhang, Chuxu and Chawla, Nitesh V},
  journal={ACM Computing Surveys},
  volume={57},
  number={8},
  pages={1--16},
  year={2025},
  publisher={ACM New York, NY}
}

@article{gao2025graph,
  title={Graph condensation: A survey},
  author={Gao, Xinyi and Yu, Junliang and Chen, Tong and Ye, Guanhua and Zhang, Wentao and Yin, Hongzhi},
  journal={IEEE Transactions on Knowledge and Data Engineering},
  volume={37},
  number={4},
  pages={1819--1837},
  year={2025},
  publisher={IEEE}
}

@inproceedings{wang2024self,
  title={Self-supervised learning for graph dataset condensation},
  author={Wang, Yuxiang and Yan, Xiao and Jin, Shiyu and Huang, Hao and Xu, Quanqing and Zhang, Qingchen and Du, Bo and Jiang, Jiawei},
  booktitle=KDD,
  pages={3289--3298},
  year={2024}
}

@inproceedings{jin2022condensing,
  title={Condensing graphs via one-step gradient matching},
  author={Jin, Wei and Tang, Xianfeng and Jiang, Haoming and Li, Zheng and Zhang, Danqing and Tang, Jiliang and Yin, Bing},
  booktitle=KDD,
  pages={720--730},
  year={2022}
}

@inproceedings{yin2025coupling,
  title={Coupling category alignment for graph domain adaptation},
  author={Yin, Nan and Teng, Xiao and Cao, Zhiguang and Wang, Mengzhu},
  booktitle=IJCAI,
  pages={3561--3569},
  year={2025}
}

@article{chen2026learning,
  title={Learning Adaptive Distribution Alignment with Neural Characteristic Function for Graph Domain Adaptation},
  author={Chen, Wei and Guo, Xingyu and Li, Shuang and Zhang, Zhao and Zhong, Yan and Zhuang, Fuzhen and others},
  journal={arXiv preprint arXiv:2602.10489},
  year={2026}
}

@article{chen2026adaptive,
  title={Learning Structure-Semantic Evolution Trajectories for Graph Domain Adaptation},
  author={Chen, Wei and Guo, Xingyu and Li, Shuang and Zhong, Yan and Zhang, Zhao and Zhuang, Fuzhen and Liu, Hongrui and Zhang, Libang and Ye, Guo and He, Huimei},
  journal={arXiv preprint arXiv:2602.10506},
  year={2026}
}

@inproceedings{huo2023t2,
  title={T2-gnn: Graph neural networks for graphs with incomplete features and structure via teacher-student distillation},
  author={Huo, Cuiying and Jin, Di and Li, Yawen and He, Dongxiao and Yang, Yu-Bin and Wu, Lingfei},
  booktitle={Proceedings of the AAAI Conference on Artificial Intelligence},
  volume={37},
  number={4},
  pages={4339--4346},
  year={2023}
}

@article{liu2023graph,
  title={Graph distillation with eigenbasis matching},
  author={Liu, Yang and Bo, Deyu and Shi, Chuan},
  journal={arXiv preprint arXiv:2310.09202},
  year={2023}
}

@article{zhang2021graph,
  title={Graph-less neural networks: Teaching old mlps new tricks via distillation},
  author={Zhang, Shichang and Liu, Yozen and Sun, Yizhou and Shah, Neil},
  journal={arXiv preprint arXiv:2110.08727},
  year={2021}
}

@article{lei2023comprehensive,
  title={A comprehensive survey of dataset distillation},
  author={Lei, Shiye and Tao, Dacheng},
  journal={IEEE Transactions on Pattern Analysis and Machine Intelligence},
  volume={46},
  number={1},
  pages={17--32},
  year={2023},
  publisher={IEEE}
}

@article{muller1997integral,
  title={Integral probability metrics and their generating classes of functions},
  author={M{\"u}ller, Alfred},
  journal={Advances in applied probability},
  volume={29},
  number={2},
  pages={429--443},
  year={1997},
  publisher={Cambridge University Press}
}

@inproceedings{shou2025graph,
  title={Graph domain adaptation with dual-branch encoder and two-level alignment for whole slide image-based survival prediction},
  author={Shou, Yuntao and Cao, Xiangyong and Yan, Peiqiang and Hui, Qiao and Zhao, Qian and Meng, Deyu},
  booktitle=CVPR,
  pages={19925--19935},
  year={2025}
}

@inproceedings{ngo2025higda,
  title={Higda: Hierarchical graph of nodes to learn local-to-global topology for semi-supervised domain adaptation},
  author={Ngo, Ba Hung and Bui, Doanh C and Do-Tran, Nhat-Tuong and Choi, Tae Jong},
  booktitle=AAAI,
  volume={39},
  number={6},
  pages={6191--6199},
  year={2025}
}

@article{valiant1984theory,
  title={A theory of the learnable},
  author={Valiant, Leslie G},
  journal={Communications of the ACM},
  volume={27},
  number={11},
  pages={1134--1142},
  year={1984},
  publisher={ACM New York, NY, USA}
}

@article{bartlett2002rademacher,
  title={Rademacher and gaussian complexities: Risk bounds and structural results},
  author={Bartlett, Peter L and Mendelson, Shahar},
  journal={Journal of machine learning research},
  volume={3},
  number={Nov},
  pages={463--482},
  year={2002}
}

@article{lecun2002gradient,
  title={Gradient-based learning applied to document recognition},
  author={LeCun, Yann and Bottou, L{\'e}on and Bengio, Yoshua and Haffner, Patrick},
  journal={Proceedings of the IEEE},
  volume={86},
  number={11},
  pages={2278--2324},
  year={2002},
  publisher={Ieee}
}

@article{krizhevsky2009learning,
  title={Learning multiple layers of features from tiny images},
  author={Krizhevsky, Alex and Hinton, Geoffrey and others},
  year={2009},
  publisher={Toronto, ON, Canada}
}

@article{dobson2003distinguishing,
  title={Distinguishing enzyme structures from non-enzymes without alignments},
  author={Dobson, Paul D and Doig, Andrew J},
  journal={Journal of molecular biology},
  volume={330},
  number={4},
  pages={771--783},
  year={2003},
  publisher={Elsevier}
}

@article{kazius2005derivation,
  title={Derivation and validation of toxicophores for mutagenicity prediction},
  author={Kazius, Jeroen and McGuire, Ross and Bursi, Roberta},
  journal={Journal of medicinal chemistry},
  volume={48},
  number={1},
  pages={312--320},
  year={2005},
  publisher={ACS Publications}
}

@article{wale2008comparison,
  title={Comparison of descriptor spaces for chemical compound retrieval and classification},
  author={Wale, Nikil and Watson, Ian A and Karypis, George},
  journal={Knowledge and Information Systems},
  volume={14},
  pages={347--375},
  year={2008},
  publisher={Springer}
}

@inproceedings{orsini2015graph,
  title={Graph invariant kernels},
  author={Orsini, Francesco and Frasconi, Paolo and De Raedt, Luc},
  booktitle=IJCAI,
  year={2015}
}

@article{hu2021ogblsc,
  title={OGB-LSC: A Large-Scale Challenge for Machine Learning on Graphs},
  author={Hu, Weihua and Fey, Matthias and Ren, Hongyu and Nakata, Maho and Dong, Yuxiao and Leskovec, Jure},
  journal={arXiv preprint arXiv:2103.09430},
  year={2021}
}

@article{wu2022discovering,
  title={Discovering invariant rationales for graph neural networks},
  author={Wu, Ying-Xin and Wang, Xiang and Zhang, An and He, Xiangnan and Chua, Tat-Seng},
  journal={arXiv preprint arXiv:2201.12872},
  year={2022}
}

@article{xu2018how,
  title={How powerful are graph neural networks?},
  author={Xu, Keyulu and Hu, Weihua and Leskovec, Jure and Jegelka, Stefanie},
  journal={arXiv preprint arXiv:1810.00826},
  year={2018}
}

@article{dwivedi2023benchmarking,
  title={Benchmarking graph neural networks},
  author={Dwivedi, Vijay Prakash and Joshi, Chaitanya K and Luu, Anh Tuan and Laurent, Thomas and Bengio, Yoshua and Bresson, Xavier},
  journal={Journal of Machine Learning Research},
  volume={24},
  number={43},
  pages={1--48},
  year={2023}
}

@inproceedings{sun2022modelnet40,
  title={Modelnet40-c: A robustness benchmark for 3d point cloud recognition under corruption},
  author={Sun, Jiachen and Zhang, Qingzhao and Kailkhura, Bhavya and Yu, Zhiding and Xiao, Chaowei and Mao, Z Morley},
  booktitle={ICLR 2022 workshop on socially responsible machine learning},
  volume={7},
  year={2022}
}

@article{zhang2020multimodal,
  title={Multimodal disentangled domain adaption for social media event rumor detection},
  author={Zhang, Huaiwen and Qian, Shengsheng and Fang, Quan and Xu, Changsheng},
  journal={IEEE Transactions on Multimedia},
  volume={23},
  pages={4441--4454},
  year={2020},
  publisher={IEEE}
}

@inproceedings{shang2024domain,
  title={A domain adaptive graph learning framework to early detection of emergent healthcare misinformation on social media},
  author={Shang, Lanyu and Zhang, Yang and Yue, Zhenrui and Choi, YeonJung and Zeng, Huimin and Wang, Dong},
  booktitle={Proceedings of the International AAAI Conference on Web and Social Media},
  volume={18},
  pages={1408--1421},
  year={2024}
}

@article{fang2025homophily,
  title={Homophily enhanced graph domain adaptation},
  author={Fang, Ruiyi and Li, Bingheng and Zhao, Jingyu and Pu, Ruizhi and Zeng, Qiuhao and Xu, Gezheng and Ling, Charles and Wang, Boyu},
  journal={arXiv preprint arXiv:2505.20089},
  year={2025}
}

@article{wang2023correntropy,
  title={Correntropy-induced Wasserstein GCN: Learning graph embedding via domain adaptation},
  author={Wang, Wei and Zhang, Gaowei and Han, Hongyong and Zhang, Chi},
  journal={IEEE Transactions on Image Processing},
  volume={32},
  pages={3980--3993},
  year={2023},
  publisher={IEEE}
}

@article{wang2025protomol,
  title={ProtoMol: enhancing molecular property prediction via prototype-guided multimodal learning},
  author={Wang, Yingxu and Zhang, Kunyu and Huang, Jiaxin and Yin, Nan and Liu, Siwei and Segal, Eran},
  journal={Briefings in Bioinformatics},
  volume={26},
  number={6},
  pages={bbaf629},
  year={2025},
  publisher={Oxford University Press}
}

@article{wang2026sgac,
  title={SGAC: a graph neural network framework for imbalanced and structure-aware AMP classification},
  author={Wang, Yingxu and Liang, Victor and Yin, Nan and Liu, Siwei and Segal, Eran},
  journal={Briefings in Bioinformatics},
  volume={27},
  number={1},
  pages={bbag038},
  year={2026},
  publisher={Oxford University Press}
}

@inproceedings{wang2026nested,
  title={Nested graph pseudo-label refinement for noisy label domain adaptation learning},
  author={Wang, Yingxu and Wang, Mengzhu and Huang, Zhichao and Liu, Suyu and Yin, Nan},
  booktitle={Proceedings of the AAAI Conference on Artificial Intelligence},
  volume={40},
  number={31},
  pages={26697--26705},
  year={2026}
}

@article{wang2025dusego,
  title={Dusego: Dual second-order equivariant graph ordinary differential equation},
  author={Wang, Yingxu and Yin, Nan and Xiao, Mingyan and Yi, Xinhao and Liu, Siwei and Liang, Shangsong},
  journal={ACM Transactions on Knowledge Discovery from Data},
  volume={20},
  number={1},
  pages={1--18},
  year={2025},
  publisher={ACM New York, NY}
}

@article{wang2026usbd,
  title={USBD: Universal Structural Basis Distillation for Source-Free Graph Domain Adaptation},
  author={Wang, Yingxu and Zhang, Kunyu and Wang, Mengzhu and Gao, Siyang and Yin, Nan},
  journal={arXiv preprint arXiv:2602.08431},
  year={2026}
}

\appendix
\appendix

\section{Proof of Theorem \ref{thm:generalization_bound}}
\label{app:proof_theorem1}

\textit{Theorem \ref{thm:generalization_bound} (Generalization Bound via Dual-Aligned Structural Basis) Let $f$ denote the graph encoder and $h$ the classifier. Let 
$\mathcal{R}_{\mathcal{D}_T}(h \circ f)$ be the expected risk on the target domain $\mathcal{D}_T$, and 
$\hat{\mathcal{R}}_{\mathcal{S}_{\mathrm{syn}}}(h \circ f)$ the empirical risk evaluated on the synthesized structural basis $\mathcal{S}_{\mathrm{syn}}$. 
Assume that the loss function $\ell$ is $L_{\ell}$-Lipschitz continuous and that the encoder $f$ lies in a bounded structural RKHS. 
Then, with probability at least $1-\delta$, the target risk is bounded as:
\begin{align}
\mathcal{R}_{\mathcal{D}_T}(h \circ f) 
\leq\;& \hat{\mathcal{R}}_{\mathcal{S}_{\mathrm{syn}}}(h \circ f) \nonumber \\
& + C_{\mathrm{geo}} 
\left\| 
\mathbb{E}_{G \sim \mathcal{D}_T}[\mathcal{M}(G)] 
- 
\mathbb{E}_{G_k \sim \mathcal{S}_{\mathrm{syn}}}[\mathcal{M}(G_k)] 
\right\|_2 \nonumber \\
& + C_{\mathrm{spec}} 
\Big| 
\mathbb{E}_{G \sim \mathcal{D}_T}[\Omega(G)] 
- 
\mathbb{E}_{G_k \sim \mathcal{S}_{\mathrm{syn}}}[\Omega(G_k)] 
\Big| 
+ \lambda(\delta),
\end{align}
where $\mathcal{M}$ and $\Omega$ denote the geometric moments and Dirichlet energy, respectively. $C_{\mathrm{geo}}, C_{\mathrm{spec}} > 0$ are capacity-dependent constants, and $\lambda(\delta)$ absorbs the optimal joint risk and the finite-sample generalization complexity.}

\begin{proof}

Let $P_T$ denote the target distribution and $P_{\mathrm{syn}}$ the empirical distribution induced by the synthesized basis $\mathcal{S}_{\mathrm{syn}}$. Let $F(G,y) = \ell(h(f(G)),y)$ denote the composite loss function, and let $\mathcal{F}$ be the corresponding function class. According to the foundational domain adaptation theory based on Integral Probability Metrics (IPM) \cite{ben2010theory, sriperumbudur2010hilbert}, the expected target risk $\mathcal{R}_{\mathcal{D}_T}(h \circ f)$ is bounded by the source distribution risk $\mathcal{R}_{P_{\mathrm{syn}}}(h \circ f)$, the domain discrepancy, and the optimal joint risk $\lambda_{\mathrm{ideal}}$:
\begin{equation}
    \mathcal{R}_{\mathcal{D}_T}(h \circ f) \leq \mathcal{R}_{P_{\mathrm{syn}}}(h \circ f) + \mathrm{IPM}_{\mathcal{F}}(P_T, P_{\mathrm{syn}}) + \lambda_{\mathrm{ideal}},
    \label{eq:ipm_base}
\end{equation}
where the IPM is defined as the supremum of the mean discrepancy over the function class $\mathcal{F}$:
\begin{equation}
    \mathrm{IPM}_{\mathcal{F}}(P_T, P_{\mathrm{syn}}) = \sup_{F \in \mathcal{F}} \left| \mathbb{E}_{G \sim P_T}[F(G)] - \mathbb{E}_{G_k \sim P_{\mathrm{syn}}}[F(G_k)] \right|.
    \label{eq:ipm_def}
\end{equation}

By assumption, the encoder $f$ lies in a bounded structural RKHS, denoted as $\mathcal{H}_{\psi}$, equipped with the structural feature mapping:
\begin{equation}
    \psi(G) = \begin{bmatrix} \mathcal{M}(G) \\ \Omega(G) \end{bmatrix} \in \mathbb{R}^{d_{\mathrm{geo}} + 1}.
\end{equation}
Given that the loss function $\ell$ is $L_{\ell}$-Lipschitz, the composite evaluation function $F \in \mathcal{F}$ is bounded and can be expressed via the inner product in $\mathcal{H}_{\psi}$ according to the Riesz Representation Theorem:
\begin{equation}
    F(G) = \langle \mathbf{w}_F, \psi(G) \rangle_{\mathcal{H}_{\psi}}, \quad \text{s.t.} \;\; \|\mathbf{w}_F\|_{\mathcal{H}_{\psi}} \leq C_W,
\end{equation}
where $C_W > 0$ is the uniform capacity constant determined by $L_{\ell}$ and the bounded norm of $f$. Substituting this inner product representation into Eq.~\ref{eq:ipm_def}, the IPM reduces to the Maximum Mean Discrepancy (MMD):
\begin{align}
    \mathrm{IPM}_{\mathcal{F}}(P_T, P_{\mathrm{syn}}) 
    &\leq \sup_{\|\mathbf{w}_F\| \leq C_W} \left| \left\langle \mathbf{w}_F, \, \mathbb{E}_{P_T}[\psi(G)] - \mathbb{E}_{P_{\mathrm{syn}}}[\psi(G_k)] \right\rangle_{\mathcal{H}_{\psi}} \right|.
\end{align}
Applying the Cauchy-Schwarz inequality, we extract the norm of the expected structural feature differences:
\begin{align}
    \mathrm{IPM}_{\mathcal{F}}(P_T, P_{\mathrm{syn}}) 
    &\leq C_W \left\| \mathbb{E}_{G \sim P_T}[\psi(G)] - \mathbb{E}_{G_k \sim P_{\mathrm{syn}}}[\psi(G_k)] \right\|_2 \nonumber \\
    &= C_W \left\| \begin{bmatrix} \mathbb{E}_{P_T}[\mathcal{M}(G)] - \mathbb{E}_{P_{\mathrm{syn}}}[\mathcal{M}(G_k)] \\ \mathbb{E}_{P_T}[\Omega(G)] - \mathbb{E}_{P_{\mathrm{syn}}}[\Omega(G_k)] \end{bmatrix} \right\|_2.
    \label{eq:cauchy_schwarz}
\end{align}

We decouple the structural mapping $\psi(G)$ into the geometric moment component $\mathcal{M}(G)$ and the spectral component $\Omega(G)$. By invoking the Triangle Inequality on Eq.~\ref{eq:cauchy_schwarz}, we obtain:
\begin{align}
    \mathrm{IPM}_{\mathcal{F}}(P_T, P_{\mathrm{syn}}) 
    \leq\;& C_{\mathrm{geo}} \left\| \mathbb{E}_{G \sim \mathcal{D}_T}[\mathcal{M}(G)] - \mathbb{E}_{G_k \sim \mathcal{S}_{\mathrm{syn}}}[\mathcal{M}(G_k)] \right\|_2 \nonumber \\
    & + C_{\mathrm{spec}} \Big| \mathbb{E}_{G \sim \mathcal{D}_T}[\Omega(G)] - \mathbb{E}_{G_k \sim \mathcal{S}_{\mathrm{syn}}}[\Omega(G_k)] \Big|,
    \label{eq:decoupled_mmd}
\end{align}
where $C_{\mathrm{geo}}$ and $C_{\mathrm{spec}}$ are capacity coefficients scaled from $C_W$ to account for the respective subspace dimensionalities.

Next, we establish the relation between the expected risk $\mathcal{R}_{P_{\mathrm{syn}}}(h \circ f)$ and the empirical risk $\hat{\mathcal{R}}_{\mathcal{S}_{\mathrm{syn}}}(h \circ f)$. Let $K = |\mathcal{S}_{\mathrm{syn}}|$. By applying McDiarmid's Inequality, for any $\delta \in (0,1)$, with probability at least $1-\delta$, we have:
\begin{equation}
    \mathcal{R}_{P_{\mathrm{syn}}}(h \circ f) \leq \hat{\mathcal{R}}_{\mathcal{S}_{\mathrm{syn}}}(h \circ f) + \mathcal{O}\left( \sqrt{\frac{\log(1/\delta)}{K}} \right).
    \label{eq:concentration}
\end{equation}

Finally, substituting the decoupled MMD bound in Eq.~\ref{eq:decoupled_mmd} and the finite-sample empirical bound in Eq.~\ref{eq:concentration} back into the base inequality in Eq.~\ref{eq:ipm_base}, we arrive at:
\begin{align}
\mathcal{R}_{\mathcal{D}_T}(h \circ f) 
\leq\;& \hat{\mathcal{R}}_{\mathcal{S}_{\mathrm{syn}}}(h \circ f) \nonumber \\
& + C_{\mathrm{geo}} \left\| \mathbb{E}_{G \sim \mathcal{D}_T}[\mathcal{M}(G)] - \mathbb{E}_{G_k \sim \mathcal{S}_{\mathrm{syn}}}[\mathcal{M}(G_k)] \right\|_2 \nonumber \\
& + C_{\mathrm{spec}} \Big| \mathbb{E}_{G \sim \mathcal{D}_T}[\Omega(G)] - \mathbb{E}_{G_k \sim \mathcal{S}_{\mathrm{syn}}}[\Omega(G_k)] \Big| \nonumber + \lambda(\delta),
\end{align}
where $\lambda(\delta) \triangleq \lambda_{\mathrm{ideal}} + \mathcal{O}(\sqrt{\log(1/\delta)})$ is the residual term, which absorbs both the optimal joint risk and the finite-sample generalization complexity.
\end{proof}

\section{Proof of Theorem \ref{thm:isolation_bound}}
\label{app:proof_isolation}

\textit{Theorem \ref{thm:isolation_bound} (Generalization Benefit of Structural Bias Isolation) Let $\mathcal{F}$ be the hypothesis class of the GNN. Let $\theta^*$ be the proxy model derived from the joint bi-level optimization trajectory, and let $\phi^*$ be the fresh model trained exclusively on the fixed, converged basis $\mathcal{S}_{\mathrm{syn}}^*$. 
Due to optimization co-adaptation, the effective search space of the proxy model is bounded by the joint empirical Rademacher complexity $\mathfrak{R}(\mathcal{F} \times \mathcal{S}_{\mathrm{syn}})$. In contrast, the fresh model is bounded by the conditional complexity $\mathfrak{R}(\mathcal{F} \mid \mathcal{S}_{\mathrm{syn}}^*)$. 
With probability at least $1-\delta$, the generalization bounds satisfy a strict dominance relation:
\begin{equation}
    \text{Bound}(\phi^*) \leq \text{Bound}(\theta^*) - \Delta_{\mathrm{traj}},
\end{equation}
where $\Delta_{\mathrm{traj}} \propto \big( \mathfrak{R}(\mathcal{F} \times \mathcal{S}_{\mathrm{syn}}) - \mathfrak{R}(\mathcal{F} \mid \mathcal{S}_{\mathrm{syn}}^*) \big) > 0$ represents the strictly positive excess risk penalty induced by trajectory memorization and structural co-adaptation.}

\begin{proof}
To rigorously establish the strict dominance relation between the generalization bounds of the proxy model $\theta^*$ and the fresh model $\phi^*$, we analyze the hypothesis complexities via the empirical Rademacher complexity framework~\cite{bartlett2002rademacher}.

\begin{figure*}[t]
\includegraphics[width=1.0\linewidth]{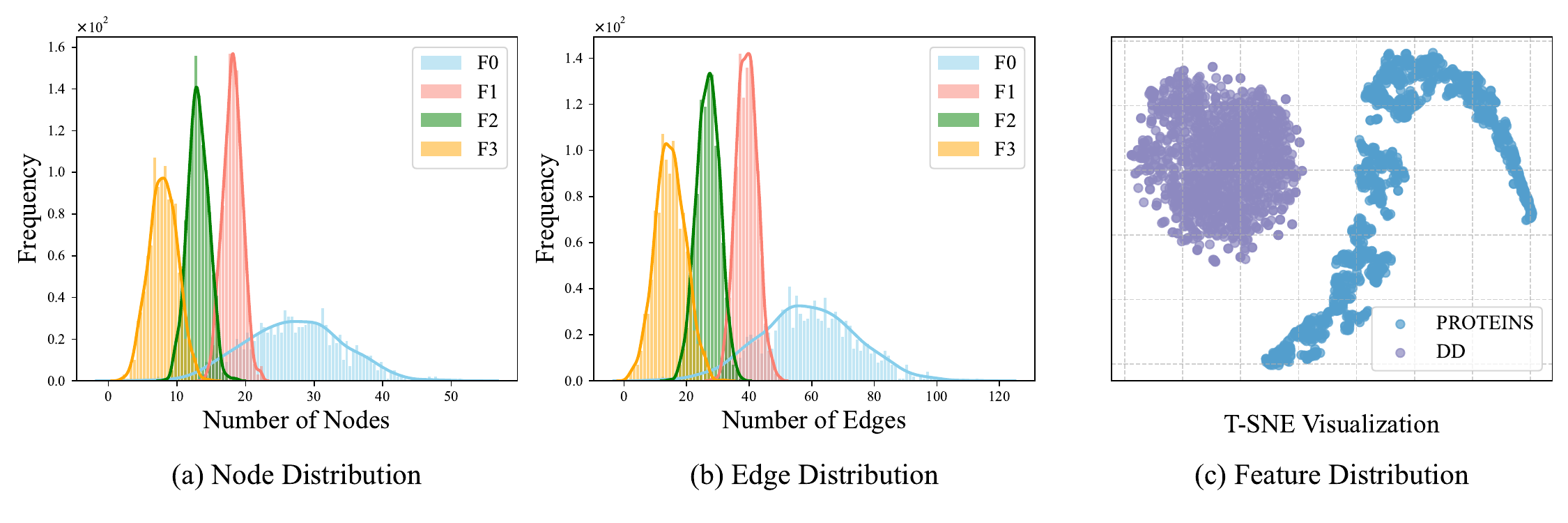}
    \caption{Visualization of domain shifts across different types. (a) Node distribution shift between sub-datasets of FRANKENSTEIN. (b) Edge distribution shift between sub-datasets of FRANKENSTEIN. (c) Feature distribution shift between PROTEINS and DD datasets.}
    \label{fig:shift}
\end{figure*}

Let $\mathcal{Z} = \mathcal{X} \times \mathcal{Y}$ be the sample space, and $m$ be the number of basis samples. The empirical Rademacher complexity of a function class $\mathcal{H}$ with respect to a sample set $S = \{z_1, \dots, z_m\}$ is defined as:
\begin{equation}
    \hat{\mathfrak{R}}_S(\mathcal{H}) = \mathbb{E}_{\boldsymbol{\sigma}} \left[ \sup_{h \in \mathcal{H}} \frac{1}{m} \sum_{i=1}^m \sigma_i h(z_i) \right],
\end{equation}
where $\sigma_i \in \{-1, +1\}$ are independent Rademacher random variables.

During the bi-level distillation phase, the proxy model $\theta^*$ is iteratively updated alongside the synthetic basis $\mathcal{S}_{\mathrm{syn}}$. Let $\mathbb{S}$ denote the continuous constraint space encompassing all feasible structural basis configurations generated during optimization. The hypothesis $\theta^*$ therefore does not optimize with respect to a fixed empirical distribution. Instead, it effectively operates over the joint hypothesis–data product space 
$\mathcal{H}_{\mathrm{joint}} = \mathcal{F} \times \mathbb{S}$, where the model parameters and structural basis co-evolve throughout training. Applying the standard PAC-learning generalization bound~\cite{valiant1984theory,bartlett2002rademacher}, with probability at least $1-\delta$, the expected target risk for the proxy model satisfies:
\begin{align}
\mathcal{R}_{\mathcal{D}_T}(\theta^*) & \leq \hat{\mathcal{R}}_{\mathcal{S}_{\mathrm{syn}}}(\theta^*) + \mathrm{Disc}_{\mathcal{F}}(\mathcal{D}_T, \mathcal{S}_{\mathrm{syn}}) \\ & + 2 \, \hat{\mathfrak{R}}_{\mathcal{S}_{\mathrm{syn}}}(\mathcal{F} \times \mathbb{S}) + 3 \sqrt{\frac{\log(2/\delta)}{2m}},
\label{eq:bound_theta_math}
\end{align}
where $\mathrm{Disc}_{\mathcal{F}}(\cdot, \cdot)$ represents the structural Maximum Mean Discrepancy bounded in Theorem~\ref{thm:generalization_bound}. 

In contrast, during the structurally calibrated inference phase (Stage 2), the basis $\mathcal{S}_{\mathrm{syn}}$ has converged to a fixed optimal set $\mathcal{S}_{\mathrm{syn}}^*$. The fresh model $\phi^*$ is trained exclusively on this static dataset. Thus, its search space is strictly confined to the standard function class $\mathcal{F}$ conditioned on $\mathcal{S}_{\mathrm{syn}}^*$. The corresponding bound is:
\begin{align}
\mathcal{R}_{\mathcal{D}_T}(\phi^*) & \leq \hat{\mathcal{R}}_{\mathcal{S}_{\mathrm{syn}}^*}(\phi^*) + \mathrm{Disc}_{\mathcal{F}}(\mathcal{D}_T, \mathcal{S}_{\mathrm{syn}}^*)\\ & + 2 \, \hat{\mathfrak{R}}_{\mathcal{S}_{\mathrm{syn}}^*}(\mathcal{F}) + 3 \sqrt{\frac{\log(2/\delta)}{2m}}.
    \label{eq:bound_phi_math}
\end{align}

By the properties of Rademacher complexity over product spaces, the joint complexity is upper-bounded by the sum of marginal complexities, but fundamentally lower-bounded by either marginal complexity:
\begin{equation}
    \hat{\mathfrak{R}}_S(\mathcal{F}) \ll \hat{\mathfrak{R}}_S(\mathcal{F}) + \hat{\mathfrak{R}}_S(\mathbb{S}) = \hat{\mathfrak{R}}_S(\mathcal{F} \oplus \mathbb{S}) \approx \hat{\mathfrak{R}}_S(\mathcal{F} \times \mathbb{S}).
    \label{eq:complexity_relation}
\end{equation}
Equivalently, in terms of the $\epsilon$-covering number $\mathcal{N}(\cdot, \epsilon)$, the joint capacity metric strictly dominates the static capacity:
\begin{equation}
    \int_0^\infty \sqrt{\log \mathcal{N}(\mathcal{F} \times \mathbb{S}, \epsilon, L_2)} \, d\epsilon > \int_0^\infty \sqrt{\log \mathcal{N}(\mathcal{F}, \epsilon, L_2)} \, d\epsilon.
\end{equation}

Assuming both optimization procedures successfully converge to comparable empirical risks, i.e., $\hat{\mathcal{R}}_{\mathcal{S}_{\mathrm{syn}}}(\theta^*) \approx \hat{\mathcal{R}}_{\mathcal{S}_{\mathrm{syn}}^*}(\phi^*)$, we subtract Eq.~\ref{eq:bound_phi_math} from Eq.~\ref{eq:bound_theta_math}. We define the strictly positive structural trajectory penalty $\Delta_{\mathrm{traj}}$ as:
\begin{equation}
    \Delta_{\mathrm{traj}} \triangleq 2 \left[ \hat{\mathfrak{R}}_{\mathcal{S}_{\mathrm{syn}}}(\mathcal{F} \times \mathbb{S}) - \hat{\mathfrak{R}}_{\mathcal{S}_{\mathrm{syn}}^*}(\mathcal{F}) \right] > 0.
\end{equation}

Consequently, the upper bounds of the two models hold the following strict relationship:
\begin{equation}
    \text{Bound}(\phi^*) \leq \text{Bound}(\theta^*) - \Delta_{\mathrm{traj}}.
\end{equation}
This mathematically guarantees that decoupling the inference optimization from the synthesis trajectory drops the co-adaptation penalty $\Delta_{\mathrm{traj}}$, thereby yielding a strictly tighter generalization bound for the fresh model $\phi^*$.
\end{proof}
\begin{table*}[h]
    \centering
    \caption{Statistics of the experimental datasets.}
    \begin{tabular}{lcccc}
        \toprule
        Datasets      & Graphs & Avg. Nodes & Avg. Edges & Classes \\
        \midrule
        PROTEINS  & 1,113   & 39.1      & 72.8      & 2       \\
        NCI1  & 4,110   & 29.87     & 32.30     & 2       \\
        Mutagenicity  & 4,337   & 30.32      & 30.77      & 2       \\
        FRANKENSTEIN  & 4,337   & 16.9       & 17.88      & 2       \\
        ogbg-molhiv & 41,127 & 25.5 & 27.5 & 2 \\
        CIFAR10 & 60,000 & 117.6 & 941.2 & 10 \\

        MNIST & 70,000 & 70.6 & 564.5 & 10 \\
        \midrule
        Spurious-Motif & 3,000 & 87.78 & 124.76 & 3 \\
        Spurious-Motif\_bias & 3,000 & 18.67 & 27.83 & 3 \\
        
        \midrule
        DD & 1,178 & 284.32 & 715.66 & 2 \\
        COX2 & 467 & 41.22 & 43.45& 2 \\
        COX2\_MD & 303 & 26.28 &	335.12 & 2\\
        BZR & 405 & 35.75 & 38.36 & 2 \\
        BZR\_MD & 306 & 21.30 &	225.06 & 2 \\
        \bottomrule
    \end{tabular}
    \label{tab:dataset}
\end{table*}

\section{Dataset}\label{sec:dataset}

\subsection{Dataset Description}

We conduct extensive experiments on a variety of datasets. The statistics of the datasets are summarized in Table \ref{tab:dataset}. Visualization of domain shifts of these dataset is in Figure~\ref{fig:shift}. The detailed descriptions of these dataset are provided as follows:

(1) For structure-based domain shifts:

\begin{itemize}

\item \textbf{PROTEINS.} The PROTEINS dataset~\citep{dobson2003distinguishing} comprises 1,178 protein graphs for graph classification. Each graph models a protein structure, where nodes represent amino acids with categorical attributes and edges indicate spatial or chemical proximity. Following~\cite{yin2022deal}, we partition the dataset into four subsets, denoted as P0, P1, P2, and P3, based on graph-level statistics, specifically node density and edge density.

\item \textbf{NCI1.} The NCI1 dataset~\cite{wale2008comparison} consists of 4,100 molecular graphs for binary graph classification. In each graph, nodes denote atoms and edges represent chemical bonds, capturing the molecular structure of compounds. Each molecule is labeled according to its ability to inhibit cancer cell growth. Following the PROTEINS dataset, we also partition the dataset into four subsets, denoted as N0, N1, N2, and N3, based on graph-level statistics, specifically node density and edge density.

\item \textbf{Mutagenicity.} 
The Mutagenicity dataset~\cite{kazius2005derivation} consists of 4,337 molecular graphs for binary classification. In each graph, nodes represent atoms and edges denote chemical bonds, characterizing the structural composition of chemical compounds. Each molecule is labeled according to its mutagenic effect. Following the PROTEINS dataset, we divide the dataset into four subsets, denoted as M0, M1, M2, and M3, based on graph-level statistics, specifically node density and edge density.

\item \textbf{FRANKENSTEIN.} The FRANKENSTEIN dataset~\cite{orsini2015graph} comprises 4,337 molecular graphs for graph classification. In each graph, nodes correspond to atoms and edges represent chemical bonds, encoding the structural properties of chemical compounds. Each molecule is labeled according to its biological activity. Following the PROTEINS dataset, we split the dataset into four subsets, denoted as F0, F1, F2, and F3, based on graph-level statistics, specifically node density and edge density.

\item \textbf{ogbg-molhiv.} The ogbg-molhiv dataset~\cite{hu2021ogblsc} consists of 41,127 molecular graphs for binary classification. In each graph, nodes denote atoms and edges represent chemical bonds, capturing the structural characteristics of chemical compounds. Each molecule is labeled according to whether it exhibits HIV inhibitory activity. Following the PROTEINS dataset, we split the dataset into four subsets, denoted as H0, H1, H2, and H3, based on graph-level statistics, specifically node density and edge density.

\item \textbf{MNIST.} The MNIST dataset~\cite{lecun2002gradient} contains 70,000 grayscale handwritten digit images (60,000 training and 10,000 test samples) spanning 10 classes. We transform each image into a graph, where nodes represent pixels or superpixels and edges encode spatial adjacency relationships. To introduce structural distribution shifts, we partition the resulting graphs into three subsets, denoted as S0, S1, and S2, based on their edge density.

\item \textbf{CIFAR-10.} The CIFAR-10 dataset~\cite{krizhevsky2009learning} consists of 60,000 color images (50,000 training and 10,000 test samples) across 10 object categories. We construct graph representations for each image, where nodes correspond to pixels or superpixels and edges encode spatial adjacency relationships. To induce structural distribution shifts, we partition the resulting graphs into three subsets, denoted as C0, C1, and C2, according to their edge density.

\end{itemize}

(2) For feature-based domain shifts:

\begin{itemize}
\item \textbf{DD.} The DD dataset consists of 1,113 protein graphs, each annotated with a binary label indicating whether the protein is an enzyme. In each graph, nodes represent amino acids and edges connect pairs of amino acids whose spatial distance is within 6~\AA{}. Compared to PROTEINS, graphs in DD are generally larger and denser, exhibiting higher structural complexity while sharing similar semantic labels.

\item \textbf{COX2.} The COX2 dataset~\cite{dobson2003distinguishing} comprises 467 molecular graphs for binary classification, while COX2\_MD contains 303 structurally modified variants derived from COX2. In both datasets, nodes denote atoms and edges represent chemical bonds, encoding molecular structures. COX2\_MD introduces systematic structural perturbations relative to COX2 to simulate distribution shifts while preserving the original label semantics.

\item \textbf{BZR.} The BZR dataset~\cite{dobson2003distinguishing} consists of 405 molecular graphs for binary classification, and BZR\_MD includes 306 structurally modified counterparts derived from BZR. In both cases, nodes correspond to atoms and edges denote chemical bonds. The BZR\_MD dataset applies controlled structural modifications to emulate domain shifts while maintaining consistent label semantics.

\end{itemize}

(3) For correlation shifts:

\begin{itemize}

\item \textbf{Spurious-Motif.} The Spurious-Motif dataset~\cite{wu2022discovering} is a synthetic benchmark designed to assess robustness under structural correlation shifts. Each graph is generated by attaching a label-determining motif (e.g., house or cycle) to a base graph (e.g., tree or ladder) that defines the environmental context. Node features are deliberately uninformative, requiring models to rely solely on topological cues. We construct two variants, each containing 3,000 graphs: a biased version (Spurious-Motif\_bias), where specific motifs are strongly correlated with particular base graphs to induce spurious structural shortcuts, and an unbiased version (Spurious-Motif), where motifs and base graphs are paired uniformly at random.

\end{itemize}

\subsection{Data Processing}
For datasets from TUDataset~\footnote{https://chrsmrrs.github.io/datasets/} (e.g., Mutagenicity and FRANKENSTEIN), we employ the standard preprocessing and normalization pipeline provided by PyTorch Geometric~\footnote{https://pyg.org/}. For MNIST and CIFAR10, we reconstruct graph structures via a KNN graph construction based on node spatial coordinates, where edges are re-generated with domain-specific $k$ values to systematically control graph density and induce structural distribution shifts~\cite{dwivedi2023benchmarking, sun2022modelnet40}. For datasets from the Open Graph Benchmark (OGB)~\footnote{https://ogb.stanford.edu/}, such as ogbg-molhiv, we follow the official OGB preprocessing and normalization protocols. For the synthetic Spurious-Motif dataset, we adhere to the generation and processing procedures described in~\cite{wu2022discovering}, converting the generated graphs into standard PyTorch Geometric data objects. Node features are initialized as constant one-dimensional vectors to eliminate attribute information, thereby enforcing reliance on structural topology. The dataset is organized into biased and unbiased variants, serving as source and target domains, respectively.

\section{Baselines}\label{sec:baselines}

\subsection{Baseline Description}

In this part, we introduce the details of the compared baselines as follows:

(1) \textbf{Graph distillation methods.} We compare \method{} with three source six graph domain adaptation methods: 

\begin{itemize}
\item \textbf{G-CRD}: G-CRD~\cite{joshi2022representation} presents a graph contrastive representation distillation method, where a student aligns with a teacher by maximizing cross-model consistency across augmented graph views, effectively transferring structural and semantic knowledge to improve generalization and efficiency.

\item \textbf{MuGSI}: MuGSI~\cite{yao2024mugsi} introduces a multi-granularity structural distillation framework for graph classification, where a student GNN aligns with a teacher across node-, subgraph-, and graph-level representations, facilitating rich structural knowledge transfer to improve predictive performance and efficiency.

\item \textbf{TGS}: TGS~\cite{wu2024teacher} proposes a teacher-free graph knowledge distillation framework with dual self-distillation, where a single GNN collaboratively distills knowledge across multiple views and hierarchical representations, enabling mutual refinement without an external teacher to improve robustness and classification performance.

\item \textbf{LAD-GNN}: LAD-GNN~\cite{hong2024label} proposes a label-attentive distillation framework for GNN-based graph classification, where the student emphasizes class-relevant structural patterns under teacher guidance, leveraging label-aware attention to selectively transfer discriminative knowledge and improve classification accuracy.

\item \textbf{AdaGMLP}: AdaGMLP~cite{lu2024adagmlp} proposes an AdaBoost-based GNN-to-MLP knowledge distillation framework, where multiple boosted MLP students are sequentially trained to mimic a GNN teacher, reweighting hard samples to progressively refine decision boundaries and achieve strong performance without graph-structured inference.

\item \textbf{ClusterGDD}: ClusterGDD~cite{lai2025simple} proposes a simple yet effective graph distillation method via clustering, where structural prototypes are constructed through graph clustering and used to guide student training, enabling compact knowledge transfer and improved graph classification performance with reduced model complexity.
\end{itemize}
(2) \textbf{Graph Domain Adaptation methods.} We compare \method{} with six graph domain adaptation methods: 

\begin{itemize}
        \item \textbf{SGDA}: SGDA \citep{qiao2023semi} studies semi-supervised domain adaptation for graph transfer learning, where a model leverages limited labeled target data alongside labeled source graphs to align cross-domain representations, mitigating distribution shift while preserving discriminative structural information for improved target performance.
        \item \textbf{StruRW}: StruRW \citep{liu2023structural} proposes structural re-weighting for graph domain adaptation, where graph components are adaptively reweighted according to their transferability, emphasizing domain-invariant structures while suppressing domain-specific noise to enhance cross-domain generalization.
        \item \textbf{A2GNN}: A2GNN \citep{liu2024rethinking} rethinks message propagation for unsupervised graph domain adaptation, identifying propagation-induced domain bias and introducing a refined propagation mechanism that decouples structural smoothing from domain-specific signals to learn more transferable graph representations.
        \item \textbf{PA-BOTH}: PA-BOTH \citep{liu2024pairwise} proposes pairwise alignment for graph domain adaptation, where cross-domain graph pairs are explicitly aligned at the representation level to reduce distribution discrepancy, thereby enhancing structural consistency and improving target-domain generalization. 

\item \textbf{GAA}: GAA~\citep{fang2025benefits} investigates attribute-driven graph domain adaptation, demonstrating that leveraging informative node attributes to guide cross-domain alignment improves representation transferability, mitigates structural bias, and enhances target-domain performance under distribution shift.
        \item \textbf{TDSS}: TDSS~\citep{chen2025smoothness} shows that enforcing representation smoothness is critical for unsupervised graph domain adaptation, introducing a simple regularization strategy that promotes local consistency across graphs to enhance transferability and improve target-domain performance.
\end{itemize}

\subsection{Implementation Details}

All baseline methods are reimplemented and evaluated on NVIDIA A100 GPUs to ensure a controlled and fair comparison. For consistency, all approaches adopt the same training protocol as \method{}, using the Adam optimizer with a learning rate of $1\times10^{-3}$ and weight decay of $1\times10^{-12}$. The hidden embedding dimension is set to 128, and each model comprises three GNN layers. Following \citep{yin2025coupling,wang2024degree}, all models are trained with labeled source-domain data and adapted to unlabeled target-domain data. We report classification accuracy on TUDataset benchmarks (e.g., NCI1), image dataset (e.g., MNIST), the Spurious-Motif dataset, and AUC on OGB benchmarks (e.g., ogbg-molhiv). All results are averaged over five independent runs.

\section{Complexity Analysis}~\label{sec:complexity}

In this section, we analyze the computational complexity of \method{}, which consists of bi-level structural basis distillation and structurally calibrated inference. Let $K$ denote the number of synthetic graphs, each with $N'$ nodes and $E'$ edges, $d$ the hidden dimension, and $L$ the number of GNN layers. During distillation, the inner loop trains the proxy GNN on the synthetic basis with complexity $\mathcal{O}(I_{\mathrm{inner}} \cdot K \cdot L \cdot (E' d + N' d^2))$. The outer loop evaluates the semantic loss on source data and enforces geometric and spectral alignment. Among these, the dominant overhead arises from triangle-based moment matching, which incurs $\mathcal{O}(K (N')^3)$, while the remaining terms scale linearly with $K$. During inference, training a fresh GNN on the compact basis requires $\mathcal{O}(I_{\mathrm{infer}} \cdot K \cdot L \cdot (E' d + N' d^2))$, followed by standard GNN inference on target graphs with complexity $\mathcal{O}(L (E_T d + N_T d^2))$. In practice, since $N'$ is typically small, the cubic term is negligible, and the overall complexity $\mathcal{O}\big(I_{\mathrm{inner}} K L (E' d + N' d^2) + B_S L (E_S d + N_S d^2)\big).$

\section{More experimental results}

\subsection{More performance comparison}\label{sec:model performance}

In this section, we provide additional experimental results comparing \method{} with all baseline methods across multiple datasets, as summarized in Tables~\ref{tab:mnist_cifar10_edge}–\ref{tab:molhiv_edge}. The results show that \method{} consistently outperforms the baselines across most transfer settings, further demonstrating its effectiveness and robustness.

\begin{figure*}[t]
    \centering
    % \vspace{-0.1cm}
    \begin{subfigure}[t]{0.32\textwidth}
        \centering
        \includegraphics[width=\linewidth]{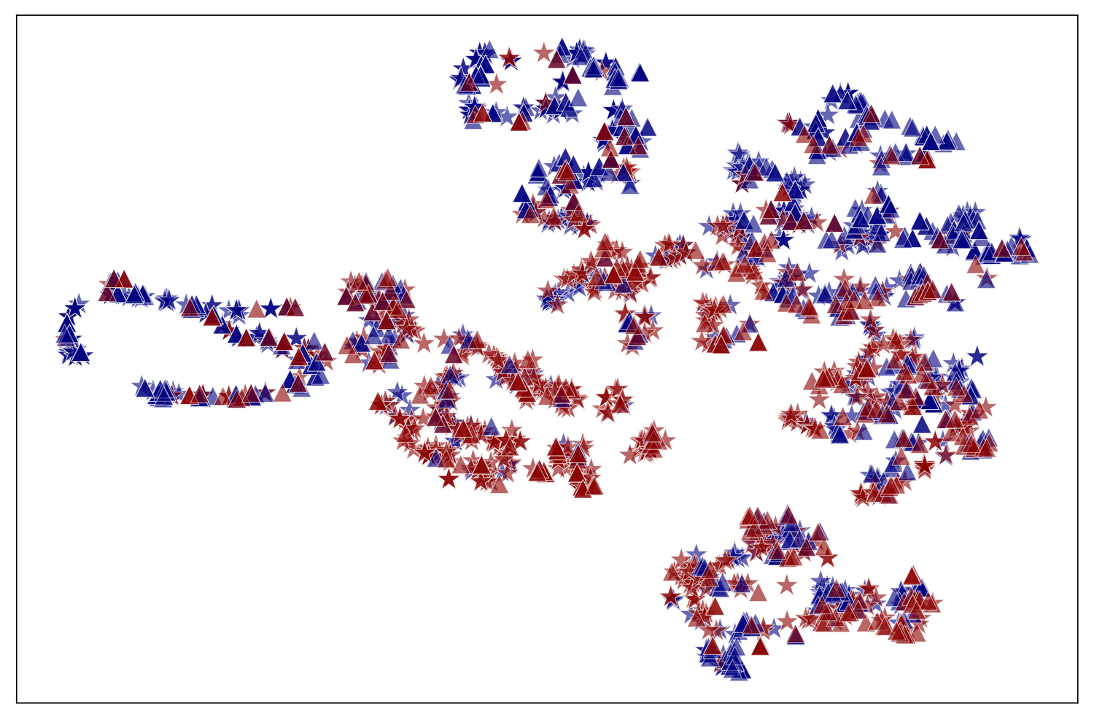}
        \caption{MuGSI}
    \end{subfigure}
    \hfill
    \begin{subfigure}[t]{0.32\textwidth}
        \centering
        \includegraphics[width=\linewidth]{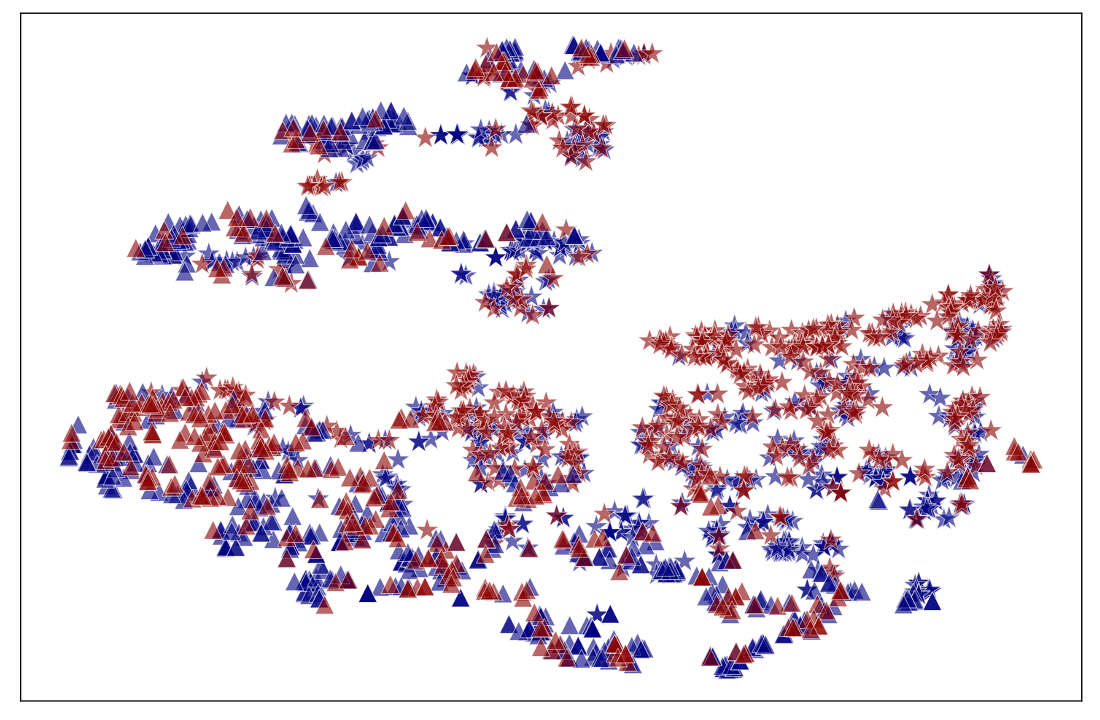}
        \caption{ClustGDD}
    \end{subfigure}
    \hfill
    \begin{subfigure}[t]{0.32\textwidth}
        \centering
        \includegraphics[width=\linewidth]{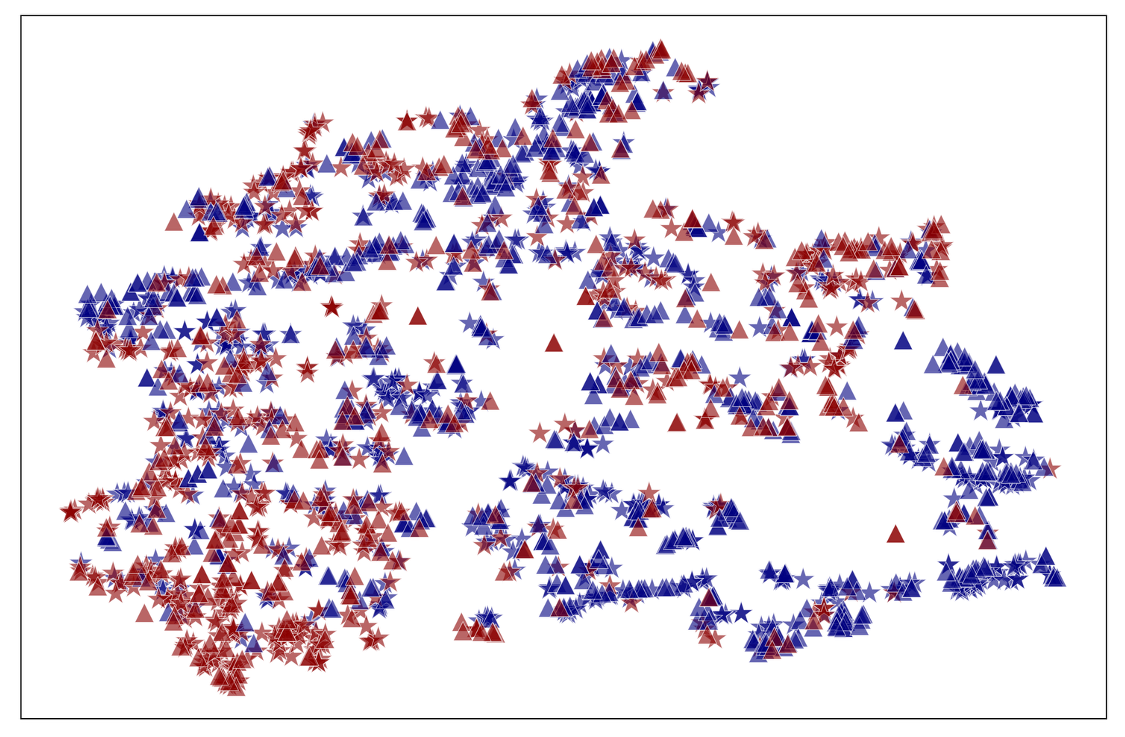}
        \caption{GAA}
    \end{subfigure}
    \caption{T-SNE visualizations of  additional baselines on the Mutagenicity dataset.}
    \label{fig:generalization_tsne_mutag_more}
\end{figure*}

\begin{figure*}[t]
\centering
\includegraphics[width=\textwidth]{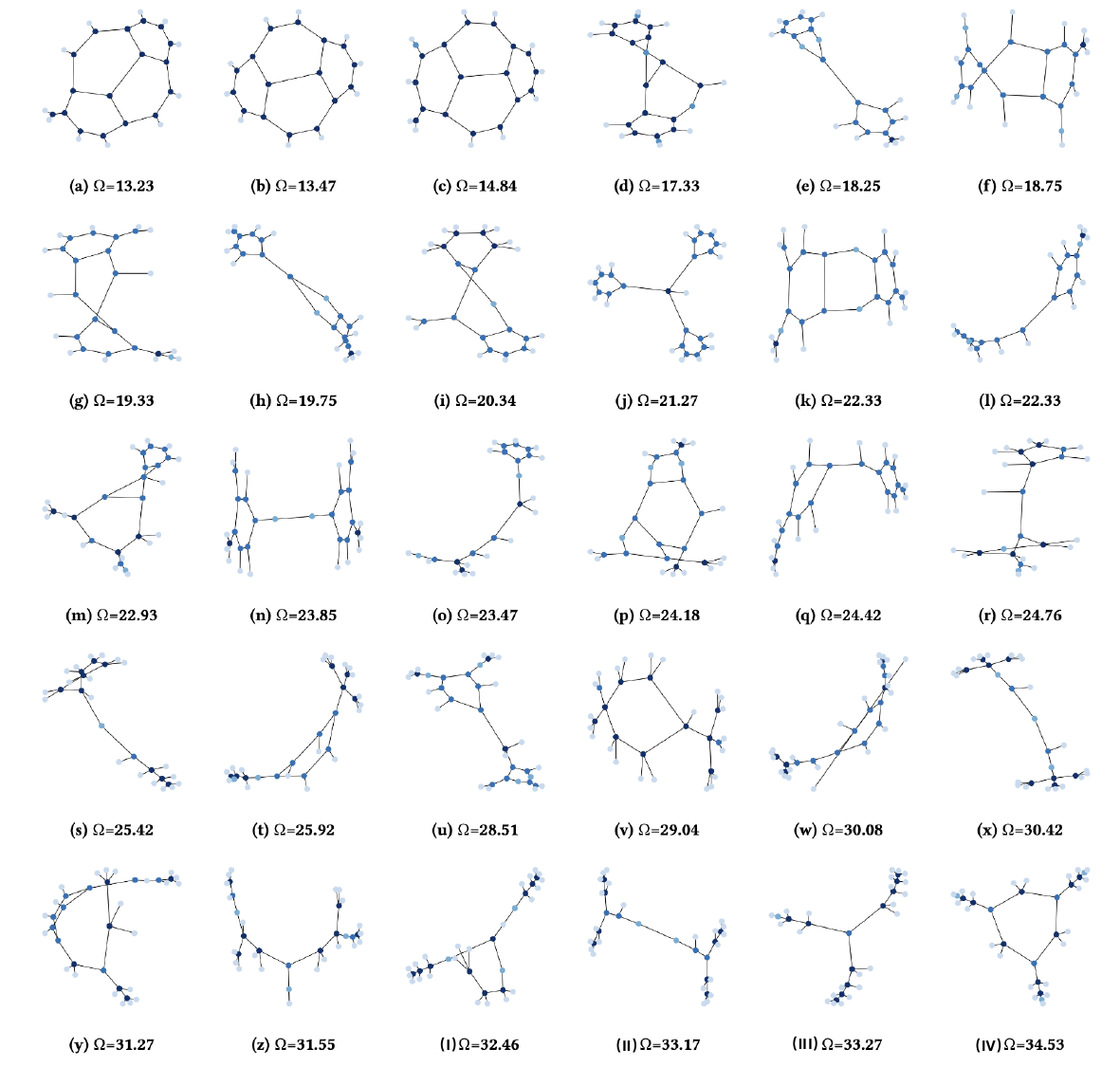}
\caption{Full visualizations of the distilled basis of \method{} with different Dirichlet energy $\Omega$.}
\label{fig:vis_graph}
\end{figure*}

\subsection{More Visualizations}\label{sec:vis}

In this part, we provide additional visualizations to further support the analyses presented in the main paper. Specifically, Figure~\ref{fig:generalization_tsne_mutag_more} reports the t-SNE visualizations of additional baselines mentioned in Section~\ref{sec:performance_main}, including MuGSI, ClustGDD, and GAA. In addition, Figure~\ref{fig:vis_graph} presents the complete set of synthesized Dual-Aligned Structural Basis graphs introduced in Section~\ref{sec:vis_graph}. These visualizations corroborate the findings in Section~\ref{sec:vis_graph}, confirming that the learned basis spans diverse spectral regimes and structural patterns.

\subsection{More Ablation study}\label{sec:ablation study}

To further validate the effectiveness of each component in \method{}, we conduct ablation studies on the PROTEINS, NCI1, FRANKENSTEIN, and ogbg-molhiv datasets. Specifically, we evaluate four variants of \method{}d{} by removing key modules, including \method{} w/o SE, \method{} w/o SP, \method{} w/o GE, and \method{} w/o TG. The results are reported in Tables~\ref{tab:ablation_proteins}, \ref{tab:ablation_nci1}, \ref{tab:ablation_frankenstein}, and \ref{tab:ablation_molhiv}. Overall, the observed trends are consistent with those in Section~\ref{sec:ablation}, further confirming the contribution of each component to the overall effectiveness of \method{}.

\subsection{More Sensitivity Analysis}\label{sec:sensitive analysis}

In this part, we further analyze the sensitivity of \method{} to the number of synthetic bases $K$ and the balance coefficients $(\lambda_1, \lambda_2)$ on the PROTEINS, NCI1, FRANKENSTEIN, and ogbg-molhiv datasets. The results, summarized in Figures~\ref{fig:hyper_coef} and \ref{fig:hyper_k}, show trends consistent with the observations in Section~\ref{sec:sensitivity}.

\begin{figure*}[t]

    \centering
    \captionsetup[subfigure]{font=scriptsize} 
    \begin{subfigure}[t]{0.24\textwidth}
        \centering
        \includegraphics[width=\linewidth]{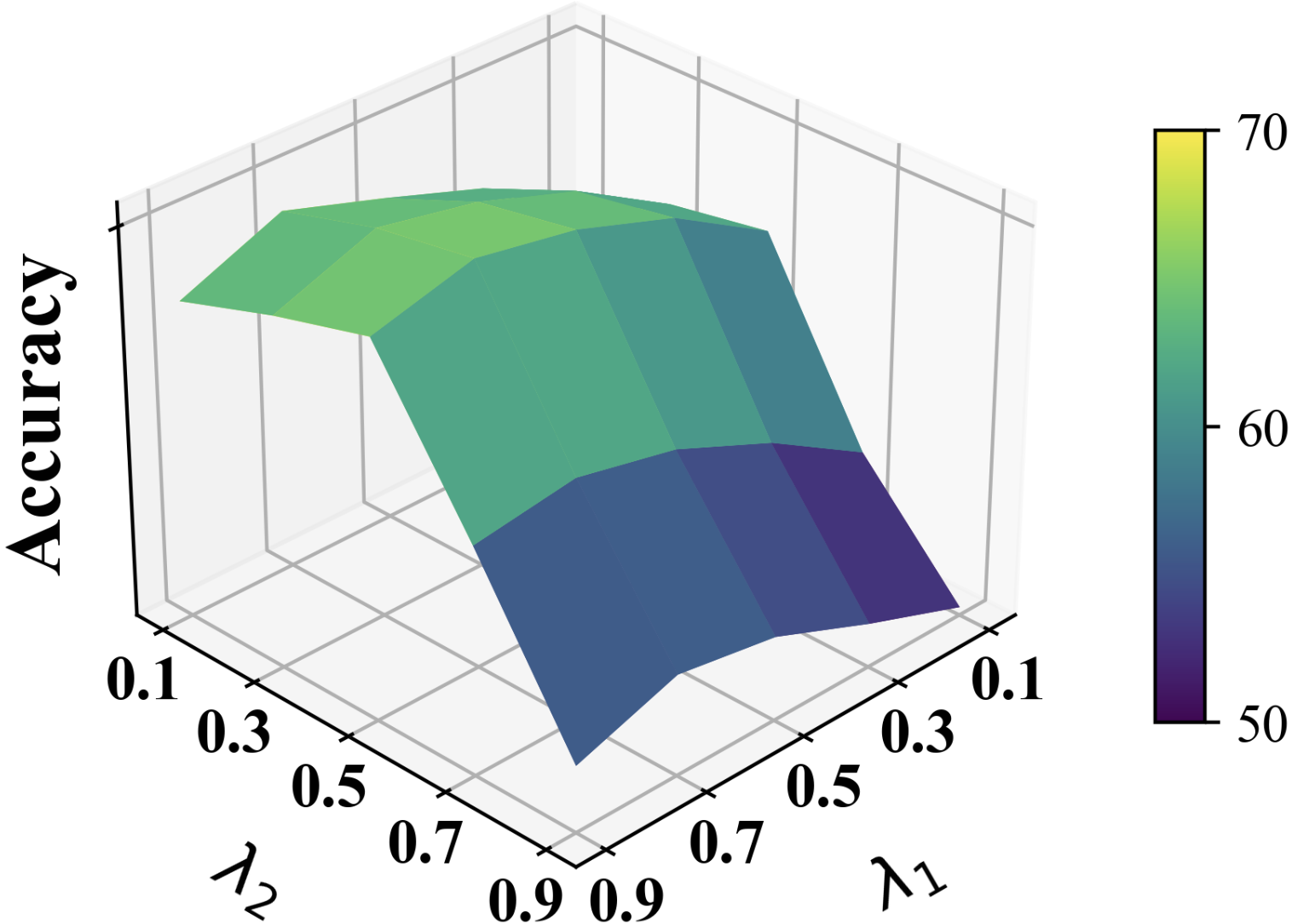}
        \caption{PROTEINS}
    \end{subfigure}
    \hfill
    \begin{subfigure}[t]{0.24\textwidth}
        \centering
        \includegraphics[width=\linewidth]{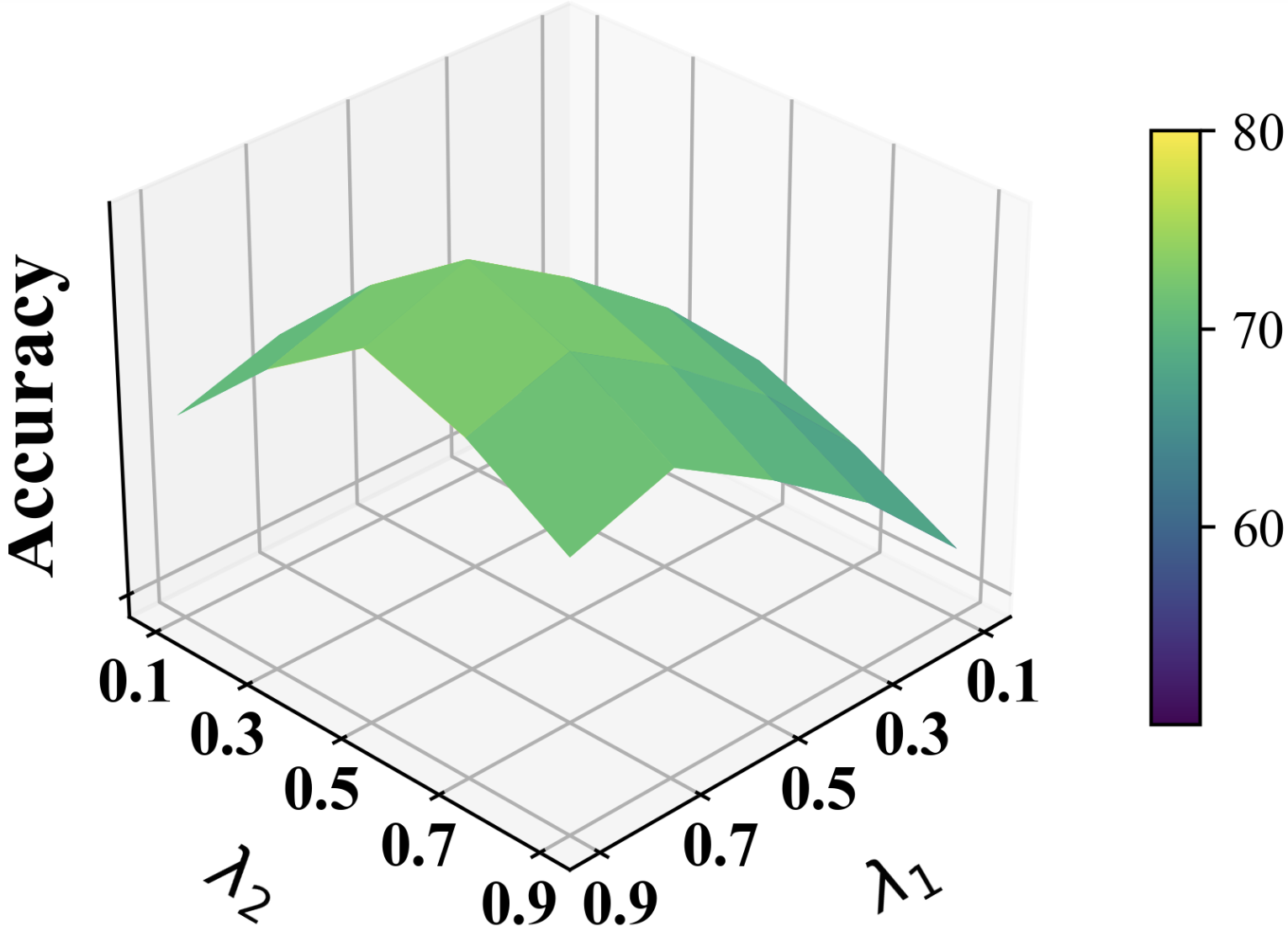}
        \caption{FRANKENSTEIN}
    \end{subfigure}
    \hfill
    \begin{subfigure}[t]{0.24\textwidth}
        \centering
        \includegraphics[width=\linewidth]{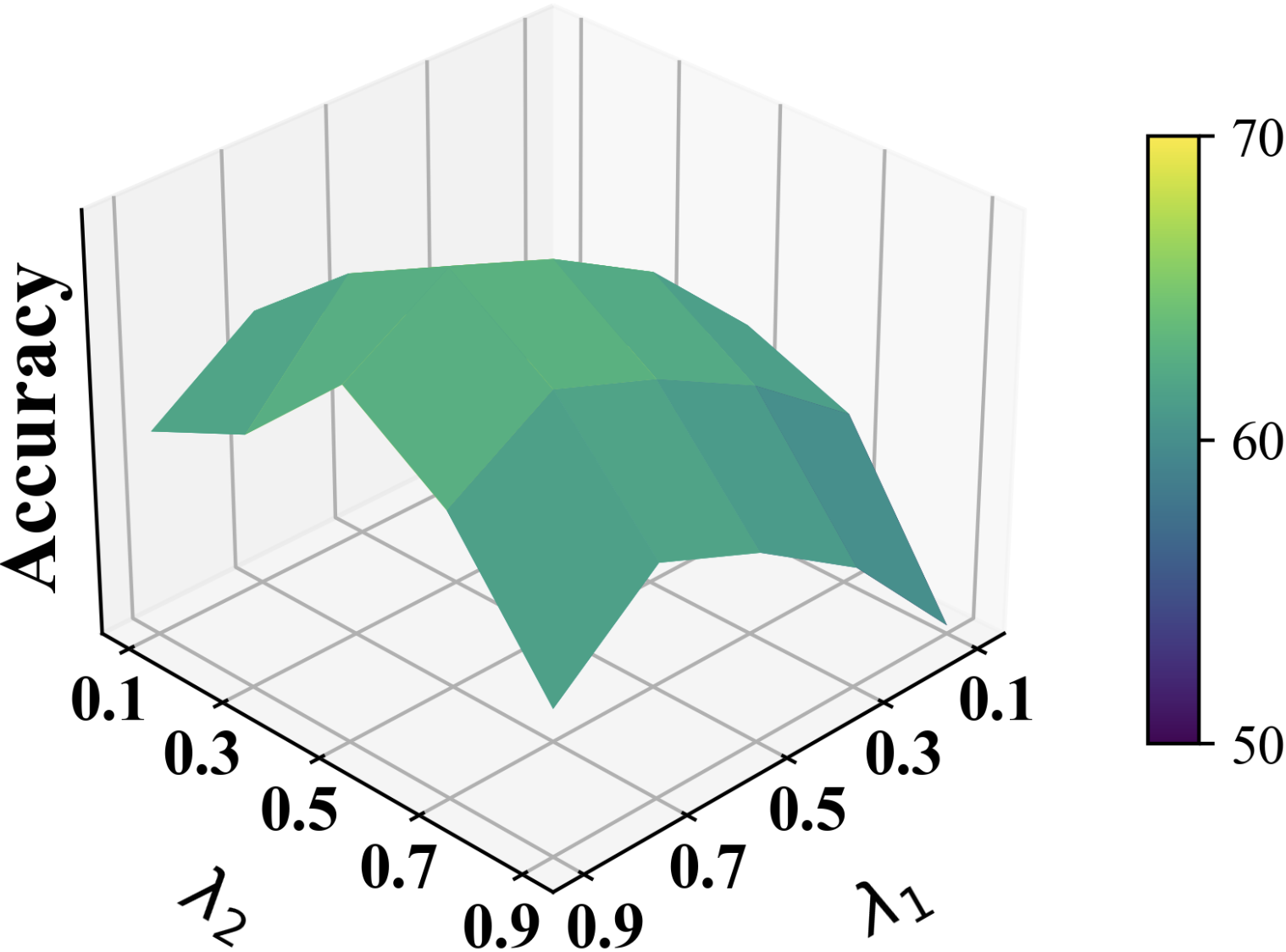}
        \caption{NCI1}
    \end{subfigure}
    \hfill
    \begin{subfigure}[t]{0.24\textwidth}
        \centering
        \includegraphics[width=\linewidth]{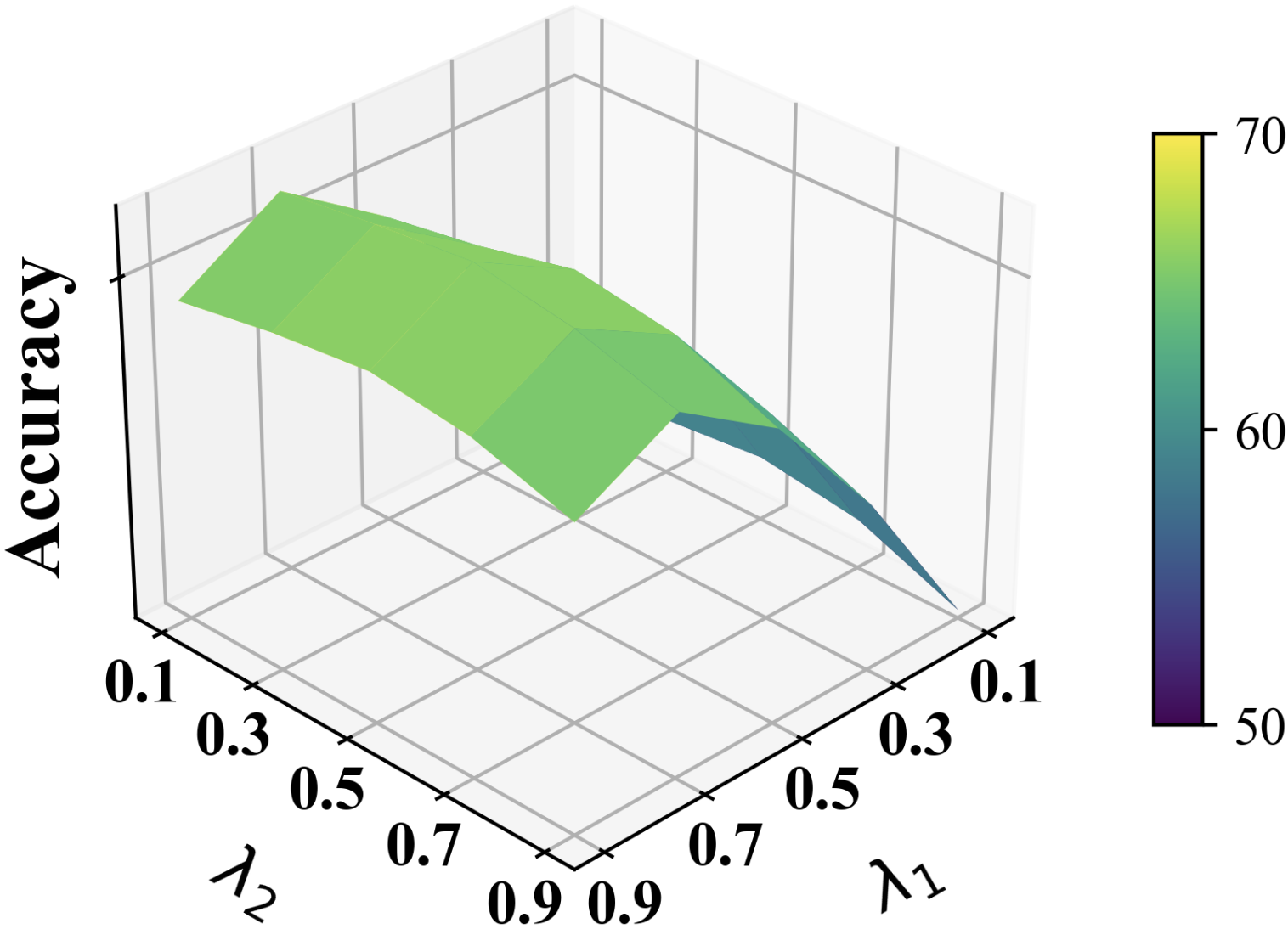}
        \caption{ogbg-molhiv}
    \end{subfigure}
    \caption{Hyperparameter sensitivity analysis of balance coefficient ($\lambda_\text{1}$, $\lambda_\text{2}$) on the PROTEINS, FRANKENSTEIN, NCI1 and ogbg-molhiv datasets.}
    \label{fig:hyper_coef}
\end{figure*}
\begin{figure*}[t]

    \centering
    \captionsetup[subfigure]{font=scriptsize} 
    \begin{subfigure}[t]{0.24\textwidth}
        \centering
        \includegraphics[width=\linewidth]{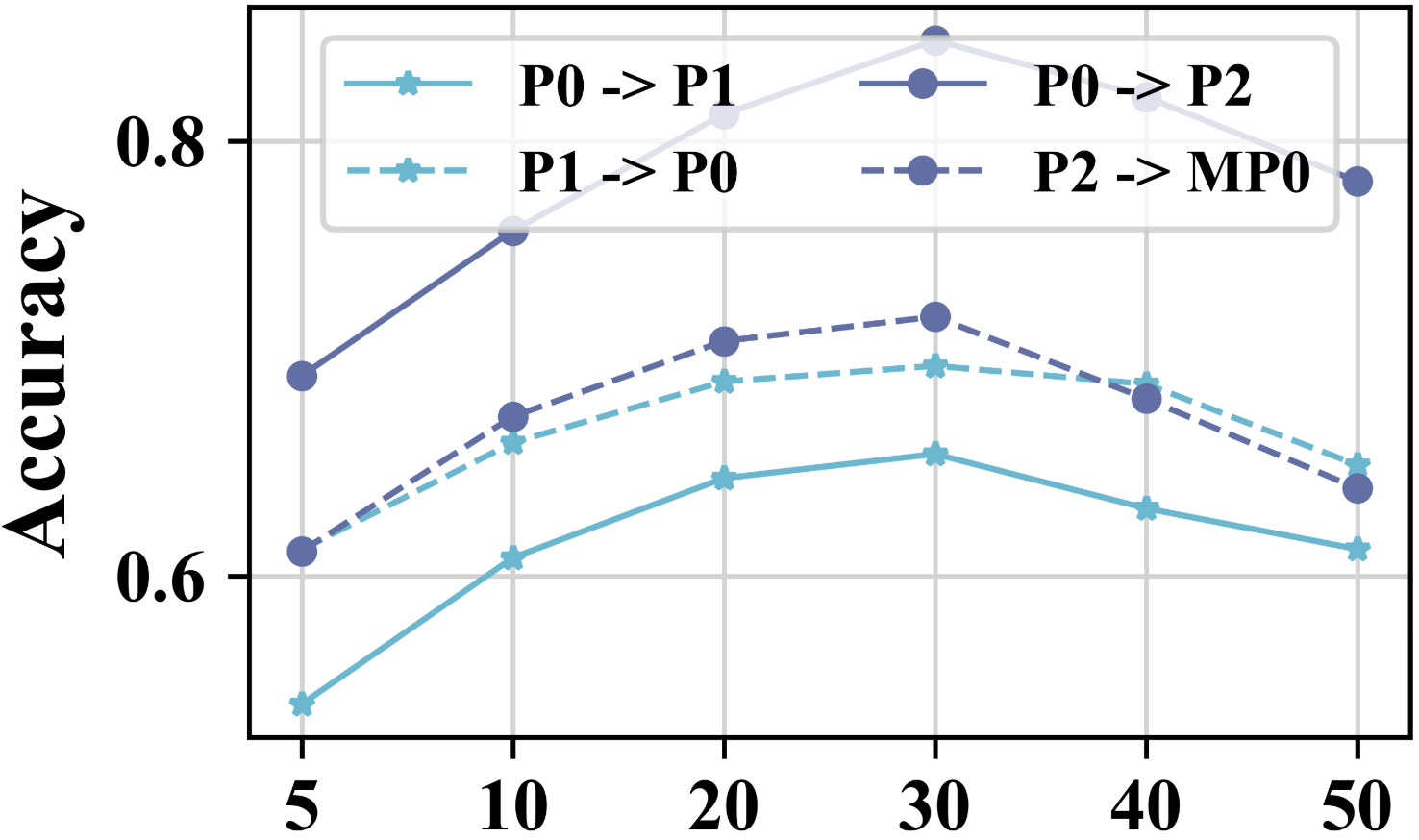}
        \caption{PROTEINS}
    \end{subfigure}
    \hfill
    \begin{subfigure}[t]{0.24\textwidth}
        \centering
        \includegraphics[width=\linewidth]{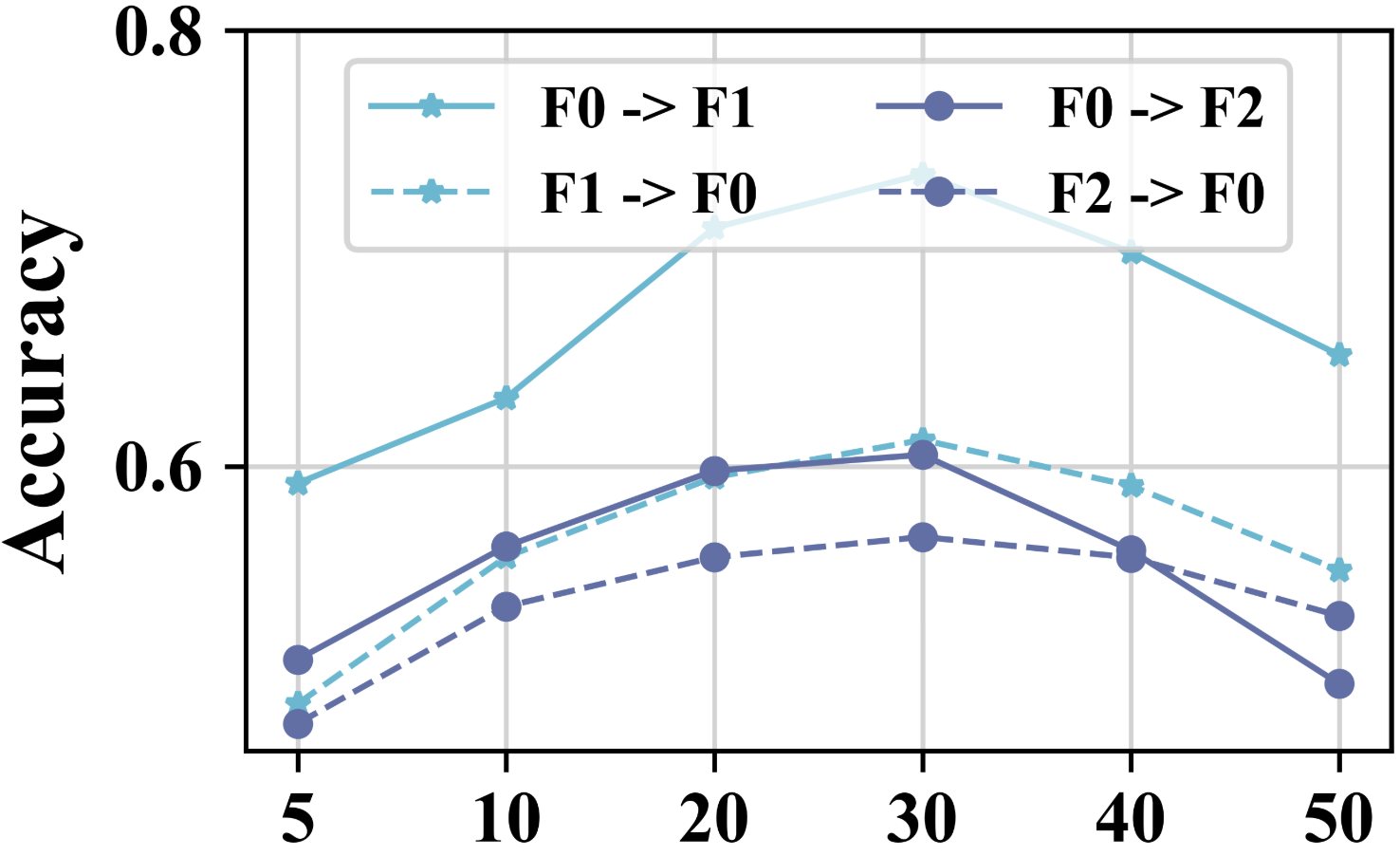}
        \caption{FRANKENSTEIN}
    \end{subfigure}
    \hfill
    \begin{subfigure}[t]{0.24\textwidth}
        \centering
        \includegraphics[width=\linewidth]{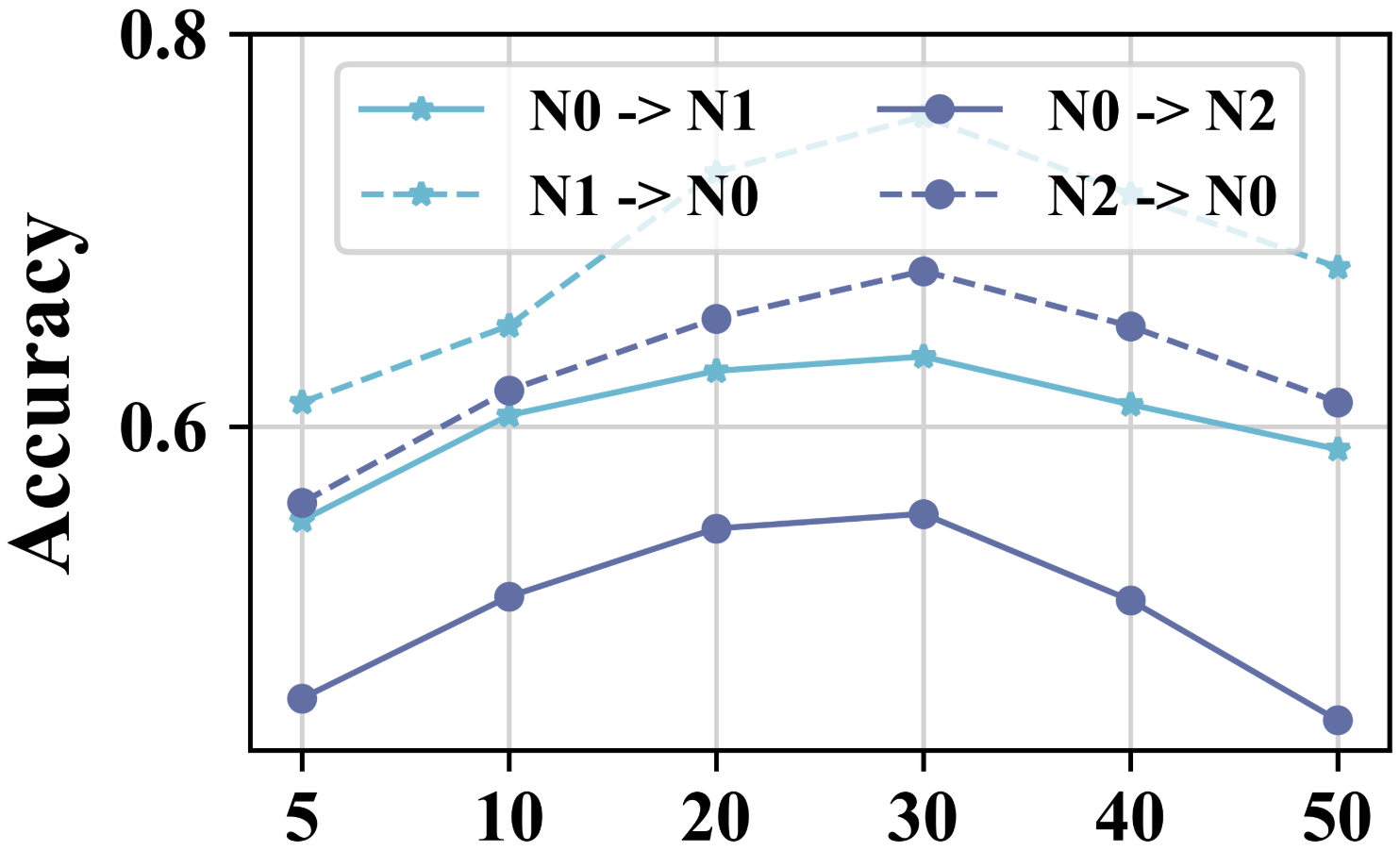}
        \caption{NCI1}
    \end{subfigure}
    \hfill
    \begin{subfigure}[t]{0.24\textwidth}
        \centering
        \includegraphics[width=\linewidth]{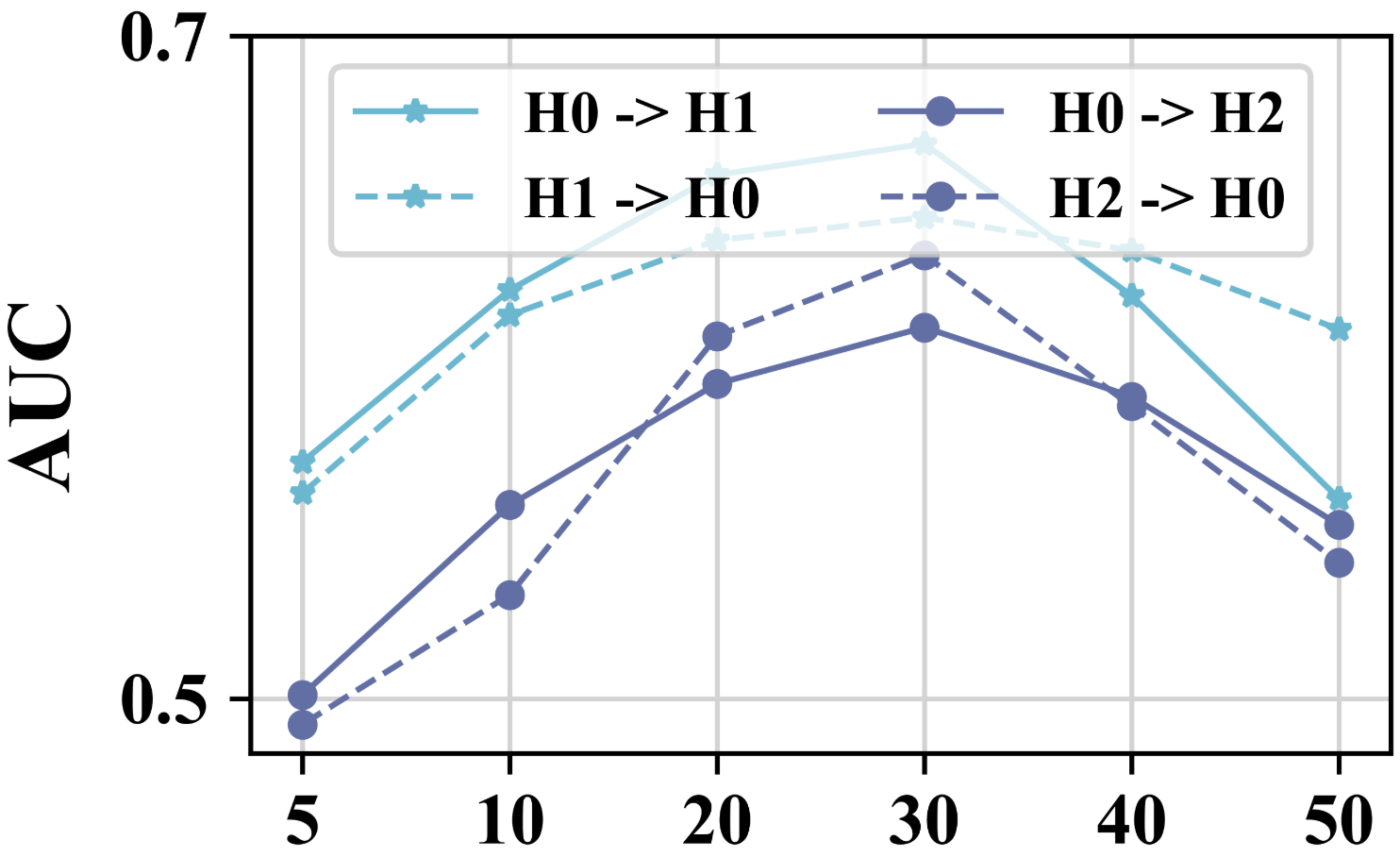}
        \caption{ogbg-molhiv}
    \end{subfigure}
    \caption{Hyperparameter sensitivity analysis of the number of synthetic bases
$K$ on the PROTEINS, FRANKENSTEIN, NCI1 and ogbg-molhiv datasets.}
    \label{fig:hyper_k}
\end{figure*}

\begin{table*}[ht]
\small
\centering
\caption{The results of ablation studies on the Mutagenicity dataset (source $\rightarrow$ target). \textbf{Bold} results indicate the best performance.}
\resizebox{1.0\textwidth}{!}{
% [inline block 0: 16 envs, 61568 chars in 16 pieces, piece 1 here, a bare % at each other -> data_tex | \begin{tabular}{l|c|c|c|c|c|c|c|c|c|c|c|c} \toprule...]

}
\label{tab:ablation_mutag}
\end{table*}

\begin{table*}[ht]
\small
\centering
\caption{The results of ablation studies on the PROTEINS dataset (source $\rightarrow$ target). \textbf{Bold} results indicate the best performance.}
\resizebox{1.0\textwidth}{!}{
%
}
\label{tab:ablation_proteins}
\end{table*}
\begin{table*}[ht]
\small
\centering
\caption{The results of ablation studies on the NCI1 dataset (source $\rightarrow$ target). \textbf{Bold} results indicate the best performance.}
\resizebox{1.0\textwidth}{!}{
%
}
\label{tab:ablation_nci1}
\end{table*}
\begin{table*}[ht]
\small
\centering
\caption{The results of ablation studies on the FRANKENSTEIN dataset under edge density domain shift (source $\rightarrow$ target). \textbf{Bold} results indicate the best performance.}
\resizebox{1.0\textwidth}{!}{
%
}
\label{tab:ablation_frankenstein}
\end{table*}
\begin{table*}[ht]
\small
\centering
\caption{The results of ablation studies on the ogbg-molhiv dataset under edge density domain shift (source $\rightarrow$ target). \textbf{Bold} results indicate the best performance.}
\resizebox{1.0\textwidth}{!}{
%
}
\label{tab:ablation_molhiv}
\end{table*}

\begin{table*}[t]
    \small
    \centering
    \caption{The image classification results (in \%) on the MNIST and CIFAR-10 datasets under edge density domain shift (source $\rightarrow$ target). S0, S1, S2 and C0, C1, C2 denote the sub-datasets partitioned with edge density for MNIST and CIFAR-10, respectively. \textbf{Bold} results indicate the best performance.}
    \resizebox{1.0\textwidth}{!}{
    %
}
    \label{tab:mnist_cifar10_edge}
    \end{table*}

\begin{table*}[h]
% \scriptsize
\centering
\caption{The graph classification results (in \%) on PROTEINS under node density domain shift (source$\rightarrow$target). P0, P1, P2, and P3 denote the sub-datasets partitioned with node density. \textbf{Bold} results indicate the best performance.}
\resizebox{1.0\textwidth}{!}{
%
}
\label{tab:proteins_node}
\end{table*}
\begin{table*}[h]
% \scriptsize
\centering
\caption{The graph classification results (in \%) on PROTEINS under edge density domain shift (source$\rightarrow$target). P0, P1, P2, and P3 denote the sub-datasets partitioned with edge density. \textbf{Bold} results indicate the best performance.}
\resizebox{1.0\textwidth}{!}{
%
}
\label{tab:proteins_idx}
\end{table*}
\begin{table*}[t]
\centering
\caption{The classification results (in \%) on the NCI1 dataset under node density domain shift (source $\rightarrow$ target). N0, N1, N2, and N3 denote the sub-datasets partitioned with node density. \textbf{Bold} results indicate the best performance.}
\resizebox{1.0\textwidth}{!}{
%
}
\label{tab:nci1_node}
\end{table*}
\begin{table*}[t]
\centering
\caption{The classification results (in \%) on the NCI1 dataset under edge density domain shift (source $\rightarrow$ target). N0, N1, N2, and N3 denote the sub-datasets partitioned with edge density. \textbf{Bold} results indicate the best performance.}
\resizebox{1.0\textwidth}{!}{
%
}
\label{tab:nci1_idx}
\end{table*}
\begin{table*}[t]
\centering
\caption{The classification results (in \%) on the Mutagenicity dataset under node density domain shift (source $\rightarrow$ target). M0, M1, M2, and M3 denote the sub-datasets partitioned with node density. \textbf{Bold} results indicate the best performance.}
\resizebox{1.0\textwidth}{!}{
%
}
\label{tab:mutag_node}
\end{table*}
\begin{table*}[t]
\centering
\caption{The classification results (in \%) on the Mutagenicity dataset under edge density domain shift (source $\rightarrow$ target). M0, M1, M2, and M3 denote the sub-datasets partitioned with edge density. \textbf{Bold} results indicate the best performance.}
\resizebox{1.0\textwidth}{!}{
%
}
\label{tab:mutag_edge}
\end{table*}
\begin{table*}[t]
\centering
\caption{The classification results (in \%) on the FRANKENSTEIN dataset under node density domain shift (source $\rightarrow$ target). F0, F1, F2, and F3 denote the sub-datasets partitioned with node density. \textbf{Bold} results indicate the best performance.}
\resizebox{1.0\textwidth}{!}{
%
}
\label{tab:frankenstein_node}
\end{table*}
\begin{table*}[t]
\centering
\caption{The classification results (in \%) on the FRANKENSTEIN dataset under edge density domain shift (source $\rightarrow$ target). F0, F1, F2, and F3 denote the sub-datasets partitioned with edge density. \textbf{Bold} results indicate the best performance.}
\resizebox{1.0\textwidth}{!}{
%
}
\label{tab:frankenstein_edge}
\end{table*}
\begin{table*}[t]
\centering
\caption{The classification results (AUC in \%) on the ogbg-molhiv dataset under node density domain shift (source $\rightarrow$ target). H0, H1, H2, and H3 denote the sub-datasets partitioned with node density. \textbf{Bold} results indicate the best performance.}
\resizebox{1.0\textwidth}{!}{
%
}
\label{tab:molhiv_node}
\end{table*}
\begin{table*}[t]
\centering
\caption{The classification results (AUC in \%) on the ogbg-molhiv dataset under edge density domain shift (source $\rightarrow$ target). H0, H1, H2, and H3 denote the sub-datasets partitioned with edge density. \textbf{Bold} results indicate the best performance.}
\resizebox{1.0\textwidth}{!}{
%
}
\label{tab:molhiv_edge}
\end{table*}
\end{document}